\long\def\comment#1{}
\newfont{\bbb}{msbm10 scaled 700}
\newfont{\bb}{msbm10 scaled 1100}
\newcommand{\EE}{\mathbb{E}}
\newcommand{\NN}{\mathbb{N}}
\newcommand{\RR}{\mathbb{R}}
\newcommand{\zerov}{{\bf 0}}
\newcommand{\Am}{{\bf A}}
\newcommand{\Nm}{{\bf N}}
\newcommand{\Um}{{\bf U}}
\newcommand{\Vm}{{\bf V}}
\newcommand{\Xm}{{\bf X}}
\newcommand{\Ym}{{\bf Y}}
\newcommand{\Zm}{{\bf Z}}
\newcommand{\Ac}{\mathcal{A}}
\newcommand{\Dc}{\mathcal{D}}
\newcommand{\Fc}{\mathcal{F}}
\newcommand{\Ic}{\mathcal{I}}
\newcommand{\Lc}{\mathcal{L}}
\newcommand{\Nc}{\mathcal{N}}
\newcommand{\Tc}{\mathcal{T}}
\newcommand{\Vc}{\mathcal{V}}
\newcommand{\Xc}{\mathcal{X}}
\newcommand{\Yc}{\mathcal{Y}}
\newcommand{\Zc}{\mathcal{Z}}
\newcommand{\as}{{\boldsymbol a}}
\newcommand{\xs}{{\boldsymbol x}}
\newcommand{\ys}{{\boldsymbol y}}
\newcommand{\zs}{{\boldsymbol z}}
\newcommand{\muv}{\hbox{\boldmath$\mu$}}
\newcommand{\Sigmam}{\hbox{\boldmath$\Sigma$}}
\renewcommand{\arg}{{\hbox{arg}}}
\newcommand{\mylabel}[2]{#2\def\@currentlabel{#2}\label{#1}}
\newcommand{\modelname}{\textsc{CentroBind }}
\newcommand{\modelnamex}{\textsc{CentroBind}}
\newcommand{\faname}{\textsc{FABind }}
\newcommand{\fanamex}{\textsc{FABind}}
\theoremstyle{plain}
\newtheorem{theorem}{Theorem}[section]
\newtheorem{proposition}[theorem]{Proposition}
\newtheorem{lemma}[theorem]{Lemma}
\theoremstyle{definition}
\newtheorem{definition}[theorem]{Definition}
\theoremstyle{remark}
\title{Anchors Aweigh! Sail for Optimal Unified Multi-Modal Representations}
\author{Minoh Jeong\thanks{equal contribution.}\\
University of Michigan\\
\texttt{minohj@umich.edu}
\And 
Zae Myung Kim$^{*}$\\
University of Minnesota\\
\texttt{kim01756@umn.edu} 
\And 
Min Namgung$^{*}$\\
University of Minnesota\\
\texttt{namgu007@umn.edu}
\And
Dongyeop Kang\\
University of Minnesota\\
\texttt{dongyeop@umn.edu}
\And 
Yao-Yi Chiang\\
University of Minnesota\\
\texttt{yaoyi@umn.edu}
\And
Alfred Hero\\
University of Michigan\\
\texttt{hero@eecs.umich.edu}
}
\begin{document}

\maketitle

\begin{abstract}
A unified representation space in multi-modal learning is essential for effectively integrating diverse data sources, such as text, images, and audio, to enhance efficiency and performance across various downstream tasks.
Recent binding methods, such as ImageBind~\citep{girdhar2023imagebind}, typically rely on a single, fixed anchor modality for aligning multi-modal data. We mathematically analyze these \textbf{fixed anchor binding methods} and uncover significant limitations: (1) over-reliance on the choice of the anchor modality, (2) inadequate capture of intra-modal information, and (3) failure to account for cross-modal correlation among non-anchored modalities. 
To address these issues, we propose the need for \textbf{adaptive anchor binding methods}, exemplified by our framework \modelnamex. 
The proposed method uses adaptively adjustable centroid-based anchors generated from all available modalities, leading to a balanced and rich representation space.
We theoretically demonstrate that our approach captures three critical properties of multi-modal learning---intra-modal learning, inter-modal learning, and multi-modal alignment---while constructing a unified representation that spans all modalities. Experiments on both synthetic and real-world datasets show that adaptive anchor methods such as \modelname consistently outperform fixed anchor binding methods, verifying our analysis.
\end{abstract}

\section{Introduction}
Multi-modal alignment is defined as identifying and exploiting relationships and correspondences between multiple modalities (e.g., text, image, audio) viewing common phenomena to establish meaningful connections between their representations~\citep{baltruvsaitis2018multimodal_survey}.
A common approach is learning a shared embedding space~\citep{Tu2022,girdhar2023imagebind,liang2024factorized,zhu2024languagebind}, which aims to project data from multiple modalities into a common embedding space by clustering similar items together for direct comparison and linkage. 
This approach leverages well-trained single-modal embeddings, aligning them with auxiliary objective functions like contrastive loss~\citep{oord2018representation} or triplet loss \citep{schroff2015facenet} to minimize distances between similar items and maximize distances between dissimilar ones across modalities.

Instead of training separate models for each modality, ImageBind \citep{girdhar2023imagebind} pairs \textbf{images} with other modalities and projects them into a common image embedding space. Similarly, \citep{zhu2024languagebind} shows that pairing \textbf{texts} with other modalities (LanguageBind) improves cross-modal retrieval performance when language serves as the anchor modality. This approach has inspired various ``-Bind'' methods tailored to align different modalities for specific domains, such as molecular modeling \citep{molbind}, medical imaging \citep{medbind}, brain signals \citep{neurobind}, and music selection for videos \citep{mvbind}. These models commonly use image or text as the \textbf{anchor embedding} due to the abundance of data, with other modalities projected into this anchor~representation.



We refer to these approaches as \textbf{Fixed Anchor Bind} (\fanamex) methods, where the primary anchor modality’s embedding space is held fixed while aligning others. Many “-Bind’’ methods maximize mutual information $I(\Zm_1;\Zm_i)$ between the anchor representation $\Zm_1$ and each non-anchor $\Zm_i$ for $i\in{2,\ldots,M}$. Despite their practical appeal for unified multimodal representations, we show, both theoretically and empirically, that they have important limitations.

\begin{figure*}[t]
 \centering
 \includegraphics[width=\textwidth,trim={3.5 2 2.5 0},clip]{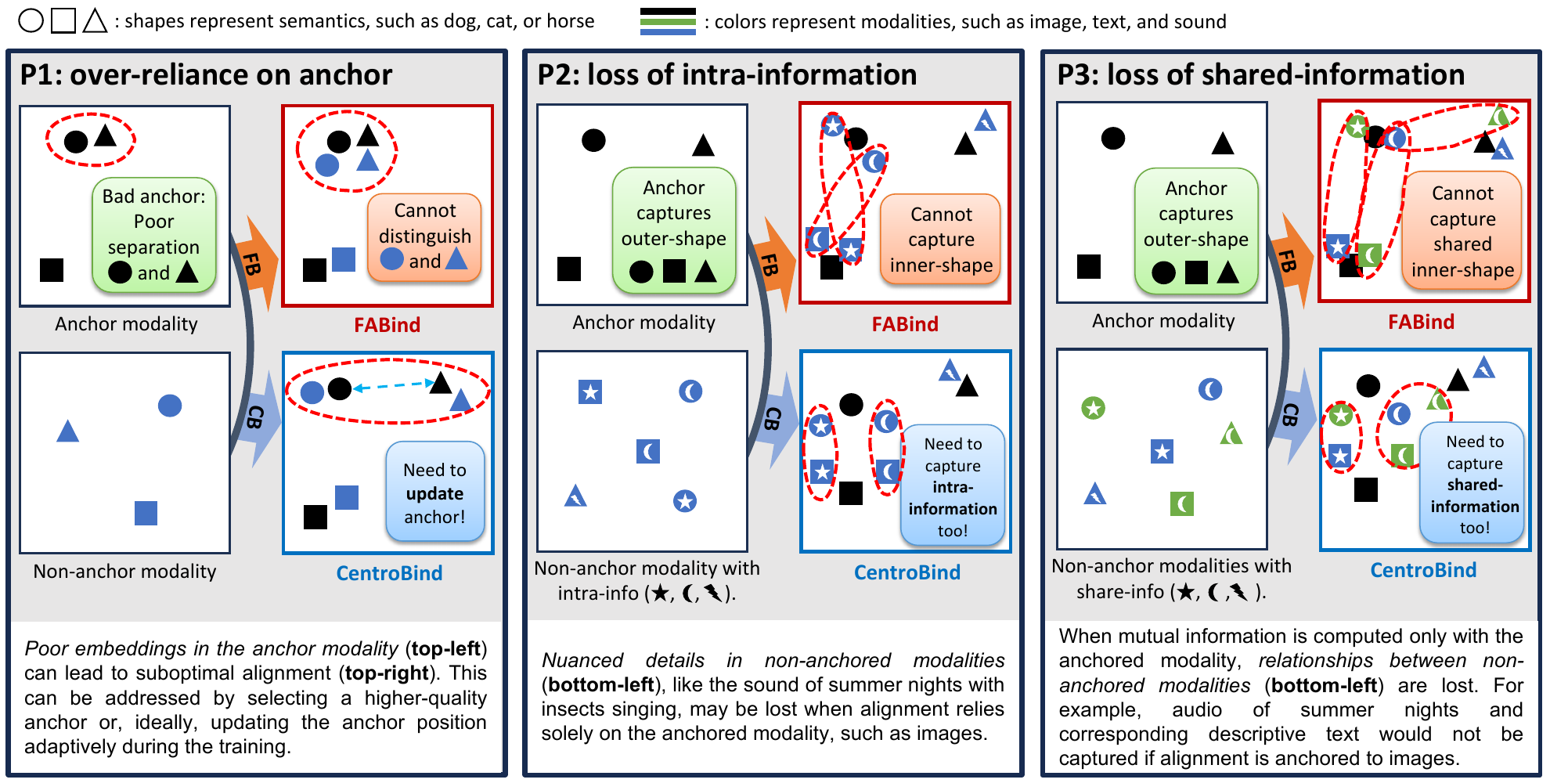}
 \label{subfig:CB}
 \vspace{-0.3cm}
\caption{Limitations of fixed-anchor binding (FABIND) and remedy via CentroBind (CB). Shapes denote semantic classes (e.g., dog, cat, horse) and colors denote modalities (image, text, audio). {\bf P1--Over-reliance on anchor:} a poor or poorly chosen anchor yields misalignment across modalities. {\bf P2--Loss of intra-modal information:} anchoring suppresses modality-specific cues present only in non-anchored views. {\bf P3--Loss of shared information among non-anchors:} optimizing only anchor~$\leftrightarrow$~others ignores correlations between non-anchored modalities. CB addresses all three by computing adaptive, batch-wise anchors (centroids) from the available modalities and aligning each modality to this shared anchor, preserving intra-information and non-anchor shared-information while improving overall alignment.}
\label{fig:p2p3p3}
\end{figure*}

\vspace{-0.1cm}





\paragraph{Issues with fixed anchor binding.}

\begin{wrapfigure}{r}{0.45\textwidth} 
    \centering
    \includegraphics[width=0.9\linewidth]{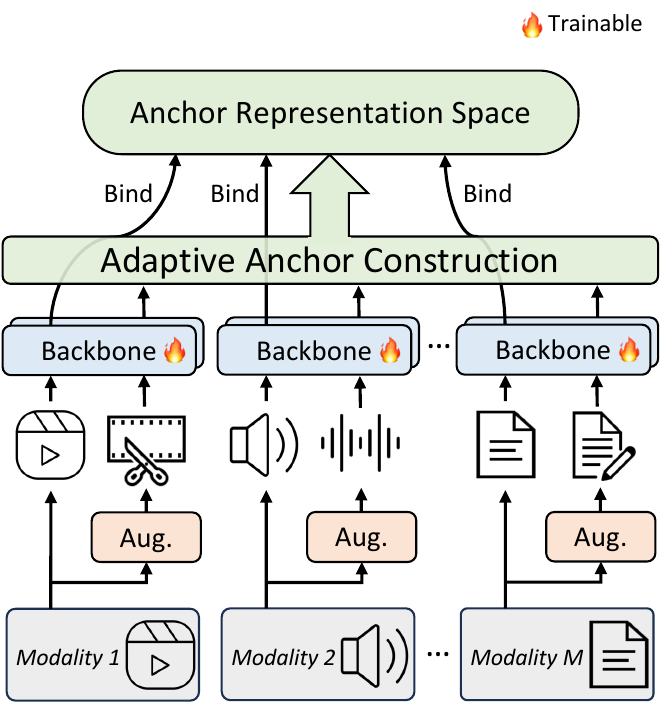}
    \caption{
        Graphical illustration of adaptive anchor alignment: Adaptive anchors are 
        dynamically generated from current embeddings using an aggregation method for every batch.
    }
    \label{fig:IBvsCB_bottom}
    \vspace{-1em}
\end{wrapfigure}

As illustrated in Figure~\ref{fig:p2p3p3}, we discover three limitation of \fanamex. {\bf P1--over-reliance on anchor:} the best anchor choice depends on embedding quality and task, and common defaults (image or text) can be suboptimal when no single modality dominates.
{\bf P2--loss of intra-information:} a fixed anchor can discard semantics captured primarily by other modalities (e.g., sound revealing mood, images conveying expression beyond text like “a dog barks loudly”).
{\bf P3--loss of shared-information:} optimizing only anchor–other pairs ignores complementarities among non-anchor modalities.
These issues motivate adaptive anchor alignment as an alternative to \fanamex; we formalize \fanamex’s deficiencies in Section~\ref{sec:problem}.

\paragraph{Adaptive anchor alignment.}

We propose an alternative to fixed anchor alignment by replacing fixed anchors with \textit{``adaptive''} anchors computed from paired multi-modal samples. 
Our proposed method, \modelnamex, an example of adaptive anchor bind methods, described in Section~\ref{sec:method}, removes the need for selecting a fixed anchor modality, instead calculates the centroid over the aggregate of all modality's representations and generates a multi-modal anchor representation, as shown in Figure~\ref{fig:IBvsCB_bottom}. We note that various aggregation methods can be used for these embeddings, such as computing a weighted average when the relative importance or quality of the backbone models is known, or even learning the weight coefficients dynamically during training. Some of these options are explored, with experimental results provided in Appendix~\ref{app:exp_synth}. 


Our theoretical analysis demonstrates that \modelname effectively addresses three critical components of multi-modal learning: 1) capturing intra-modal mutual information, 2) learning inter-modal mutual information, and 3) performing multi-modal alignment by maximizing embedding similarity measures. By incorporating these elements, \modelname outperforms other multi-modal alignment methods, as shown empirically on both synthetic and real-world datasets in various downstream tasks.
The proposed approach yields a unified representation space and, in the perspective of \citep{platonic}, who contends that multi-modal representations better align as they move toward a platonic representation that captures the semantic information of all modalities simultaneously.

\section{Problem Formulation}\label{sec:problem}
In this section, we describe general representation learning and binding problems in multi-modal learning. Then, we analyze fixed-anchor-bind (\fanamex) methods such as ImageBind~\citep{girdhar2023imagebind}, that bind multi-modal representations to a user-selected fixed modality.

\subsection{Representation learning framework}\label{subsec:rep_frame}
\paragraph{Notation.}
Boldface upper case letters (e.g., $\mathbf{X}$) denote random vectors, and a realization is denoted by the boldface lower case letters (e.g., $\xs$);
For $n\in\NN$, $[n]:=\{1,2,\cdots,n\}$;
$P_{\Xm}$ and $P_{\Xm,\Ym}$ denote the marginal and the joint distributions of $\Xm$ and $(\Xm,\Ym)$, respectively.

Given $M$ datasets $\Dc =\{\Dc_i\}_{i=1}^M$, let $\Dc_i = \{(\xs_{i,j},\ys_{i,j})\}_{j=1}^{N_i} $ be the dataset from the $i$-th modality, where $\xs_{i,j}\in\Xc_i$ and $\ys_{i,j}\in\Yc_i$ are respectively the $j$-th input instance (e.g., feature vector) and the corresponding label in $i$-th modality, and we assume that $(\xs_{i,j},\ys_{i,j}) \overset{\rm i.i.d.}{\sim} P_{\Xm_i, \Ym_i}$.\footnote{In self-supervised learning, labels might not exist, which corresponds to the case that $\ys_{i,j}$ are empty.}
We assume that $j$ indexes paired samples among modalities. For instance, $\xs_{1,c}$ and $\xs_{2,c}$ are features having similar semantic information (e.g., dog image and dog sound) in $\Dc_1$ and $\Dc_2$.
The goal of representation learning is to build $M$ encoders $f_i:\Xc_i \to \Zc_i$ for each modality, which maps the input instances $\xs_{i,j}$ to its embedding $\zs_{i,j} = f_i(\xs_{i,j})$, preserving as much information about $\xs_{i,j}$ as~possible.

For the uni-modal case ($M=1$), keeping maximum information about $\xs_{1,j}$ at its embedding $\zs_{1,j}$ is generally preferred based on the ``InfoMax'' principle~\citep{linsker88infomax}, under which the objective is to maximize mutual information $I(\Xm_i;f(\Xm_i))$ between $\Xm_i$ and $f(\Xm_i)$.
Throughout the paper, we call $I(\Xm_i;f(\Xm_i))$ {\em intra information} on $\Xm_i$.
For the multi-modal case ($M\geq2$), on top of the InfoMax principle, ``minimal sufficiency'' is proposed in~\citep{tian2020makes}, which suggests maximizing {\em shared information} $I(f_i(\Xm_i);f_l(\Xm_l))$ between $f_i(\Xm_i)$ and $f_l(\Xm_l)$, while minimizing the {\em unique information} $I(\Xm_i;f_i(\Xm_i)| \{\Xm_l\}_{l\neq i})$. Although minimal sufficiency often leads to an efficient encoder with better performance in numerous multi-modal downstream tasks, it is not always a good strategy as there exist exceptions where the unique information on an individual modality is crucial~\citep{liang2024factorized,wang2022rethinking}. In other words, the optimality of minimal sufficiency is task-dependent. 
To avoid task dependency, we do not consider minimal sufficiency; instead, we maximize intra and shared information without reducing unique information.
Next, we formalize the notion of sufficient embedding.

\begin{definition}[$\Zc_i$-Sufficient embedding of $\Xm_i$ for $\Xm_l$]\label{def:suff_rep}
For an embedding space $\Zc_i$, the embedding $f_i(\Xm_i)$ is $\Zc_i$-sufficient for $\Xm_l$ if and only if the embedding $f_i(\Xm_i)$ achieves the maximum mutual information between $f_i(\Xm_i)$ and $\Xm_i$. Specifically,
\begin{align}\label{eq:suff_rep}
	f_i \in \arg\max_{f:\Xc_i\to\Zc_i} I(f(\Xm_i); \Xm_l).
\end{align}
We call $f_i$ sufficient encoder of $\Xm_i$ for $\Xm_l$.
\end{definition}

We note that if $i=l$, the sufficient encoder provides embeddings with maximum intra information, and if $i\neq l$, it gives embeddings with maximum shared information between $i$-th and $l$-th modalities.\footnote{With $\Zc_i$ such that $\max_{f:\Xc_i\to\Zc_i} I(f(\Xm_i); \Xm_l) = I(\Xm_i;\Xm_l)$, Definition~\ref{def:suff_rep} says that $f_i(\xs_{i,j})$ is a sufficient statistic~\citep{Polyanskiy_Wu_2024} of $\xs_{i,j}$ for $\xs_{l,j}$ as the encoding entails no information loss.}

In the context of contrastive representation learning having the goal of attaining sufficient encoders in Definition~\ref{def:suff_rep}, InfoNCE loss $I_{\rm NCE}(X;Y)$ is often employed since it relates to mutual information. Specifically, InfoNCE provides a lower bound on mutual information, i.e., $I(\Xm;\Ym) \geq - I_{\rm NCE}(\Xm;\Ym)$~\citep{oord2018representation}, and thus minimizing InfoNCE leads to an increase in mutual information. InfoNCE loss between embeddings $\Um$ and $\Vm$ can be written as follows:
\begin{align}\label{eq:nce_loss}
    I_{\rm NCE}(\Um; \Vm|\tau)
    & = \EE_{P} \left[ - \log \frac{\exp(\Um^\top \Vm / \tau)}{\exp(\Um^\top \Vm / \tau) + \sum_{i=1}^N \exp(\Um^\top \Vm_i / \tau)} \right],
\end{align}
where the expectation is taken with respect to the distribution $P  = P_{\Um,\Vm}\prod_{i=1}^N P_{\Vm_i}$. Here, we say $(\Um,\Vm)$ is a positive pair if $(\Um,\Vm)\sim P_{\Um,\Vm}$ and $(\Um,\Vm)$ is a negative pair if $(\Um,\Vm_i)\sim P_\Um P_{\Vm_i}$. In~\eqref{eq:nce_loss}, $N\geq 1$ and $\tau>0$ are hyper-parameters, specifying the number of negative samples and the temperature parameter.
For simplicity, in this paper, we assume that embeddings are normalized~\citep{wang2020understanding} to unit vectors and are of the same dimensionality.
Then, the exponent $\Um^\top \Vm/\tau$ in~\eqref{eq:nce_loss} is proportional the cosine similarity score between $\Um$ and $\Vm$.

\subsection{Binding representation spaces}\label{subsec:binding}
In addition to the objective of capturing intra and shared information, multi-modal learning often takes into account multi-modal alignment~\citep{clip,duan2022multi}. Without multi-modal alignment, each modality can only access its own embedding structure depending on its encoder. 
For example, embeddings of cat and dog images, respectively, locate around $(1,0)$ and $(0,2)$ in $\RR^2$, whereas embeddings of cat and dog text can lie around $(0,2)$ and $(1,0)$. Such a misalignment can happen even for sufficient encoders (Definition~\ref{def:suff_rep}), since the mutual information is invariant to one-to-one mappings~\citep{Polyanskiy_Wu_2024}.

To align multi-modal embedding spaces, a unified representation space~\citep{clip,zhou2023clip} or multi-modal alignment~\citep{wang2023can,liang2024hemm} have been proposed for multi-modal representation learning, in which embeddings of multi-modal features having similar semantic should near each other in embedding space. 
Several \faname methods have been proposed (see Appendix~\ref{app:review} for a summary of \faname methods) that include ImageBind~\citep{girdhar2023imagebind}.
ImageBind sets the image modality as the fixed anchor modality, and then InfoNCE loss is minimized between the embeddings of the anchor modality and the other modalities. 
\faname (e.g., ImageBind) aims to find encoders $f_i^{\rm FB}$ for all modalities, except the anchor modality, such that for all $i\in\{2,\cdots,M\}$,
\begin{align}\label{eq:obj_IB}
	f^{\rm FB}_i \in \arg\min_{f_i:\Xc_i\to\Zc_i} I_{\rm NCE}( f_1(\Xm_1); f_i(\Xm_i)),
\end{align}
where $f_1$ is the encoder for the anchor modality (an image encoder in ImageBind). 
Note that \faname freezes $f_1$, initialized by an existing pretrained model, during the optimization.

\subsection{Analysis of \faname}\label{subsec:fix_anchor}
In this section, we characterize the theoretical limitations of \fanamex.
To this end, we rewrite~\eqref{eq:obj_IB}~as 
\begin{align}\label{eq:obj_IB_info}
	f^{\rm FB}_i \in \arg\max_{f_i:\Xc_i\to\Zc_i} I( f_1(\Xm_1); f_i(\Xm_i)),~\forall i\in\{2,\cdots,M\},
\end{align}
reflecting the fact that minimizing InfoNCE loss is equivalent to maximizing mutual information.\footnote{In contrast to~\eqref{eq:obj_IB}, $f_i^{\rm FB}$ in~\eqref{eq:obj_IB_info} might not be aligned with other modalities due to the one-to-one mapping invariant property of mutual information. However, we do not analyze the multi-modal alignment of \faname from~\eqref{eq:obj_IB_info}, but rather investigate the performance of encoders in terms of the sufficiency in Definition~\ref{def:suff_rep}.}
Let \faname encoders from~\eqref{eq:obj_IB_info} for each modality be defined as $\Fc^{\rm FB}=\{f_1,f_2^{\rm FB},\cdots, f_M^{\rm FB} \}$. The anchor encoder $f_1$ is fixed during the entire \faname procedure.
Moreover, we assume that $I(f_1(\Xm_1); f_i^{\rm FB}(\Xm_i)) = I(f_1(\Xm_1); \Xm_i) $ is the maximum value that can be achieved by~\eqref{eq:obj_IB_info} due to data processing inequality~\citep{Polyanskiy_Wu_2024}. 
We next demonstrate that the quality of anchor embedding $f_1(\Xm_1)$ significantly impacts the performance of $\Fc^{\rm FB}$ in terms of shared information.
The following propositions show the dependency of \faname on anchor embedding quality.

\begin{proposition}[\faname with sufficient anchor]\label{prop:optimality_IB}
Let $f_1^{\rm suf}(\Xm_1)$ be a sufficient embedding of the anchor $\Xm_1$, and let $\Xm_i,i\in[M],$ be a discrete random variable. 
Assume that $f_i^{\rm FB},i\in\{2,\cdots,M\}$ are obtained by~\eqref{eq:obj_IB_info} with a sufficient anchor encoder $f_1 = f_1^{\rm suf}$, i.e., $I(f_1^{\rm suf}(\Xm_1); f_i^{\rm FB}(\Xm_i)) = I(f_1^{\rm suf}(\Xm_1); \Xm_i)$.
Then, for all $i\in\{2,\cdots,M\}$,
\begin{align}\label{eq:MI_IB1}
	I(f_1^{\rm suf}(\Xm_1); f_i^{\rm FB}(\Xm_i)) = I(\Xm_1; \Xm_i).
\end{align}
\end{proposition}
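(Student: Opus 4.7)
The plan is to reduce the claim to two appeals: (i) the sufficiency property of the anchor encoder $f_1^{\rm suf}$, and (ii) the hypothesis on $f_i^{\rm FB}$ that is built into the statement. The whole proof is essentially a substitution, so the main conceptual work is in justifying step (i) cleanly from \Cref{def:suff_rep} and the footnote that follows it.

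First, I would unpack what it means for $f_1^{\rm suf}$ to be a ``sufficient anchor'' in the sense of \Cref{def:suff_rep}. Applied with the role $(i,l)=(1,i)$, this means $f_1^{\rm suf} \in \arg\max_{f:\Xc_1\to\Zc_1} I(f(\Xm_1);\Xm_i)$. Invoking the footnote after \Cref{def:suff_rep}, one chooses the embedding space $\Zc_1$ rich enough that the maximum over encoders equals $I(\Xm_1;\Xm_i)$; here is where the hypothesis that $\Xm_i$ is discrete enters, since in the discrete case one can take $\Zc_1$ to contain (a one-hot embedding of) the support of $\Xm_1$, making the identity map a feasible encoder and matching the data processing upper bound $I(f(\Xm_1);\Xm_i)\le I(\Xm_1;\Xm_i)$. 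Conclude
\begin{equation}\label{eq:plan-suff}
I\bigl(f_1^{\rm suf}(\Xm_1);\Xm_i\bigr) \;=\; I(\Xm_1;\Xm_i).
\end{equation}

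Second, I would use the assumption stated in the proposition: that the \faname optimization~\eqref{eq:obj_IB_info} attains its data-processing upper bound, i.e.
\begin{equation}\label{eq:plan-hyp}
I\bigl(f_1^{\rm suf}(\Xm_1);\,f_i^{\rm FB}(\Xm_i)\bigr) \;=\; I\bigl(f_1^{\rm suf}(\Xm_1);\Xm_i\bigr).
\end{equation}
Chaining \eqref{eq:plan-hyp} with \eqref{eq:plan-suff} yields \eqref{eq:MI_IB1}. The role of the discreteness assumption is isolated purely in step (i); everything after it is a one-line substitution.

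The only step that requires any care is \eqref{eq:plan-suff}, i.e., translating the variational definition of sufficiency into the equality $I(f_1^{\rm suf}(\Xm_1);\Xm_i)=I(\Xm_1;\Xm_i)$. This is the content of the footnote after \Cref{def:suff_rep}, and the discreteness assumption is precisely what lets us invoke it without topological subtleties. The rest of the argument does not involve any calculation — no InfoNCE manipulations, no encoder construction — so I do not anticipate further obstacles. I would flag, as a remark, that the proposition is best read as a consistency check: when the anchor is already optimal and \faname training reaches its MI ceiling, the learned non-anchor representations capture the full pairwise information $I(\Xm_1;\Xm_i)$, setting up the contrast with the deficient-anchor case treated next.
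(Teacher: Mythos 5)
Your proof is mechanically sound, but it diverges from the paper's at exactly the point you flag as the only delicate step: the meaning of ``sufficient embedding of the anchor $\Xm_1$.'' You read it as sufficiency \emph{for $\Xm_i$}, i.e., $f_1^{\rm suf}\in\arg\max_f I(f(\Xm_1);\Xm_i)$ for each target $i$, which makes \eqref{eq:MI_IB1} an immediate substitution but also makes the proposition nearly tautological: the transfer of information to each $\Xm_i$ is assumed rather than proved, and you implicitly require a single encoder to lie in $M-1$ different argmax sets simultaneously. The paper instead reads sufficiency in the InfoMax sense --- sufficiency of $\Xm_1$ for $\Xm_1$ itself (confirmed by the companion Proposition~\ref{prop:IB_sub_opt_f1}, which writes ``insufficient embedding of the anchor $\Xm_1$ \emph{for} $\Xm_1$'') --- so the hypothesis is only $I(f_1^{\rm suf}(\Xm_1);\Xm_1)=H(\Xm_1)$. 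The real content of the paper's proof is then the implication $I(f_1^{\rm suf}(\Xm_1);\Xm_i)=I(\Xm_1;\Xm_i)$, obtained by expanding $I(\Xm_1,f_1^{\rm suf}(\Xm_1);\Xm_i)$ with the chain rule in both orders, killing $I(f_1^{\rm suf}(\Xm_1);\Xm_i\mid\Xm_1)$ by determinism of the encoder, and killing $I(\Xm_1;\Xm_i\mid f_1^{\rm suf}(\Xm_1))$ because $H(\Xm_1\mid f_1^{\rm suf}(\Xm_1))=0$. That last step is where discreteness of $\Xm_1$ is actually used (finite entropy, non-negative conditional entropies), not, as in your sketch, to justify richness of the embedding space via a one-hot construction. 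The final substitution using the hypothesis on $f_i^{\rm FB}$ is identical in both arguments. To make your version match the stated hypotheses, replace your step (i) with the chain-rule lemma ``InfoMax-sufficiency for $\Xm_1$ implies $I(f_1^{\rm suf}(\Xm_1);\Xm_i)=I(\Xm_1;\Xm_i)$ for all $i$.''
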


\begin{proof}
The proof is in Appendix~\ref{subsec:proof_optimality_IB}. 
\end{proof}

\begin{proposition}[\faname with insufficient anchor]\label{prop:IB_sub_opt_f1}
Let $f_1^{\rm ins}(\Xm_1)$ be an insufficient embedding of the anchor $\Xm_1$ for $\Xm_1$, in the sense that there exists some $\epsilon>0$ such that $I(f_1^{\rm ins}(\Xm_1); \Xm_1) < \epsilon \leq \max_{f} I(f(\Xm_1); \Xm_1)$. 
Assume that $f_i^{\rm FB},i\in\{2,\cdots,M\}$ are obtained by~\eqref{eq:obj_IB_info} with $f_1 = f_1^{\rm ins}$, i.e., $I(f_1^{\rm ins}(\Xm_1); f_i^{\rm FB}(\Xm_i)) = I(f_1^{\rm ins}(\Xm_1); \Xm_i)$.
Then, 
\begin{align}\label{eq:MI_IB2}
	I(f_1^{\rm ins}(\Xm_1); f_i^{\rm FB}(\Xm_i)) < \epsilon,~\forall i\in\{2,\cdots,M\}.
\end{align}
\end{proposition}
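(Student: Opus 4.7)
The plan is to chain two applications of the data processing inequality (DPI), using the fact that $f_1^{\rm ins}$ acts as an information bottleneck on $\Xm_1$.

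First, I would observe that since $f_1^{\rm ins}$ is a deterministic function of $\Xm_1$, the random variable $f_1^{\rm ins}(\Xm_1)$ is conditionally independent of $\Xm_i$ given $\Xm_1$. Hence $\Xm_i \to \Xm_1 \to f_1^{\rm ins}(\Xm_1)$ forms a Markov chain. Applying the DPI in the ``reverse'' direction to this chain yields
\begin{equation*}
    I(f_1^{\rm ins}(\Xm_1);\Xm_i) \;\leq\; I(f_1^{\rm ins}(\Xm_1);\Xm_1).
\end{equation*}
This is the crux of the argument: no matter how $\Xm_i$ is processed by a downstream encoder $f_i^{\rm FB}$, the amount of information the anchor embedding can share with it cannot exceed the information the anchor embedding retains about its own source $\Xm_1$.

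Next, I would combine this with the hypothesis $I(f_1^{\rm ins}(\Xm_1);\Xm_1) < \epsilon$ and the assumption that the \faname objective attains its DPI upper bound, i.e., $I(f_1^{\rm ins}(\Xm_1); f_i^{\rm FB}(\Xm_i)) = I(f_1^{\rm ins}(\Xm_1); \Xm_i)$. Stringing the inequalities together gives, for every $i \in \{2,\ldots,M\}$,
\begin{equation*}
    I(f_1^{\rm ins}(\Xm_1); f_i^{\rm FB}(\Xm_i)) \;=\; I(f_1^{\rm ins}(\Xm_1); \Xm_i) \;\leq\; I(f_1^{\rm ins}(\Xm_1); \Xm_1) \;<\; \epsilon,
\end{equation*}
which is exactly the claim.

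The proof is essentially a two-line DPI argument, so I do not anticipate a serious technical obstacle. The one point that deserves a brief comment is justifying the ``reverse'' DPI step, i.e., $I(A;C) \leq I(A;B)$ for a Markov chain $A \to B \to C$; this follows immediately from the symmetry of Markov chains ($A \to B \to C$ iff $C \to B \to A$) and the standard DPI, and I would include a short sentence to make this explicit so that the reader sees why bounding by $I(f_1^{\rm ins}(\Xm_1);\Xm_1)$ rather than by $I(\Xm_1;\Xm_i)$ is valid. The discreteness assumption on $\Xm_i$ is not actually required for this result (it was needed for Proposition~\ref{prop:optimality_IB}), since we only use DPI and not any identifiability statement, so I would simply not invoke~it.
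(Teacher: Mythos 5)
Your proposal is correct and is essentially the paper's own argument: the paper's chain-rule expansion of $I(f_1^{\rm ins}(\Xm_1);\Xm_1,\Xm_i)$ together with the deterministic-function and non-negativity observations is precisely the standard derivation of the ``reverse'' DPI bound $I(f_1^{\rm ins}(\Xm_1);\Xm_i)\leq I(f_1^{\rm ins}(\Xm_1);\Xm_1)$ that you invoke directly, after which both proofs conclude identically by combining this with the hypothesis $I(f_1^{\rm ins}(\Xm_1);\Xm_1)<\epsilon$ and the optimality assumption on $f_i^{\rm FB}$. Your side remark that no discreteness assumption is needed is also consistent with the paper, which states that hypothesis only for Proposition~\ref{prop:optimality_IB}.
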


\begin{proof}
The proof is in Appendix~\ref{subsec:proof_IB_sub_opt_f1}.
\end{proof}

Proposition~\ref{prop:optimality_IB} shows that the \faname encoders $\Fc^{\rm FB}$ learned with a sufficient anchor embedding can achieve the maximum shared information between the anchor and the other modalities. 
However, it does not guarantee shared information between {\em non-anchored} modalities $I(f_i(\Xm_i);f_l(\Xm_l)), ~ i,l \neq 1$, which can also be seen from~\eqref{eq:obj_IB_info}.
Proposition~\ref{prop:IB_sub_opt_f1} establishes that an insufficient anchor may lead to a reduction of shared information between the anchor and the other modalities, implying that the performance of \faname may overly depend on the quality of the arbitrarily selected anchor. Next, we enumerate three limitations in \fanamex, as illustrated in Figure~\ref{fig:p2p3p3}:

\paragraph{P1--Over-reliance on a single anchor modality.} Achieving maximum shared information requires sufficient anchor representation (Proposition~\ref{prop:optimality_IB} and~\ref{prop:IB_sub_opt_f1}), which depends on having both an informative modality and a sufficient encoder. Without these conditions, \faname may not effectively capture shared information. 

\paragraph{P2--Failure to capture intra information.}
Even with sufficient anchor representation, \faname may not provide encoders with maximum intra information. This is because its objective function~\eqref{eq:obj_IB_info} does not take into account the mutual information between a sample from a modality and its augmentation, i.e., $I(f_i(\Xm_i);\Xm_i)$.

\paragraph{P3--Absence of shared information among non-anchored modalities.} The objective function of \faname \eqref{eq:obj_IB_info} focuses solely on learning shared information between anchor and non-anchored modalities, while disregarding shared information \textit{among non-anchored modalities}. This implies that \faname may not capture shared information among them. This limitation could render \faname less effective when crucial shared information exists among non-anchored modalities.

We identify three key limitations (\textbf{P1}, \textbf{P2}, and \textbf{P3}) of \fanamex, supported by empirical evidence presented in Section~\ref{subsec:exp_synth}. In addition to these constraints, the representations generated by \faname may not approximate an ideal standard, such as the ``Platonic representation'' described in~\citep{platonic}. Achieving an optimal multi-modal representation necessitates the comprehensive integration of all modalities to fully capture their information. However, \faname falls short of this objective. To address these shortcomings, we next introduce an alternative multi-modal representation that fully leverages sample-level information across all modalities.

\section{Alignment using adaptive anchor}\label{sec:method}

To address the limitations of fixed modality anchoring, we propose the concept of \textbf{adaptive anchor alignment}. We aim for a unified representation that (1) remains unbiased toward any single modality and (2) achieves high similarity alignment across \textit{all} modalities. In this paper, we introduce a centroid-based anchor representation that naturally fulfills these two requirements by acting as a geometric center. Specifically, we train the encoders by minimizing the InfoNCE loss between each modality and the centroid (similar to \fanamex). Although other adaptive anchor methods---such as median or weighted average---are viable alternatives (see Appendix~\ref{app:exp_synth}), we focus on the centroid for its simplicity and strong theoretical grounding in aligning multi-modal representations. Next, we formally define \modelname and show how it maximizes both intra-modal and cross-modal information within a unified embedding space.



\subsection{\modelname}
Consider $M$ modalities with corresponding encoders $\{f_i\}_{i=1}^M$.
The \modelname algorithm is presented in Algorithm~\ref{alg:centro} in Appendix~\ref{app:exp}, and a graphical illustration is given in Figure~\ref{fig:IBvsCB_bottom}. In the following, we describe each step of~\modelnamex.

\paragraph{Initial encoders.} 
We initialize $M$ encoders $f_i: \Xc_i \to \Zc,~\forall i \in [M]$ for the $M$ modalities. These encoders can either be pretrained models (i.e., backbones) or parameterized models with random weights. The primary constraint for these initial encoders is that their output space must be the modality-independent embedding space $\Zc$. 
When using pretrained encoders that produce embeddings in different output spaces, these are projected onto the embedding space $\Zc$, ensuring consistency of output space across modalities.

\paragraph{Anchor embedding.}

Recall that $\xs_{i,j} \in \Dc_i$ denotes the $j$-th feature in the $i$-th modality, where $j$ indexes positive pairs of features (e.g., different views of the same object). 
In each training iteration of \modelnamex, we need to compute an anchor embedding $\as_{j}$ for the $j$-th multi-modal positive features $\{\xs_{i,j}\}_{i=1}^M$. This anchor $\as_{j}$ serves as a desirable aligned embedding for these features. The anchor $\as_{j}$ is calculated as follows: 
\begin{align}\label{eq:anchor} 
    \as_{j} 
    & = \text{mean} \left( \{ f_i(\xs_{i,j}^\prime) \}_{i} \right), 
\end{align} where $\text{mean}(\cdot)$ denotes the mean operator that computes the average of its input, and $\xs_{i,j}^\prime$ represents an augmented version of $\xs_{i,j}$.
If $\{\xs_{i,j}\}_{i=1}^M$ are available in multi-modal datasets, the anchor is given by $\as_{j} = \frac{1}{M} \sum_{i=1}^M f_i(\xs_{i,j}^\prime)$. If only $m < M$ positive pairs are present among $M$ modalities, the anchor is given by $\as_{j} = \frac{1}{m} \sum_{i \in \Ic_j} f_i(\xs_{i,j}^\prime)$, where $\Ic_j$ is the set of indices of modalities having the $m$ available features.

\paragraph{Binding encoders to the anchor.}

Once anchor embeddings $\{\as_{j}\}_{j}$ are derived from a batch $B = \{\xs_{i,j}\}_{i,j}$, \modelname aligns each modality-specific encoder embedding with the anchor embedding by minimizing InfoNCE loss. Specifically, let $\Am = \text{mean}(\{f_i(\Xm_i)\}_i)$ represent the anchor embedding variable. Then, \modelname aims to minimize InfoNCE loss $I_{\rm NCE}(\Am; f_i(\Xm_i))$ across all modalities $i \in [M]$. A detailed expression for this loss is provided in~\eqref{eq:CB_obj}.

\modelname optimizes the following symmetrized loss function: 
\begin{align}\label{eq:CB_obj} 
    \Lc_{\rm CB}(f_i|\tau) &= I_{\rm NCE}(\Am; f_i(\Xm_i) | \tau) + I_{\rm NCE}(f_i(\Xm_i); \Am | \tau), 
\end{align} 
where $\Lc_{\rm CB}(f_i|\tau)$ denotes the loss function for the $i$-th modality. In particular, with a batch data $B = \{\xs_{i,j} : i\in[M], j\in \Ic_B \}$, the loss can be computed as
\begin{subequations}\label{eq:infoNCE_anchor}
\begin{align}
    I_{\rm NCE}(\Am; f_i(\Xm_i) | \tau)
    & =  \frac{-1}{|\Ic_B|} \sum_{k=1}^{|\Ic_B|} \log \frac{\exp( \as_k^\top  f_i(\xs_{i,k}) /\tau ) }{\sum_{j\in \Ic_B}\exp( \as_k^\top  f_i(\xs_{i,j})/\tau )  }\label{eq:infoNCE_anchor_a}  \\
    I_{\rm NCE}(f_i(\Xm_i);\Am | \tau) 
    & =   \frac{-1}{|\Ic_B|} \sum_{k=1}^{|\Ic_B|} \log \frac{\exp( \as_k^\top  f_i(\xs_{i,k}) /\tau ) }{\sum_{j\in \Ic_B}\exp(  f_i^\top(\xs_{i,k}) \as_j /\tau )  } .
\end{align}
\end{subequations}

\subsection{Theoretical analysis of \modelname}\label{subsec:analysis_centro}

We start by providing a lower bound on the objective function of \modelname~$\Lc_{\rm CB}(f_i|\tau)$~\eqref{eq:CB_obj} in Theorem~\ref{thm:LB_centro}, followed by an analysis of the minimizer of $\Lc_{\rm CB}(f_i|\tau)$.  

\begin{theorem}\label{thm:LB_centro}
Consider $B = \{\xs_{i,j}: i\in[M], j\in \Ic_B \}$ with a set of indices $\Ic_B$, where $\xs_{i,j}$ is the $j$-th sample of $i$-th modality. Then, for any encoders $\{f_i\}_i$ and for any $\tau>0$,~\eqref{eq:infoNCE_anchor_a} is bounded as
\begin{align}\label{eq:lb_centro}
& |\Ic_B| I_{\rm NCE}\left(\Am;f_i(\Xm_i) \;\middle |\; {\tau} \right)  \geq \sum_{l=1}^M  I_{\rm NCE}\left(f_l(\Xm_l^\prime); f_i(\Xm_i) \;\middle |\; \frac{{\tau}M}{|\Ic_B|} \right) -\sum_{k=1}^{|\Ic_B|} \log C_{\Fc,k,i},
\end{align}
where $C_{\Fc,k,i} = \frac{ (c_{\Fc,k,i}^{\min} + c_{\Fc,k,i}^{\max})^2 }{4c_{\Fc,k,i}^{\min}c_{\Fc,k,i}^{\max} }$ with $g(l,j|k,i):=\exp\left( \frac{ |\Ic_B| f_l^\top(\xs_{l,k}^\prime) f_i(\xs_{i,j}) }{\tau M} \right)$,
\begin{align}\label{eq:c_maxmin}
    c_{\Fc,k,i}^{\min} = \min_{l\in[M],j\in\Ic_B} g(l,j|k,i), ~\text{ and }~ 
    &c_{\Fc,k,i}^{\max} = \max_{l\in[M],j\in\Ic_B} g(l,j|k,i).
\end{align}

\end{theorem}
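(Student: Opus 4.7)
The plan is to expand both sides of~\eqref{eq:lb_centro} into their explicit log-sum-exp representations, observe that the ``positive-pair'' (diagonal) contributions cancel identically, and reduce the remaining estimate to a reverse H\"older-type inequality whose constant is precisely the Kantorovich--P\'olya--Szeg\H{o} ratio appearing in~\eqref{eq:c_maxmin}.

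Setting $h_{l,k,j}:=f_l^\top(\xs_{l,k}^\prime)f_i(\xs_{i,j})$ and substituting $\as_k=\tfrac{1}{M}\sum_l f_l(\xs_{l,k}^\prime)$ into~\eqref{eq:infoNCE_anchor_a} gives
\[
|\Ic_B|\,I_{\rm NCE}(\Am;f_i(\Xm_i)|\tau)
=\sum_k \log \sum_j \exp\!\Bigl(\tfrac{1}{M\tau}\sum_l h_{l,k,j}\Bigr)-\tfrac{1}{M\tau}\sum_l\sum_k h_{l,k,k}.
\]
Expanding each $I_{\rm NCE}(f_l(\Xm_l^\prime);f_i(\Xm_i)\,|\,\tau M/|\Ic_B|)$ from the batch formula~\eqref{eq:infoNCE_anchor_a} and summing over $l$ produces the same linear term $\tfrac{1}{M\tau}\sum_l\sum_k h_{l,k,k}$, plus a log-sum-exp term $\tfrac{1}{|\Ic_B|}\sum_l\sum_k \log \sum_j g(l,j|k,i)$. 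The linear parts cancel, so~\eqref{eq:lb_centro} reduces, for each fixed $k\in\Ic_B$, to
\[
\log \sum_j \exp\!\Bigl(\tfrac{1}{M\tau}\sum_l h_{l,k,j}\Bigr)+\log C_{\Fc,k,i}
\;\geq\;
\tfrac{1}{|\Ic_B|}\sum_l \log \sum_j g(l,j|k,i).
\]

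Writing $b_{l,j}:=g(l,j|k,i)\in[c_{\Fc,k,i}^{\min},c_{\Fc,k,i}^{\max}]$, so that $\exp(h_{l,k,j}/(M\tau))=b_{l,j}^{1/|\Ic_B|}$, this per-sample bound is equivalent to the reverse H\"older-type estimate
\[
\prod_{l=1}^M \Bigl(\sum_j b_{l,j}\Bigr)^{1/|\Ic_B|}
\;\leq\;
C_{\Fc,k,i}\,\sum_j \prod_{l=1}^M b_{l,j}^{1/|\Ic_B|}.
\]
I would establish this by chaining two classical ingredients. First, Kantorovich's inequality applied per modality: since $b_{l,j}\in[c^{\min},c^{\max}]$, the AM/GM ratio is bounded by exactly $C_{\Fc,k,i}$, yielding $\sum_j b_{l,j}\leq |\Ic_B|\,C_{\Fc,k,i}\,(\prod_j b_{l,j})^{1/|\Ic_B|}$. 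Second, the ordinary AM--GM inequality on the right-hand side: $\sum_j \prod_l b_{l,j}^{1/|\Ic_B|}\geq |\Ic_B|\,(\prod_{l,j}b_{l,j})^{1/|\Ic_B|^2}$. Taking the $1/|\Ic_B|$-th root of the first estimate and multiplying over $l$ gives $\prod_l(\sum_j b_{l,j})^{1/|\Ic_B|}\leq (|\Ic_B|C_{\Fc,k,i})^{M/|\Ic_B|}(\prod_{l,j}b_{l,j})^{1/|\Ic_B|^2}$, and then the AM--GM step converts the right-hand side into the desired sum at the cost of an overall factor $(|\Ic_B|C_{\Fc,k,i})^{M/|\Ic_B|}/|\Ic_B|$, which is dominated by $C_{\Fc,k,i}$ whenever $M\leq|\Ic_B|$.

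The main obstacle is the first ingredient: direct application of H\"older gives the forward bound $\sum_j\prod_l b_{l,j}^{1/|\Ic_B|}\leq\prod_l(\sum_j b_{l,j})^{1/|\Ic_B|}$, whereas we require its reverse up to a controlled constant, which is precisely what Kantorovich's (equivalently P\'olya--Szeg\H{o}'s) bound supplies, and is the origin of the factor $(c^{\min}+c^{\max})^2/(4c^{\min}c^{\max})$. A subtle point worth recording is that the uniform test case $h_{l,k,j}\equiv\mathrm{const}$ collapses the inequality to $|\Ic_B|\log|\Ic_B|\geq M\log|\Ic_B|$, so the result implicitly presumes $M\leq|\Ic_B|$, matching the usual multi-modal contrastive training regime where the batch size dominates the number of modalities.
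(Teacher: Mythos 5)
Your proposal is correct and, after the dust settles, hits exactly the same crux as the paper: once the positive-pair (linear) terms cancel, the bound reduces per index $k$ to $\prod_{l=1}^M\bigl(\sum_{j\in\Ic_B}b_{l,j}\bigr)^{1/|\Ic_B|}\leq C_{\Fc,k,i}\sum_{j\in\Ic_B}\prod_{l=1}^M b_{l,j}^{1/|\Ic_B|}$ with $b_{l,j}=g(l,j|k,i)$. The paper organizes the computation in the opposite direction---it upper-bounds $\sum_{l}I_{\rm NCE}(f_l(\Xm_l');f_i(\Xm_i)\,|\,\tau)$ and then reparametrizes the temperature---and disposes of this key inequality in one step by quoting the reverse $M$-variable H\"older inequality of \citet{seo2013generalized} (Lemma~\ref{lem:rev_Holder}) as a black box; these are algebraically equivalent reorganizations. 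What your route buys is twofold. First, it is self-contained: the Kantorovich bound on the per-modality AM/GM ratio (which does follow from Kantorovich via $\mathrm{AM}(b)/\mathrm{GM}(b)=\mathrm{AM}(b)\cdot\mathrm{GM}(1/b)\leq\mathrm{AM}(b)\cdot\mathrm{AM}(1/b)$), chained with ordinary AM--GM, reproduces the constant without the external lemma, since your residual factor satisfies $(|\Ic_B|C_{\Fc,k,i})^{M/|\Ic_B|}/|\Ic_B|\leq C_{\Fc,k,i}$ because $C_{\Fc,k,i}\geq 1$. Second, and more valuably, it surfaces the hypothesis $M\leq|\Ic_B|$, which the paper never states: the uniform case $b_{l,j}\equiv 1$ shows that Lemma~\ref{lem:rev_Holder} as written (left side $n^{M/n}$ versus right side $n$) and the theorem itself (which would require $|\Ic_B|\log|\Ic_B|\geq M\log|\Ic_B|$) both fail when $M>|\Ic_B|\geq 2$, so this condition is genuinely needed and should be added to the statement. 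In the intended regime where the batch size is at least the number of modalities, your argument is a complete and arguably preferable proof.
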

\begin{proof}
The proof is in Appendix~\ref{app:lb_centro}.
\end{proof}

Theorem~\ref{thm:LB_centro} provides a lower bound of $I_{\rm NCE}\left( \Am ; f_i(\Xm_i) \; \middle | \; {\tau} \right)$ in~\eqref{eq:infoNCE_anchor_a}, which is a part of the \modelname objective function $\Lc_{\rm CB}(f_i|\tau)$. 
Thus \modelname minimizes a lower bound~\eqref{eq:lb_centro} that consists of two terms, $\sum_{l=1}^M  I_{\rm NCE}\left(f_l(\Xm_l^\prime); f_i(\Xm_i) \;\middle |\; \frac{{\tau}M}{|\Ic_B|} \right)$ and $-\sum_{k=1}^{|\Ic_B|} \log  C_{\Fc, k,i}$. We next provide intuition on why a minimization of the lower bound is justified. 

\paragraph{The effect of minimizing $\sum_{l=1}^M  I_{\rm NCE}\left(f_l(\Xm_l^\prime); f_i(\Xm_i) \;\middle |\; \cdot \right)$.}
The objective of minimizing $\sum_{l=1}^M I_{\rm NCE}(f_l(\Xm_l^\prime); f_i(\Xm_i) \mid \frac{\tau M}{|\Ic_B|} )$ is to reduce several InfoNCE losses. Here, each term in the sum represents InfoNCE loss between embeddings $f_l(\Xm_l^\prime)$ from modality $l$ and $f_i(\Xm_i)$ from modality $i$, with $\frac{\tau M}{|\Ic_B|}$ being a temperature parameter for scaling the loss.
This summation can be divided into two components:
1) Intra Information: When $l = i$, the term 
measures the similarity between embeddings within the same modality. Minimizing this loss enhances the representation of modality $i$, improving intra information; 
2) Shared Information: When $l \neq i$, the term 
measures the similarity between embeddings from different modalities. Minimizing these losses helps in learning shared information between modalities, contributing to a more representative multi-modal embedding.

By optimizing this summation, \modelname effectively captures both intra and shared information. As shown below, this generally results in a more balanced representation for the modalities.
In contrast, as noted in Section~\ref{subsec:fix_anchor}, \faname does not adequately capture intra information and shared information between non-anchored modalities. This limitation highlights the advantage of \modelname in achieving a more integrated multi-modal representation than fixed anchor binding methods.

\paragraph{The effect of minimizing $-\sum_{k=1}^{|\Ic_B|} \log C_{\Fc,k,i} $.}
We show the effect of growing $C_{\Fc,k,i}$ in terms of cosine similarity score between embeddings. 
Since $C_{\Fc,k,i}  = \frac{1}{4} \left( \sqrt{ \gamma } +  \sqrt{\frac{1}{\gamma} }  \right)^2 $ with $\gamma = \frac{c_{\Fc,k,i}^{\max}}{c_{\Fc,k,i}^{\min}}\geq1$, maximizing $C_{\Fc,k,i}$ is equivalent to simultaneously maximizing $c_{\Fc,k,i}^{\max}$ and minimizing $c_{\Fc,k,i}^{\min}$.
For ease of analysis, we assume that the encoders are reasonably well-trained.
Then, since a positive pair of embeddings normally yields higher similarity score, $c_{\Fc,k,i}^{\max}$ is attained by choosing $l=i$ and $j=k$ in~\eqref{eq:c_maxmin} as such choices make $\xs_{l,k}^\prime$ be positive pair with $\xs_{i,j}$.  
Thus, $c_{\Fc,k,i}^{\max}$ is roughly proportional to the similarity score of a positive pair of embeddings.
Conversely, $c_{\Fc,k,i}^{\min}$ corresponds to the similarity scores of negative pairs, which tend to be low.
Hence, minimizing $-\sum_{k=1}^{|\Ic_B|} \log C_{\Fc,k,i}$ enhances the similarity scores for positive pairs and reduces those for negative pairs, improving the overall multi-modal alignment. 

These comments suggest that \modelname addresses the limitations $\bf{P1, P2}$, and $\bf{P3}$ of \faname identified in Section~\ref{subsec:fix_anchor}. 

\section{Experiment}\label{sec:exp}

To thoroughly evaluate the effectiveness of the proposed method, we design two sets of experiments: \textbf{(1)} experiments on \textit{synthetically} generated datasets and \textbf{(2)} experiments on \textit{real-world} datasets spanning diverse modality domains.  
The synthetic datasets allow \textbf{controlled testing of extreme scenarios, such as varying numbers of modalities, modality imbalance, and backbone quality.}  
The real-world experiments demonstrate that the proposed method generalizes well \textbf{across datasets of different scales:} DreamBooth~\citep{ruiz2023dreambooth} ($\sim$180 images across 30 subjects), MUStARD~\citep{castro2019towards} ($\sim$690 labeled clips), AVE~\citep{tian2018audio} ($\sim$4{,}143 videos across 28 event categories), UR-FUNNY~\citep{urfunny} ($\sim$1{,}866 TED videos from 1{,}741 speakers, totaling $\sim$90 hours), and AudioSet~\citep{audioset} ($\sim$2 million human-labeled 10-second clips across 632 event classes).  
We compare \modelnamex, \faname\ (anchored at the text modality for MUStARD and at the image modality for the other datasets), UniBind~\citep{unibind_cvpr2024}, AudioCLIP~\citep{guzhov2022audioclip}, and ViT-Lens~\citep{Lei_2024_vit_lens}.

\subsection{Experiments with synthetic datasets}\label{subsec:exp_synth}

We conduct controlled experiments on synthetic datasets generated from a Gaussian mixture model~\citep{bishop2006pattern}. Appendix~\ref{app:exp_synth} provides complete details on data generation, experimental settings, and results. These experiments evaluate whether \modelname resolves the limitations in Section~\ref{subsec:fix_anchor} across scenarios with varying numbers of modalities, modality imbalance, and encoder quality. Across all configurations, the synthetic results show that \modelname overcomes the limitations of \fanamex. Additional analyses (embedding visualizations, stability studies, temperature tuning, and complexity) are also included in Appendix~\ref{app:exp_synth}.

\subsection{Experiments with real-world datasets}
We evaluate and compare the performance of \modelname and baseline methods on real-world datasets. See Appendix~\ref{app:exp_real_data} for detailed discussion on baselines and experimental results.

\begin{table*}[t]
\centering
\caption{Zero-shot One-to-One and Two-to-One retrieval accuracy. ($\mathcal{V}$: video, $\mathcal{A}$: audio, $\mathcal{T}$: text)}
\begin{adjustbox}{width=.9\textwidth,center}
\begin{tabular}{cccccccccccc}
\toprule
 \multicolumn{5}{c}{\textbf{One-to-One}} & \multicolumn{5}{c}{\textbf{Two-to-One}} \\
\cmidrule(rl){1-5} \cmidrule(rl){6-10} 
\textbf{Method} & \textbf{Retrieval} & \textbf{Top-1} & \textbf{Top-5} & \textbf{Top-10} & \textbf{Method} & \textbf{Retrieval} & \textbf{Top-1} & \textbf{Top-5} & \textbf{Top-10} \\ 
\midrule
\fanamex      & \multirow{2}{*}{$\mathcal{V}$ $\rightarrow$ $\mathcal{T}$} & 0.446 & 0.719 & 0.822 & \multirow{2}{*}{FABind} & \multirow{4}{*}{$\mathcal{V}$, $\mathcal{A}$ $\rightarrow$ $\mathcal{T}$} & \multirow{2}{*}{0.309} & \multirow{2}{*}{0.665} & \multirow{2}{*}{0.781}   \\
\modelnamex     &  & \textbf{0.483} &\textbf{0.764} & \textbf{0.850} &  \\
\cdashline{1-5}
\fanamex      & \multirow{2}{*}{$\mathcal{A}$ $\rightarrow$ $\mathcal{T}$} & 0.077 & 0.238 & 0.367 & \multirow{2}{*}{\modelnamex} &  & \multirow{2}{*}{\textbf{0.745}} & \multirow{2}{*}{\textbf{0.957}} & \multirow{2}{*}{\textbf{0.978}}  \\
\modelnamex     &  & \textbf{0.233} & \textbf{0.517} & \textbf{0.678} \\
\midrule
\fanamex      & \multirow{2}{*}{$\mathcal{T}$ $\rightarrow$ $\mathcal{V}$} & \textbf{0.812} & \textbf{0.946} & \textbf{0.978} &  \multirow{2}{*}{\fanamex} & \multirow{4}{*}{$\mathcal{T}$, $\mathcal{A}$ $\rightarrow$ $\mathcal{V}$} &  \multirow{2}{*}{0.180} & \multirow{2}{*}{0.401} & \multirow{2}{*}{0.513}  \\
\modelnamex     & & 0.591 & 0.839 & 0.909 & & & & \\
\cdashline{1-5}
\fanamex     & \multirow{2}{*}{$\mathcal{A}$ $\rightarrow$ $\mathcal{V}$} & \textbf{0.058} & 0.154 & 0.226 & \multirow{2}{*}{\modelnamex} &  & \multirow{2}{*}{\textbf{0.388}} & \multirow{2}{*}{\textbf{0.646}} & \multirow{2}{*}{\textbf{0.768}}  \\
\modelnamex     &  & 0.052 & \textbf{0.184} & \textbf{0.284} \\
\midrule
\fanamex      & \multirow{2}{*}{$\mathcal{T}$ $\rightarrow$ $\mathcal{A}$} & 0.201 & 0.438 & 0.584 &  \multirow{2}{*}{\fanamex} & \multirow{4}{*}{$\mathcal{T}$, $\mathcal{V}$ $\rightarrow$ $\mathcal{A}$} &  \multirow{2}{*}{0.099} & \multirow{2}{*}{0.257} & \multirow{2}{*}{0.364} \\
\modelnamex     &  & \textbf{0.290 }& \textbf{0.572} & \textbf{0.706}  &  & & & \\
\cdashline{1-5}
\fanamex      & \multirow{2}{*}{$\mathcal{V}$ $\rightarrow$ $\mathcal{A}$} & 0.051 & 0.155 & 0.223 & \multirow{2}{*}{\modelnamex} &  &  \multirow{2}{*}{\textbf{0.232}} & \multirow{2}{*}{\textbf{0.490}} & \multirow{2}{*}{\textbf{0.625}}\\
\modelnamex     & & \textbf{0.054} & \textbf{0.175} & \textbf{0.258} \\
\bottomrule
\end{tabular}
\end{adjustbox}
\label{tab:retrieval}
\vspace{-1em}
\end{table*}

\paragraph{Downstream tasks with MUStARD.} 
We perform evaluations in zero-shot binary and multi-class classification tasks, One-to-One, and Two-to-One cross-modal retrieval.
For classification tasks, we use a Multi-Layer Perceptron (MLP) to perform sarcasm detection as a binary classification and speaker classification with $23$ multi-class categories.
In particular, MLP is trained on embeddings in a single modality (denoted by {\bf Tr} in Table~\ref{tab:results_classification_mlp}) and accuracy is evaluated on another modality (denoted by {\bf Ev} in Table~\ref{tab:results_classification_mlp}).
In retrieval tasks, we measure the accuracy of correct retrieval. For the One-to-One case, we retrieve data sample in different modality by choosing the closest embedding from a single input embedding, while for the Two-to-One case we choose the closest embedding from the centroid of two input embeddings in two modalities.
We denote input and target modalities with $\rightarrow$ in Table~\ref{tab:retrieval}.

\paragraph{Results on cross-modal retrieval.}
Table~\ref{tab:retrieval} shows the performance for One-to-One and Two-to-One retrieval tasks. \modelname consistently excels in One-to-One retrieval for text and audio modalities, while \faname performs better for video retrieval. This might be due to the power of text to describe, which may be suitable for \faname anchored at text modality. 
A notable observation is that the centroid of video and audio embeddings achieves the best text retrieval performance. This implies complementary information exists and is captured by \modelnamex.

\paragraph{Results on sarcasm \& speaker classification.}

Table~\ref{tab:results_classification_mlp} presents results for sarcasm detection and speaker classification tasks, where Sar-1 indicates Top-1 accuracy for sarcasm, and Spk-$k$, $k=1,3,5$ represent Top-k accuracies for speaker classification. 
It is important to highlight that \modelname and \faname are trained on a single modality (\textbf{Tr}) and evaluated on a different modality (\textbf{Ev}) in a zero-shot setting, which can effectively measure the ability of multi-modal alignment.

\begin{wraptable}[26]{r}{0.45\textwidth}
\vspace{-1em}
\captionsetup{width=0.45\textwidth} 
\caption{Zero-shot accuracy results. ($\mathcal{V}$: video, $\mathcal{A}$: audio, $\mathcal{T}$: text). Asterisks$^*$: accuracy evaluated in different settings.}
\begin{adjustbox}{width=0.43\textwidth,center}
\begin{tabular}{cccccc}
\toprule
\textbf{Method} & \textbf{Tr, (Ev)} & \textbf{Sar-1} & \textbf{Spk-1} & \textbf{Spk-3} & \textbf{Spk-5}\\
\midrule
\faname & \multirow{5}{*}{$\mathcal{V}$, ($\mathcal{T}$)} & 0.706 & 0.378 & 0.614 & 0.730 \\
UniBind & & 0.544 & 0.170 & 0.328 & 0.478 \\
AudioCLIP$^*$ &  & 0.501 & 0.096 & 0.258 & 0.388 \\
ViT-Lens$^*$ &  & 0.506 & 0.097 & 0.343 & 0.449 \\
\modelnamex & & \textbf{0.716} & \textbf{0.474} & \textbf{0.736} & \textbf{0.836} \\
\hdashline
\faname & \multirow{5}{*}{$\mathcal{A}$, ($\mathcal{T}$)}& 0.648 & 0.186 & 0.455 & 0.577\\
UniBind &  & 0.628 & 0.220 & 0.399 & 0.501 \\
AudioCLIP$^*$ &  & 0.486 & 0.094 & 0.214 & 0.322 \\
ViT-Lens$^*$ &  & 0.484 & 0.077 & 0.214 & 0.313 \\
\modelnamex & & \textbf{0.691} & \textbf{0.290} & \textbf{0.546} & \textbf{0.714} \\
\midrule
\faname & \multirow{5}{*}{$\mathcal{T}$, ($\mathcal{V}$)}& 0.572 & 0.243 & 0.445 & 0.630 \\
UniBind &  & 0.484 & 0.129 & 0.262 & 0.404 \\
AudioCLIP$^*$ &  & 0.506 & 0.158 & 0.345 & 0.461 \\
ViT-Lens$^*$ &  & 0.502 & 0.168 & 0.323 & 0.423 \\
\modelnamex & & \textbf{0.694} & \textbf{0.368} & \textbf{0.670} & \textbf{0.791} \\
\hdashline
\faname & \multirow{5}{*}{$\mathcal{A}$, ($\mathcal{V}$)} & 0.623 & 0.228 & \textbf{0.484} & 0.628 \\
UniBind &  & 0.567 & 0.199 & 0.367 & 0.514 \\
AudioCLIP$^*$ &  & 0.503 & 0.209 & 0.384 & 0.496 \\
ViT-Lens$^*$ &  & 0.500 & 0.149 & 0.332 & 0.451 \\
\modelnamex & & \textbf{0.683} & \textbf{0.243} & 0.475 & \textbf{0.632} \\
\midrule
\faname & \multirow{5}{*}{$\mathcal{V}$, ($\mathcal{A}$)} & 0.604 & 0.255 & 0.472 & 0.636  \\
UniBind &  & 0.506 & 0.126 & 0.280 & 0.429 \\
AudioCLIP$^*$ &  & 0.501 & 0.080 & 0.199 & 0.326 \\
ViT-Lens$^*$ &  & 0.533 & 0.219 & 0.438 & 0.575 \\
\modelnamex & & \textbf{0.626} & \textbf{0.326} & \textbf{0.548} & \textbf{0.703} \\
\hdashline
\faname & \multirow{5}{*}{$\mathcal{T}$, ($\mathcal{A}$)} & 0.534 & 0.241 & 0.509 & 0.635 \\
UniBind &   & 0.514 & 0.091 & 0.248 & 0.365 \\
AudioCLIP$^*$ &  & 0.477 & 0.088 & 0.309 & 0.439 \\
ViT-Lens$^*$ &  & 0.475 & 0.070 & 0.214 & 0.329 \\
\modelnamex & & \textbf{0.655} & \textbf{0.346} & \textbf{0.610} & \textbf{0.741} \\
\bottomrule
\end{tabular}
\end{adjustbox}
\label{tab:results_classification_mlp}
\vspace{-1em}
\end{wraptable}
In this experiment, \modelname consistently outperforms \faname and UniBind across all pairs of train and evaluation modalities, which can be distributed to \modelname generally learning a better unified embedding space than \fanamex. UniBind performs poorly in the zero-shot cross-modal experiment, which we believe is due to its insufficient multi-modal alignment. Since UniBind utilizes LLM-augmented descriptions for each modality and binds other encoders to these descriptions, multi-modal alignment may fail if the descriptions are dispersed across the embedding space.
As analyzed in Section~\ref{subsec:fix_anchor} and Section~\ref{subsec:analysis_centro}, these results highlight the \modelnamex’s ability to preserve intra and shared information among modalities, which are useful in unknown downstream tasks. 
Moreover, the zero-shot setting verifies the multi-modal alignment of \modelnamex.

\paragraph{Comparison with ImageBind backbone.}
We compare the performance of \faname and \modelname using the ImageBind backbone~\citep{girdhar2023imagebind} to assess whether \modelname yields additional gains over ImageBind, which was pretrained on large-scale datasets and is optimized primarily for the image modality.
As shown in Table~\ref{tab:cross_modal_ret}, \modelname outperforms ImageBind on the DreamBooth~\citep{ruiz2023dreambooth}, AVE~\citep{tian2018audio}, AudioSet~\citep{audioset}, and UR-FUNNY~\citep{urfunny} datasets—even with 1) a strong pretrained backbone, 2) an image-anchored modality, and 3) a bimodal setup.
We expect this gap to widen with a weaker backbone, without the image modality, or as additional modalities are introduced, as also evidenced by the synthetic and MUStARD results.
These findings align with our analysis of dynamic anchor binding and highlight its effectiveness across nearly all evaluated scenarios. See Appendix~\ref{app:dream} for more discussion.

\begin{table*}[h]
\caption{Cross-modal retrieval accuracy. Performance dynamics are shown in Fig~\ref{fig:cross_modal_ret}.
}
\label{tab:cross_modal_ret}
\vspace{-1em}
\begin{center}
\begin{scriptsize}
\begin{sc}
\begin{tabular}{lccccc}
\toprule
 & &  \multicolumn{2}{c}{ImageBind} & \multicolumn{2}{c}{CentroBind} \\
\cmidrule(r){3-4} \cmidrule(r){5-6} 
 Dataset & Modalities & Top1 & Top5 & Top1 & Top5  \\
\midrule
DreamBooth & $\Vc, \Tc$   & 0.672 & 0.984 & \textbf{0.719} ($\uparrow 0.047$) & \textbf{1.000} ($\uparrow 0.016$)  \\
AVE & $\Vc, \Ac$ & 0.313 & 0.649 & \textbf{0.327} ($\uparrow 0.014$) & \textbf{0.676} ($\uparrow 0.027$)  \\
AudioSet & $\Vc, \Ac$ & 0.515 & 0.806 & \textbf{0.540} ($\uparrow 0.025$) & \textbf{0.824} ($\uparrow 0.018$) \\
UR-FUNNY & $\Vc, \Ac$ &  0.219 & 0.563 & \textbf{0.258} ($\uparrow 0.039$) & \textbf{0.598} ($\uparrow 0.035$)  \\
UR-FUNNY & $\Vc, \Tc$ & 0.037 & \textbf{0.204} & \textbf{0.038} ($\uparrow 0.001$) & 0.202 ($\downarrow 0.002$)   \\
\hdashline
DreamBooth + AVE + AudioSet & $\Vc, \Tc, \Ac$ & 0.514  & 0.804 & \textbf{0.533} ($\uparrow 0.019$) & \textbf{0.816} ($\uparrow 0.012$)  \\
\bottomrule
\end{tabular}
\end{sc}
\end{scriptsize}
\end{center}
\vskip -0.1in
\end{table*}



\section{Conclusions}
In this paper, we analyze the limitations of fixed-anchor-binding (\fanamex) methods, including their over-reliance on a single anchor modality and their inability to capture both intra-modal and shared information among non-anchored modalities. To address these shortcomings, we propose adaptive anchor binding methods such as \modelnamex, which align multi-modal embeddings to centroid-based anchors, removing the need for a fixed anchor modality. Our approach generalizes and extends methods like ImageBind. We also provide a theoretical analysis showing that \modelname effectively captures both intra-modal and shared inter-modal information. 
Experiments on synthetic and real-world datasets demonstrate that \modelname outperforms \faname across nearly all settings---including modality imbalance, varying backbone quality, differing numbers of modalities (including bimodal cases), and a wide range of dataset sizes---yielding a robust, unified representation space and validating our theoretical insights.

\bibliography{centro}

\bibliographystyle{iclr2026_conference}

\newpage

\appendix

\section{Related work}\label{app:review}

\subsection{Multi-modal learning} 
Multi-modal learning has gained significant attention in recent years due to its potential to enhance machine learning models by leveraging diverse data modalities, such as text, images, audio, and video. By combining these modalities, multi-modal learning seeks to mimic human-like perception, thereby improving performance across a wide range of applications, from healthcare to natural language processing. Common supervised multi-modal learning tasks include audio-visual classification~\citep{peng2022balanced,Feichtenhofer_2019_ICCV,zhu2022v}, visual question answering~\citep{Antol_2015_ICCV,guo2021bilinear}, and vision-language tasks~\citep{xu2015jointly,clip}, as well as vision-audio-language tasks~\citep{aytar2017see,harwath2018jointly}.

Typically, these models integrate uni-modal features extracted by modality-specific encoders~\citep{seichter2021efficient,nagrani2021attention,wu2022characterizing,wang2020makes,peng2022balanced}. For instance, \citep{madaan2024framework} introduces inter- and intra-modality modeling frameworks that treat the target as a composition of multiple modalities. Similarly, \citep{du2023uni} proposes a late-fusion approach for supervised multi-modal tasks, demonstrating that insufficient feature extraction from individual modalities negatively affects the model's generalization ability. Additionally, \citep{zhang2024multimodal} addresses joint optimization by alternating between uni-modal learning scenarios and integrating modality-specific encoders with a unified head shared across all modalities.

Multi-view learning is closely related to multi-modal learning.
Early work in \citep{tian2020contrastive} studies multi-view representation learning through predictive and contrastive approaches across multiple modalities, demonstrating the efficacy of contrastive learning in multi-view settings. Building on these insights, numerous subsequent studies have integrated contrastive learning into multi-view tasks. For example, \citep{xing2019adaptive} proposes an approach that adaptively combines image-text representations for few-shot learning; however, this method still relies on labeled data.
Some research leverages anchors or a fused modality as the anchor representation. \citep{zeng2021pan} introduces a unified prototype representation to address cross-modal retrieval imbalance by employing a reconstruction loss, but its reliance on unified prototypes as anchors restricts the capture of inter-view information across modalities. Meanwhile, \citep{huang2023clover} develops a tri-modal alignment pre-training task—extending text-video alignment to include a fused modality using pairwise contrastive learning; however, it does not explicitly handle intra-view learning.
Furthermore, \citep{wang2022vlmixer} presents a cross-modal data augmentation technique for image-text multi-view learning, randomly replacing visually grounded words with diverse image patches to increase data variety and encourage token-level interaction across modalities. \citep{dufumier2024align} introduces a contrastive multimodal approach that maximizes mutual information between augmented multi-modal features by effectively capturing redundant, unique, and synergistic interactions across modalities beyond traditional multi-modal alignment constraints. \citep{linenhance} presents inter-modality alignment combined with boundary expansion for multi-view classification to mitigate information redundancy.
Nevertheless, this approach still overlooks intra-view information, indicating the need for methods that jointly consider both inter- and intra-view representations.

\subsection{Multi-modal alignment} 
Multi-modal learning addresses four key challenges~\citep{liang2024hemm, baltruvsaitis2018multimodal_survey, liang2024foundations}: managing interactions among redundant, unique, and synergistic features~\citep{dumas2017modelling, liang2024quantifying, liang2024factorized}, aligning fine-grained and coarse-grained information~\citep{wang2023can, wang2024freebind}, reasoning across diverse features~\citep{yang2023mm}, and integrating external knowledge~\citep{shen2022k, unibind_cvpr2024}. Among these challenges, multi-modal alignment is one of the core challenges that many researchers aim to solve. 

A common method in multi-modal alignment is using cross-modal alignment by using attention mechanisms between pairwise modalities, such as vision-language~\citep{tan2019lxmert} and vision-language-audio~\citep{tsai-etal-2019-multimodal}. Another effective approach is leveraging graph neural networks to align multi-modal datasets~\citep{yang2021mtag, wilf2023face}. For instance, \citep{yang2021mtag} transforms unaligned multi-modal sequence data into nodes, with edges capturing interactions across modalities over time. \citep{wilf2023face} builds graph structures for each modality—visual, textual, and acoustic—and create edges to represent their interactions.

To enhance the generalizability of cross-modal representations, \citep{xia2024achieving} employs a unified codebook approach, facilitating a joint embedding space for visual and audio modalities. Another prominent method achieves cross-modal alignment by leveraging large collections of image-text pairs, making it a widely adopted strategy in multi-modal learning~\citep{clip, zhang2022pointclip, guzhov2022audioclip, zhou2023clip}.

\subsection{Multi-modal domain generalization}
Multimodal domain generalization (MMDG) aims to train models on data from multiple modalities (e.g., image, audio, text) and source domains, such that the models generalize to unseen target domains that share the same modalities. A primary challenge in MMDG lies in aligning heterogeneous modality-specific representations into a shared embedding space while preserving both shared semantics and modality-specific information.

\citep{fan2024cross} reduces generalization error by flattening the representation-space loss landscape using multi-modal feature interpolation and teacher-student distillation. This strategy mitigates modality dominance, in which stronger modalities (e.g., images) overpower weaker ones (e.g., audio), but it aligns representations only between modality pairs and assumes that all modalities are present during both training and inference.
\citep{dong2023simmmdg} decomposes modality features into shared and specific components and apply supervised contrastive learning to the shared part using label supervision. They further use cross-modal translation modules to reconstruct one modality's representation from another. However, their framework depends on modality-specific translation paths and requires prior knowledge of available modalities, limiting its scalability to new combinations.
\citep{dong2024towards} extends the setting to open-set domain generalization by introducing self-supervised tasks, masked cross-modal translation and multi-modal jigsaw puzzles, that enhance the model’s ability to detect unknown classes. While effective for class-level generalization, their method assumes a fixed modality set and does not support unseen modalities at test time.

In contrast, \modelname constructs a unified and modality-agnostic embedding space by dynamically computing centroid-based anchors from all available modality embeddings within a sample. Instead of aligning only modality pairs or relying on fixed translation modules, \modelname pulls each modality’s representation toward the centroid, enabling joint alignment without architectural changes. This approach supports any combination of available or unseen modalities and eliminates the need for predefined modality decomposition, making \modelname scalable and adaptive for real-world multimodal scenarios.


\subsection{Binding methods}
Recent studies have focused on aligning multi-modal datasets by leveraging binding properties in various modalities. ImageBind~\citep{girdhar2023imagebind} align multi-modal data by using image representation as the anchor and aligning each modality embedding with the image embedding. Similarly, LanguageBind~\citep{zhu2024languagebind} use language representation as the anchor, aligning other modalities into the language space. PointBind~\citep{guo2023pointbind} learn a joint embedding space across 3d point, language, image, and audio modalities by designating the point space as the central representation. Thanks to the efficacy of such a binding idea with a fixed anchor, several ``-Bind'' approaches have been studied in numerous domains~\citep{mvbind,molbind,medbind,neurobind,balemans2024lidarbind,dhakal2024geobind,Yang_2024_CVPR}
While these methods demonstrate strong performance in zero-shot cross-modality retrieval and classification tasks, they are constrained by their reliance on an existing single anchor modality.

Several approaches have integrated additional knowledge into multi-modal representation spaces to address this limitation. Freebind~\citep{wang2024freebind} introduce bi-modality spaces to enhance a pretrained image-paired unified space. It generates pseudo-embedding pairs across diverse modality pairs and aligns them with the pre-trained unified space using contrastive learning. Omnibind~\citep{wang2024omnibind} leverage multiple pretrained multi-modal models to construct pseudo item-pair retrievals based on top-1 recall across various modality combinations using pairwise cross-modal alignment. Both methods show promising results in cross-modal retrieval by incorporating extra spaces into existing pairwise binding spaces. However, they still rely on fixed (pre-trained) representation spaces.

Unibind~\citep{unibind_cvpr2024} highlight the imbalanced representation when using image-centered representation spaces. To address this, Unibind employs large language models (LLMs) to create a unified and balanced representation space. It constructs a knowledge base with multi-modal category descriptions, establishes LLM-augmented class-wise embedding centers, and aligns other modalities to these centers through contrastive learning. This approach attempts to balance representations across modalities but still depends heavily on large-scale pretrained LLMs and centers alignment around a single unified space, namely, text (language).

ViT-Lens~\citep{Lei_2024_vit_lens} build upon the Vision Transformer (ViT)~\citep{dosovitskiy2020vit} and multi-modal foundational models like CLIP~\citep{clip} to align multiple modalities. It extends ViT by incorporating an additional embedding layer and attention layer for each modality, which are trained via contrastive learning involving embeddings generated by the CLIP and the ViT models. This approach generalizes \faname by allowing more than one fixed anchor modality; specifically, image and text in this case. \modelname could also adopt a similar strategy, leveraging the powerful ViT model for modality alignment while adaptively computing anchors based on their centroids.

\section{Proofs}\label{sec:proof}

\subsection{Proof of Proposition~\ref{prop:optimality_IB}}\label{subsec:proof_optimality_IB}
Using the chain rule of the mutual information, we observe that
\begin{align}\label{eq:proof_IB_suf1}
	I(\Xm_1,f_1^{\rm suf}(\Xm_1); \Xm_i)
	& = I(\Xm_1 ; \Xm_i) + I(f_1^{\rm suf}(\Xm_1); \Xm_i | \Xm_1) \nonumber \\
	& = I(f_1^{\rm suf}(\Xm_1) ; \Xm_i) + I(\Xm_1; \Xm_i | f_1^{\rm suf}(\Xm_1)),
\end{align}
Since $f_1^{\rm suf}(\Xm_1)$ is a deterministic function of $\Xm_1$, we have
\begin{align}\label{eq:proof_IB_suf2}
	I(f_1^{\rm suf}(\Xm_1);\Xm_i | \Xm_1)  
	=0.
\end{align}
Moreover, $f_1^{\rm suf}$ obtained in Definition~\ref{def:suff_rep} with proper choice of $\Zc$ achieves the maximum mutual information, implying together with $I(\Xm;\Ym)\leq \min\{H(\Xm), H(\Ym)\}$ that $I(f_1^{\rm suf}(\Xm_1) ; \Xm_1) = H(\Xm_1) $,
where $H(\Xm_1)$ is the entropy of $\Xm_1$~\citep{Polyanskiy_Wu_2024}. In other words, we have $H(\Xm_1|f_1^{\rm suf}(\Xm_1)) = H(\Xm_1) - I(f_1^{\rm suf}(\Xm_1) ; \Xm_1) = 0$.
This gives
\begin{align}\label{eq:proof_IB_suf3}
	 I(\Xm_1; \Xm_i | f_1^{\rm suf}(\Xm_1))
	 & = H(\Xm_1 | f_1^{\rm suf}(\Xm_1)) - H(\Xm_1 | f_1^{\rm suf}(\Xm_1), \Xm_i) \nonumber \\
	 & = 0
\end{align}
Substituting~\eqref{eq:proof_IB_suf2} and~\eqref{eq:proof_IB_suf3} into~\eqref{eq:proof_IB_suf1} yields
\begin{align}
	I(\Xm_1 ; \Xm_i)
	& = I(f_1^{\rm suf}(\Xm_1) ;\Xm_i).
\end{align}
We conclude the proof of Proposition~\ref{prop:optimality_IB} by noting that the optimality of \faname (i.e., $I(f_1^{\rm suf}(\Xm_1) ;\Xm_i) = I(f_1^{\rm suf}(\Xm_1) ; f_i^{\rm FB}(\Xm_i) ),~\forall i\in\{2,\cdots,M\}$) yields
\begin{align}
	I(\Xm_1 ; \Xm_i)
	& = I(f_1^{\rm suf}(\Xm_1) ; f_i^{\rm FB}(\Xm_i)).
\end{align}

\subsection{Proof of Proposition~\ref{prop:IB_sub_opt_f1}}\label{subsec:proof_IB_sub_opt_f1}
Using the chain rule of mutual information, we have
\begin{align}
	I(f_1^{\rm ins}(\Xm_1); \Xm_1, \Xm_i)
	& = I(f_1^{\rm ins}(\Xm_1) ; \Xm_1) + I(f_1^{\rm ins}(\Xm_1); \Xm_i | \Xm_1) \nonumber \\
	& = I(f_1^{\rm ins}(\Xm_1) ; \Xm_i) + I( f_1^{\rm ins}(\Xm_1) ; \Xm_1 | \Xm_i ).
\end{align}
Moreover, since $f_1^{\rm ins}(\Xm_1)$ is a deterministic function of $\Xm_1$, we have $I(f_1^{\rm ins}(\Xm_1); \Xm_i | \Xm_1) = 0$, leading to $I(f_1^{\rm ins}(\Xm_1) ; \Xm_1) = I(f_1^{\rm ins}(\Xm_1) ; \Xm_i) + I( f_1^{\rm ins}(\Xm_1) ; \Xm_1 | \Xm_i )$.
Then, using the assumption $I(f_1^{\rm ins}(\Xm_1); \Xm_1) < \epsilon $, it follows that
\begin{align}
	\epsilon
	& > I(f_1^{\rm ins}(\Xm_1) ; \Xm_i) + I( f_1^{\rm ins}(\Xm_1) ; \Xm_1 | \Xm_i ) \nonumber \\
	& \overset{\rm (a)}{\geq} I(f_1^{\rm ins}(\Xm_1) ; \Xm_i) \nonumber \\
	& \overset{\rm (b)}{\geq} I(f_1^{\rm ins}(\Xm_1) ; f_i^{\rm FB}(\Xm_i)),
\end{align}
where the labeled inequalities follow from:
$\rm (a)$ the non-negativity of mutual information;
$\rm (b)$ the data processing inequality.
This concludes the proof of Proposition~\ref{prop:IB_sub_opt_f1}.

\subsection{Proof of Theorem~\ref{thm:LB_centro}}\label{app:lb_centro}

To prove Theorem~\ref{thm:LB_centro}, we leverage the reverse inequality of $M$-variable H\"older inequality~\citep[eq.~(2.8)]{seo2013generalized}. For the sake of completeness, we state the inequality in Lemma~\ref{lem:rev_Holder}.

\begin{lemma}[Reverse inequality of the $M$-variable H\"older inequality~\citep{seo2013generalized}]\label{lem:rev_Holder}
Consider $M$ sequences $(x_{i,j})_{j\in[n]},~i\in[M]$ of $n$ positive scalars such that for some $0<c_m\leq c_M<\infty$,
\begin{align}
	&0< c_m \leq x_{i,j} \leq c_M < \infty,~\forall i,j.
\end{align}
Then,
\begin{align}
	\prod_{i=1}^M \left( \sum_{j=1}^n x_{i,j} \right)^{\frac{1}{n}}
	& \leq \frac{(c_m+c_M)^2}{4c_mc_M} \sum_{j=1}^n \left(\prod_{i=1}^M x_{i,j} \right)^{\frac{1}{n}}.
\end{align}
\end{lemma}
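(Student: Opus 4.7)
The plan is to prove the inequality by reducing the $M$-variable statement to the two-variable (reverse Cauchy–Schwarz) inequality of Pólya–Szegő/Kantorovich type, and then to induct on $M$. The Kantorovich constant $(c_m+c_M)^2/(4c_mc_M)$ in the statement is exactly the one that appears as the bounded-ratio correction factor in the reverse Cauchy–Schwarz inequality, so the base case $M=2$ should emerge directly, while the inductive step must be arranged so that the constant does \emph{not} cumulate.

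First, I would normalize by setting $y_{i,j} = x_{i,j}^{1/n}$, so that $y_{i,j} \in [c_m^{1/n}, c_M^{1/n}]$ and the inequality becomes a statement comparing $\prod_{i=1}^M \bigl(\sum_j y_{i,j}^{n}\bigr)^{1/n}$ with $\sum_j \prod_{i=1}^M y_{i,j}$. For $M=2$, I would invoke the Pólya–Szegő inequality for sequences with bounded ratio: for $(a_j), (b_j)$ in $[\alpha, \beta]$,
\begin{align*}
\Bigl(\sum_j a_j^{2}\Bigr)^{1/2}\Bigl(\sum_j b_j^{2}\Bigr)^{1/2} \le \frac{\alpha+\beta}{2\sqrt{\alpha\beta}} \sum_j a_j b_j,
\end{align*}
and verify, after the $y_{i,j}=x_{i,j}^{1/n}$ substitution, that the induced constant simplifies exactly to $(c_m+c_M)^2/(4c_mc_M)$. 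This step is essentially a direct calculation with a Kantorovich-type identity such as $c_mc_M \le x(c_m+c_M-x)$ for $x\in[c_m,c_M]$, which, after summing over $j$ and using Cauchy–Schwarz, recovers the desired two-variable bound.

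For the inductive step from $M-1$ to $M$, I would group the first $M-1$ sequences into a single row of geometric means $g_j=\bigl(\prod_{i=1}^{M-1} x_{i,j}\bigr)^{1/n}$ and apply the two-variable inequality to $(g_j)$ and $(x_{M,j}^{1/n})$. The crucial observation is that the Kantorovich constant depends only on the ratio $c_M/c_m$, and this ratio is preserved (not amplified) under the geometric-mean operation when the constituents share the same bounds, so the same constant $(c_m+c_M)^2/(4c_mc_M)$ can be applied at every inductive step without being raised to a power. An alternative, cleaner route is a direct extremization argument: the ratio LHS/RHS is a rational function of the $x_{i,j}$ and, by convexity in each coordinate, attains its maximum at an extremal configuration where each $x_{i,j}\in\{c_m,c_M\}$; one then evaluates the extremal ratio combinatorially.

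The main obstacle is precisely the constant-preservation issue in the induction: a naive application of the two-variable bound at each of the $M-1$ merging steps would yield $\bigl[(c_m+c_M)^2/(4c_mc_M)\bigr]^{M-1}$, not a single factor. Overcoming this requires either (i) showing that the intermediate geometric-mean sequences inherit the same Kantorovich constant so that all but one of the factors collapse, or (ii) bypassing induction entirely via the extremization route sketched above. Either way, the key input is the scale- and power-invariance of the Kantorovich constant under the bounded-ratio constraint, which is the essential content of the generalized reverse Hölder inequality of \citet{seo2013generalized}.
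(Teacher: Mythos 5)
First, a point of reference: the paper does not prove Lemma~\ref{lem:rev_Holder} at all---it is imported from \citet{seo2013generalized} and stated ``for the sake of completeness''---so there is no in-paper argument to compare yours against, and your attempt must stand on its own. On its own terms it does not close, for three reasons. Most decisively, the inequality as transcribed is false whenever $M>n$: take every $x_{i,j}$ equal and $c_m=c_M$, so the constant is $1$, while the left side is $n^{M/n}$ and the right side is $n$. Neither of your proposed routes uses any relation between $M$ and $n$, so no amount of care in the base case or the induction can rescue the plan; a correct treatment must either invoke $M\le n$ (which does hold in the paper's application, where $n=|\Ic_B|$ is the batch size and $M$ the number of modalities) or work with the conjugate-exponent form of the reverse H\"older inequality.

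Second, the base case does not match the statement. After your substitution $y_{i,j}=x_{i,j}^{1/n}$, the two-sequence case reads $\bigl(\sum_j y_{1,j}^{n}\bigr)^{1/n}\bigl(\sum_j y_{2,j}^{n}\bigr)^{1/n}\le K\sum_j y_{1,j}y_{2,j}$, a reverse H\"older with exponent pair $(n,n)$; the P\'olya--Szeg\H{o} inequality you quote is the $(2,2)$ case and simply does not apply for $n\neq 2$ (and even for $n=2$ the constant it produces is $\tfrac{1}{2}\bigl((c_M/c_m)^{1/4}+(c_m/c_M)^{1/4}\bigr)$, not ``exactly'' $\tfrac{(c_m+c_M)^2}{4c_mc_M}$---it happens to be smaller, so that instance survives, but the asserted verification is not done). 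Third, the inductive step is not repaired: you correctly diagnose that merging rows one at a time yields $K^{M-1}$, but ``the geometric-mean row inherits the same bounds'' does not make the extra factors collapse---each application of a two-variable reverse inequality contributes its own multiplicative constant. The alternative extremization route is only named: the coordinate-wise convexity of $\log(\mathrm{LHS}/\mathrm{RHS})$ is asserted without proof (the right side is the logarithm of a sum of concave powers of affine functions of each $x_{i,j}$, so neither convexity nor concavity is obvious), and the evaluation over vertex configurations is not carried out. A workable repair is to prove the genuine $M$-variable reverse H\"older $\prod_{i}\bigl(\sum_j x_{i,j}\bigr)^{1/M}\le K\sum_j\bigl(\prod_i x_{i,j}\bigr)^{1/M}$ (whose $M=2$ case \emph{is} P\'olya--Szeg\H{o}), and then pass to the $1/n$ exponents under the hypothesis $M\le n$ using $K\ge 1$ together with the subadditivity $\sum_j a_j^{M/n}\ge\bigl(\sum_j a_j\bigr)^{M/n}$ for $M/n\le 1$.
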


Now we start by writing the summation of InfoNCE losses for each $f_l^{(t)}( \xs_{l,k}^\prime ) , l\in[M]$ to $f_i(\Xm_i)$ as
\begin{align}\label{eq:thm_centro_proof1}
	\sum_{l=1}^M  I_{\rm NCE}(f_l(\Xm_l^\prime); f_i(\Xm_i) | \tau)  
	& = - \frac{1}{|\Ic_B|}\sum_{k=1}^{|\Ic_B|} \sum_{l=1}^M \log \frac{ \exp\left( \frac{f_l^\top(\xs_{l,k}^\prime) f_i(\xs_{i,k}) }{\tau} \right) }{ \sum_{j\in\Ic_B} \exp\left( \frac{f_l^\top(\xs_{l,k}^\prime) f_i(\xs_{i,j}) }{\tau} \right)} .
\end{align}
Then, the inner summation in~\eqref{eq:thm_centro_proof1} is bounded as
\begin{align}\label{eq:thm_centro_proof2}
    & \sum_{l=1}^M \log \frac{ \exp\left( \frac{f_l^\top(\xs_{l,k}^\prime) f_i(\xs_{i,k}) }{\tau} \right) }{ \sum_{j\in\Ic_B} \exp\left( \frac{f_l^\top(\xs_{l,k}^\prime) f_i(\xs_{i,j}) }{\tau} \right)} \nonumber \\
	& = \frac{1}{\tau} \sum_{l=1}^M f_l^\top (\xs_{l,k}^\prime) f_i(\xs_{i,k})  - \log  \prod_{l=1}^M \sum_{j\in\Ic_B} \exp\left( \frac{ f_l^\top(\xs_{l,k}^\prime) f_i(\xs_{i,j}) }{\tau} \right) \nonumber \\
	& \overset{\rm (a)}{\geq}   \frac{1}{\tau} \sum_{l=1}^M f_l^\top(\xs_{l,k}^\prime) f_i(\xs_{i,k}) - \log  \left( C_{\Fc,k,i} \sum_{j\in\Ic_B}  \prod_{l=1}^M  \exp\left( \frac{ f_l^\top(\xs_{l,k}^\prime) f_i(\xs_{i,j}) }{\tau |\Ic_B|} \right)  \right)^{|\Ic_B|} \nonumber \\
	& \overset{\rm (b)}{=} \frac{M}{\tau} \as_k^\top f_i(\xs_{i,k})  - |\Ic_B| \log \sum_{j\in\Ic_B}  \exp\left( \frac{ M \as_k^\top f_i(\xs_{i,j}) }{\tau |\Ic_B|} \right) - |\Ic_B| \log  C_{\Fc,k,i}  \nonumber \\
	& = |\Ic_B| \log \exp \left( \frac{M  \as_k^\top f_i(\xs_{i,k}) }{\tau |\Ic_B|} \right)  - |\Ic_B| \log \sum_{j\in\Ic_B}  \exp\left( \frac{ M \as_k^\top f_i(\xs_{i,j}) }{\tau |\Ic_B|} \right) - |\Ic_B| \log  C_{\Fc,k,i}  \nonumber \\
	& = |\Ic_B| \log \frac{\exp\left( \frac{M  \as_k^\top f_i(\xs_{i,k}) }{\tau |\Ic_B|} \right)}{\sum_{j\in\Ic_B}  \exp\left( \frac{ M \as_k^\top f_i(\xs_{i,j}) }{\tau |\Ic_B|} \right)} - |\Ic_B| \log  C_{\Fc,k,i},
\end{align}
where the labeled (in)equalities follow from:
$\rm (a)$ Lemma~\ref{lem:rev_Holder} and $C_{\Fc,k,i} = \frac{ (c_{\Fc,k,i}^{\min} + c_{\Fc,k,i}^{\max})^2 }{4c_{\Fc,k,i}^{\min}c_{\Fc,k,i}^{\max} } $ with
\begin{align}\label{eq:thm_centro_proof4}
	& c_{\Fc,k,i}^{\min} = \min_{\ell\in[M],j\in\Ic_B} \exp\left( \frac{ f_l^\top(\xs_{l,k}^\prime) f_i(\xs_{i,j}) }{\tau} \right),~\text{ and } \nonumber \\
	& c_{\Fc,k,i}^{\max} = \max_{\ell\in[M], j\in\Ic_B} \exp\left( \frac{ f_l^\top(\xs_{l,k}^\prime) f_i(\xs_{i,j}) }{\tau} \right);
\end{align}
and $\rm (b)$ the definition of anchor embedding~\eqref{eq:anchor}.
Substituting~\eqref{eq:thm_centro_proof2} into~\eqref{eq:thm_centro_proof1} gives
\begin{align}\label{eq:thm_centro_proof5}
    \sum_{l=1}^M  I_{\rm NCE}(f_l(\Xm_l^\prime); f_i(\Xm_i) | \tau) 
    & \leq - \frac{1}{|\Ic_B|}\sum_{k=1}^{|\Ic_B|} 
    \left[|\Ic_B| \log \frac{\exp\left( \frac{M  \as_k^\top f_i(\xs_{i,k}) }{\tau |\Ic_B|} \right)}{\sum_{j\in\Ic_B}  \exp\left( \frac{ M \as_k^\top f_i(\xs_{i,j}) }{\tau |\Ic_B|} \right)} - |\Ic_B| \log  C_{\Fc,k,i} \right] \nonumber \\
    & =  
    |\Ic_B| I_{\rm NCE}\left(\Am;f_i(\Xm_i) \;\middle |\; \frac{\tau|\Ic_B|}{M} \right) + \sum_{k=1}^{|\Ic_B|} \log  C_{\Fc,k,i} .
\end{align}
Rearranging~\eqref{eq:thm_centro_proof5} and setting $\tilde{\tau} = \frac{\tau |\Ic_B|}{M}$ in~\eqref{eq:thm_centro_proof4} and~\eqref{eq:thm_centro_proof5} yield
\begin{align}
    I_{\rm NCE}\left(\Am;f_i(\Xm_i) \;\middle |\; \tilde{\tau} \right)
    & \geq \frac{1}{|\Ic_B|}\sum_{l=1}^M  I_{\rm NCE}\left(f_l(\Xm_l^\prime); f_i(\Xm_i) \;\middle |\; \frac{\tilde{\tau}M}{|\Ic_B|} \right) - \frac{1}{|\Ic_B|}\sum_{k=1}^{|\Ic_B|} \log C_{\Fc,k,i},
\end{align}
which concludes the proof of Theorem~\ref{thm:LB_centro}.

\section{Experiments}\label{app:exp}

\begin{center}
\begin{minipage}{.7\linewidth}
\begin{algorithm}[H]
   \caption{\modelname}
   \label{alg:centro} 
\begin{algorithmic}
    \State Initialize encoders $f_1^{(0)},f_2^{(0)},\cdots,f_M^{(0)}$.
    \For {$t=0,1,\ldots,t_{\max}$}
        \State Sample $B$ from multi-modal datasets $\{\Dc_i\}_i$.
        \State Generate anchor embeddings $\{\as_j\}_{j\in\Ic_B}$ using~\eqref{eq:anchor}
        \For {$i=1,\ldots,M$}
            \State Minimize $\Lc_{\rm CB}(f_i^{(t+1)}|\tau)$ in~\eqref{eq:CB_obj}
        \EndFor
    \EndFor
\end{algorithmic} 
\end{algorithm}
\end{minipage}
\end{center}

In this section, we provide experimental details and additional results.
Algorithmic expression of \modelname is given in Algorithm~\ref{alg:centro}.

\subsection{Experiments with synthetic datasets}\label{app:exp_synth}


\paragraph{Synthetic datasets.} We employ a latent variable model~\citep{bishop2006pattern} for generating synthetic multi-modal datasets. A latent variable model is a statistical model for data $\Xm\in\RR^{d_x}$, under which $\Xm$ is generated according to a conditional probability distribution $P_{\Xm|\Zm}$, where $\Zm\in\RR^{d_z}$ is the latent variable. In terms of the representation learning framework, $\Zm$ can be seen as a low dimensional representation of $\Xm$. We assume that the class label $\Ym\in[K]$ and the latent variable $\Zm$ are jointly distributed according to $P_{\Zm,\Ym}$. 
In our setting, we exploit Gaussian mixture model (GMM)~\citep{bishop2006pattern} for the latent variable $\Zm$, and we generate $M$ modalities $\Xm_i = g_i(\Zm) + \Nm,i\in[M]$ with random noise $\Nm$ and some non-linear projections $g_i: \RR^{d_z}\to\RR^{d_x}$. We choose the projections in a way such that each model can be ranked in ascending order, i.e., $\Xm_1$ is the worst, and $\Xm_4$ is the best modality in terms of their inherent correlation with the latent variable. The class label $\Ym$ is set to the component id of GMM. 

In particular, the PDF of~$\Zm$ is defined as follows:
\begin{align}\label{eq:gmm}
	p_\Zm(\zs)
	& = \prod_{y=1}^K \pi_y \Nc(\zs;\muv_y, \Sigmam_y),
\end{align}
where $K$ is the number of mixture components, $\pi_y = \Pr(\Ym = y)$ is the component prior probability, and $\Nc(\zs;\muv_y, \Sigmam_y)$ denotes Gaussian PDF with mean $\muv_y\in\RR^{d_z}$ and covariance matrix $\Sigmam_y\in\RR^{d_z\times d_z}$. This leads to the conditional PDF of $\Zm$ as $p_{\Zm|\Ym}(\zs|y) = \Nc(\zs;\muv_y, \Sigmam_y)$. 

Once a latent variable $\zs$ is generated from GMM in~\eqref{eq:gmm}, we generate data samples $(\xs_{i,1},\xs_{i,2},\cdots,\xs_{i,N})$ for $i$-th modality using the conditional PDFs of $\Xm_i$ given $\zs$, denoted by $p_{\Xm_i|\Zm}(\xs_i|\zs)$. 
Specifically, we use the model $\Xm_i = g_i(\Zm_i) + \Nm$, where $g_i: \RR^{d_z} \to \RR^{d_x}$ is a non-linear projection from latent space to observation space, and $\Nm\sim \Nc(\zerov, I_{d_x})$ is Gaussian noise with zero-mean and identity covariance matrix. To make the inherent correlation between $\Xm_i$ and $\Zm_i$ different among modalities, we choose $g_i$ such that
\begin{align}
    g_i(\Zm) 
    & = \Theta^{(2)}_i {\rm sigmoid}\left( \Theta^{(1)}_{i} \Zm\right),
\end{align}
where ${\rm sigmoid}(x) = \frac{1}{1+e^{-x}}$ is applied element-wise, and $\Theta_i^{(1)} \in \RR^{d_x \times d_z}$ and $\Theta_i^{(2)}\in \RR^{d_x \times d_x}$ are matrices randomly generated from Gaussian distribution. Moreover, after $\Theta_i^{(1)},i\in[M]$ are generated, we set arbitrary columns of them all zero, so that the number of all zero columns decreases in $i$. For example, $60\%$ of columns of $\Theta_1^{(1)}$ are all-zero, while only $10\%$ of columns of $\Theta_M^{(1)}$ are all-zero. This enables approximate control the correlation between $\Xm_i$ and $\Zm$, providing estimates of best modality ($\Xm_M$) or worst modality ($\Xm_1$). 
To have meaningful labels for this latent model, which requires for downstream tasks, we set the labels $\Ym$ being the component index in GMM. In particular, since there are $K$ components in GMM~\eqref{eq:gmm}, there exist $K$ categories in $\Ym$.
We conduct experiments with three different synthetic datasets by setting $M=4,6,8$.
For all synthetic datasets, we fix $d_x = 16$, $d_z=8$, and $K=50$. 

\paragraph{Experiment details.} We initialize two different versions of backbones for all modalities, where the first is a random backbone (highlighted by (rnd) in figures), and the second is a backbone pretrained with InfoNCE loss. For each backbone, we use a simple multilayer perceptron (MLP). 
Comparing the results with these two versions of backbone provides how much both \faname and \modelname are robust to backbone quality.
Given the backbones for $M$ modalities, we align the corresponding embedding spaces using either \faname with anchor $\Xm_i$ (denoted by $\Xm_i$-B in figures) or \modelname (denoted by CB in figures). Finally, with the encoders aligned by either \faname or \modelnamex, we evaluate classification accuracy as a measure of representation quality. We use a simple MLP for the classifier. To distinguish between accuracy with embeddings from a single modality and the one with concatenated embeddings from all modalities, we denote by acc($\Zm_i$) the accuracy with embeddings from $i$-th modality and by acc(All) the accuracy with embeddings from all modalities. Specifically, for acc(All), we fuse the multi-modal embeddings using MLP layers. Therefore, the accuracy of the multi-modal case without binding methods (e.g., $\times$ method and the multi-modal column in Table~\ref{tab:M4}) can be considered a naive baseline for multi-modal learning.

\begin{figure}[t]
    \centering
    \begin{subfigure}[b]{0.24\linewidth}
        \centering
        \includegraphics[width=\textwidth]{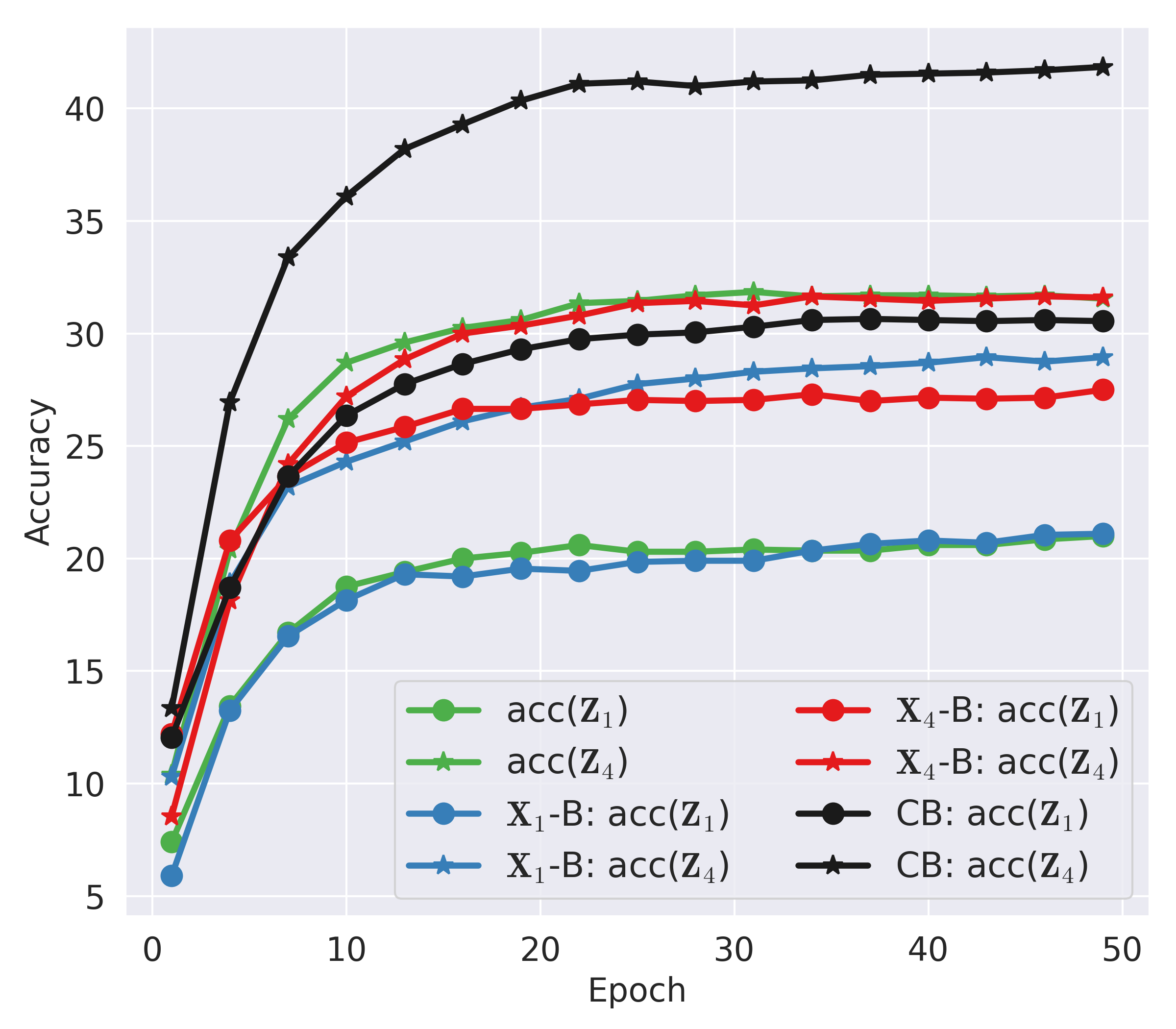}
        \caption{Comparison.}
        \label{subfig:comp}
    \end{subfigure}
    \hfill
    \begin{subfigure}[b]{0.24\linewidth}
        \centering
        \includegraphics[width=\textwidth]{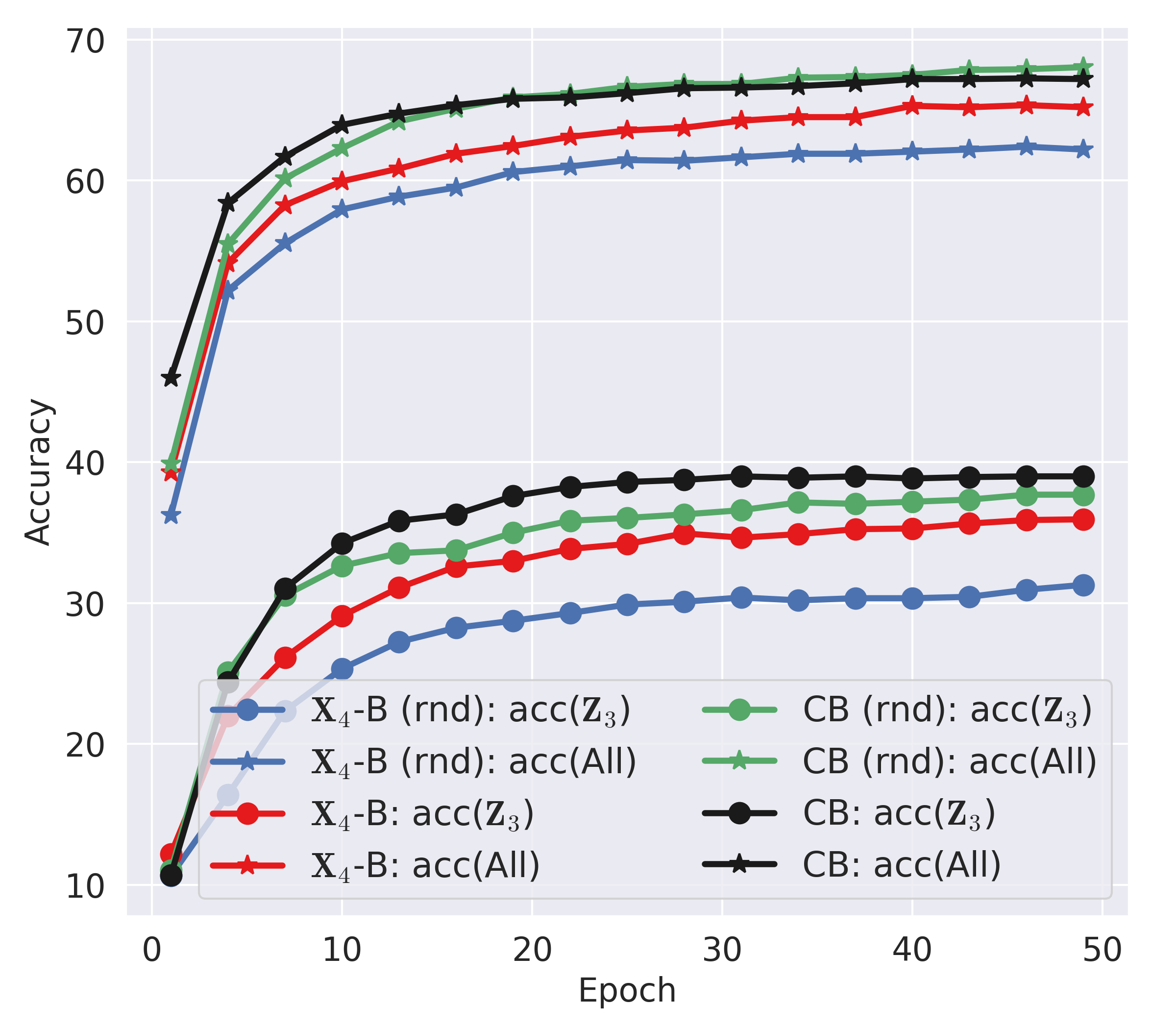}
        \caption{Random vs. pretrained.}
        \label{subfig:rnd}
    \end{subfigure}
    \hfill
    \begin{subfigure}[b]{0.24\linewidth}
        \centering
        \includegraphics[width=\textwidth]{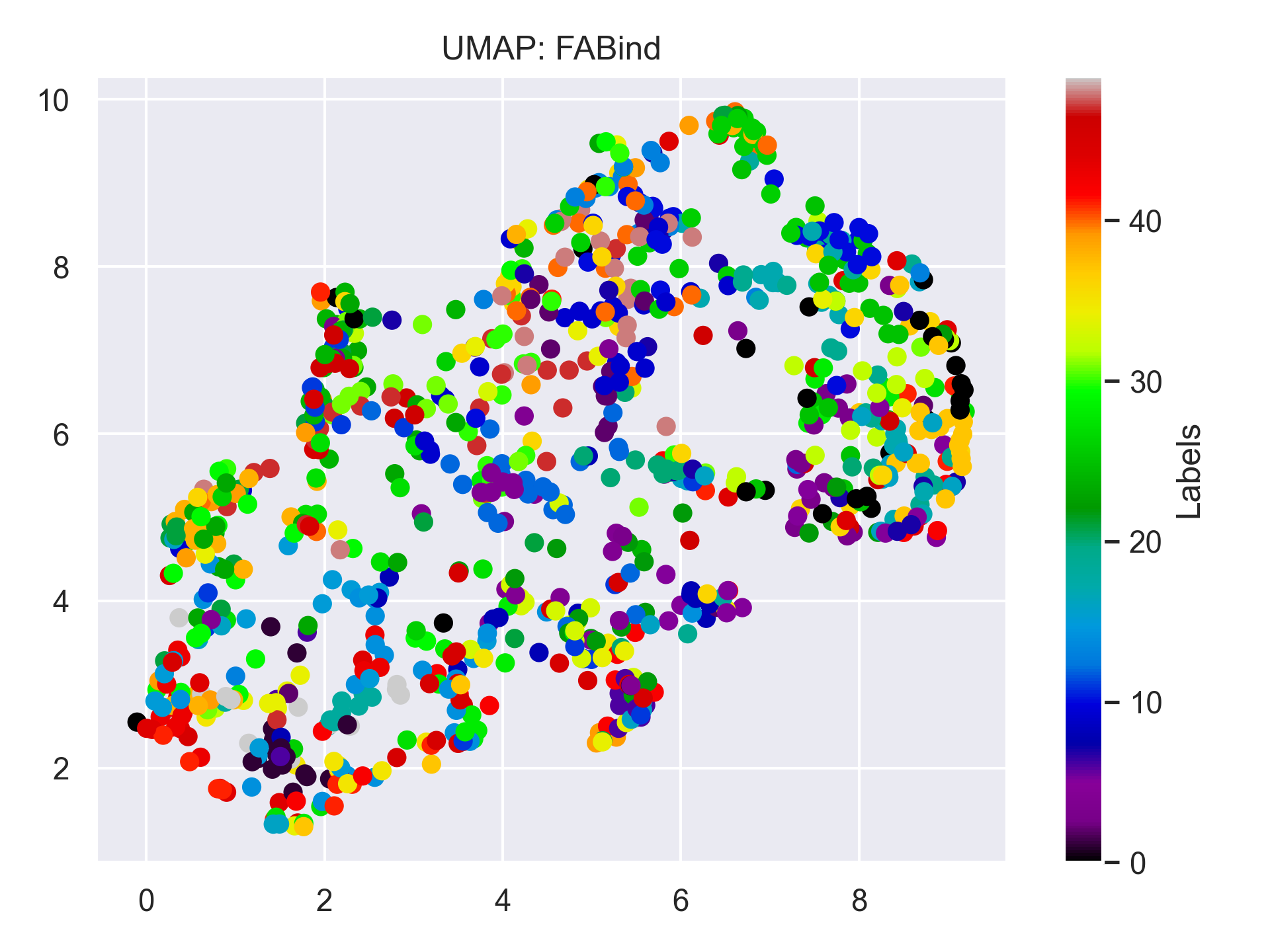}
        \caption{\fanamex.}
        \label{subfig:FABind_umap}
    \end{subfigure}
    \hfill
    \begin{subfigure}[b]{0.24\linewidth}
        \centering
        \includegraphics[width=\textwidth]{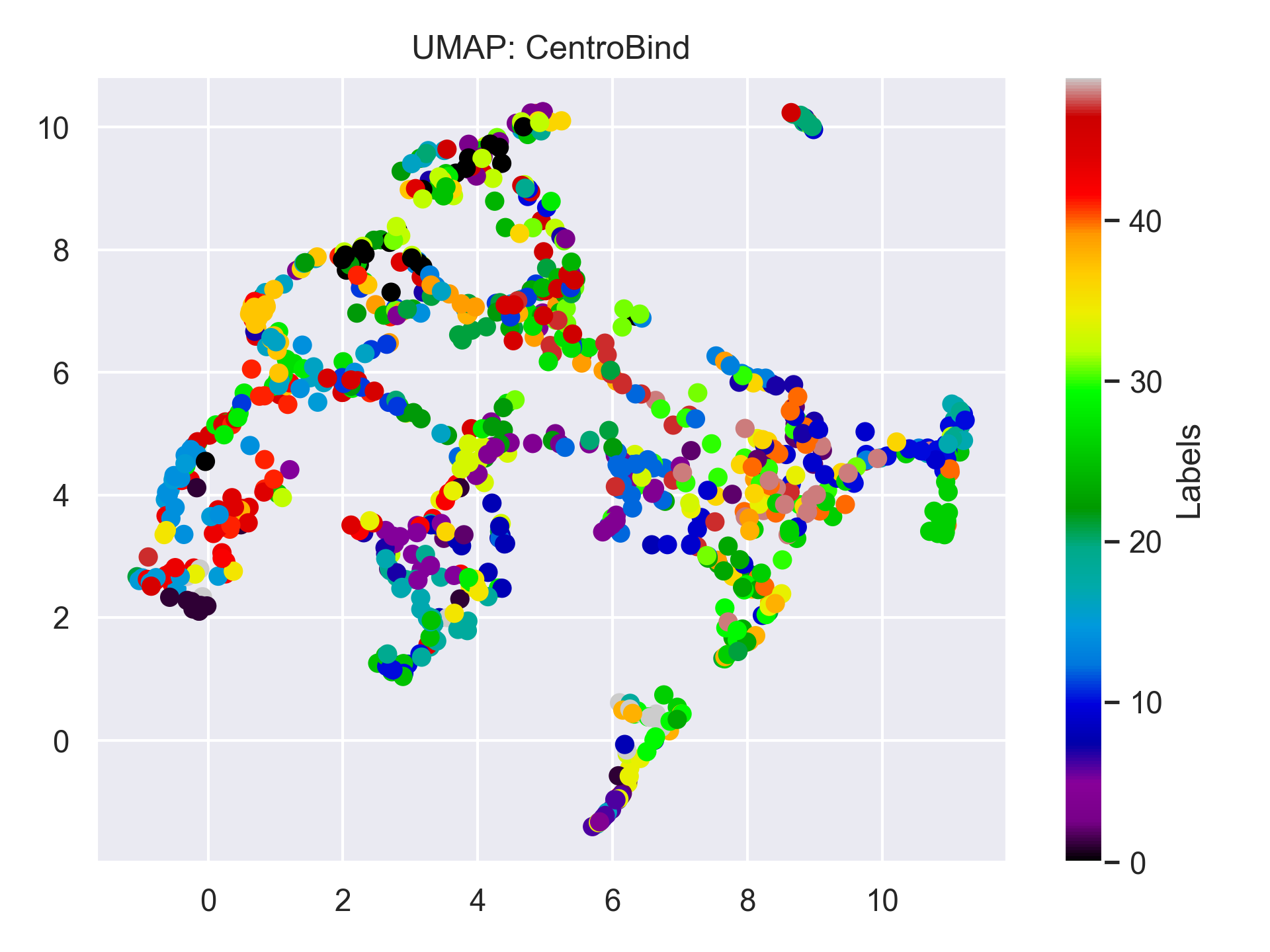}
        \caption{\modelnamex.}
        \label{subfig:CB_umap}
    \end{subfigure}
    \caption{(a) and (b): Accuracy as a measure of the representation space quality. Abbreviation: $\Xm_i$-B or CB: applying \faname with anchor $\Xm_i$ or applying CentroBind; acc$(\Zm_i)$ or acc(All): accuracy of $\Zm_i$ or of concatenated embeddings $(\Zm_1,\cdots,\Zm_M)$; (rnd): if random backbones are used. (c) and (d): Representation visualization via UMAP.}
    \label{fig:synth_M4}
\end{figure}

\paragraph{Comparison with baseline methods.}

Figure~\ref{fig:synth_M4} shows the validation accuracy of each method (without binding, \faname with anchor $\Xm_1$, \faname with anchor $\Xm_4$, and \modelnamex). 
For the same experimental setting, Figure~\ref{fig:fig2_all} includes additional accuracy curves for $acc(\Zm_1)$ and $acc(All)$. For better readability, the corresponding accuracy is provided in Table~\ref{tab:M4}.

We conduct experiments with two types of backbone encoders: randomly initialized backbones and pre-trained backbones. For each type, we extract embeddings using four different methods: representations without binding (denoted by $\times$ in Table~\ref{tab:extreme1}), \faname with anchor modality $\Xm_1$ (denoted as \fanamex-$\Xm_1$), \faname with anchor modality $\Xm_4$ (denoted as \fanamex-$\Xm_4$), and \modelnamex. The embedding quality is then evaluated using classification accuracy. Specifically, we train five different decoders for each case: four uni-modal decoders (one for each modality) and one multi-modal decoder for the concatenated embeddings of all modalities.
The results show that \modelname outperforms the other baseline methods. Notably, \modelname demonstrates superior performance in the case of randomly initialized backbones, indicating robustness to poor backbone quality.

Additional experimental results on synthetic datasets with $M=6$ and $M=8$ modalities are presented in Figure~\ref{fig:synth_M6} and Figure~\ref{fig:synth_M8}, respectively. These results exhibit similar trends to those observed with $M=4$ modalities.
These experiments verify that \modelname is capable of handling a large number of modalities effectively.

\begin{figure}[h]
    \centering
    \begin{subfigure}[b]{0.49\linewidth}
        \centering
        \includegraphics[width=\textwidth]{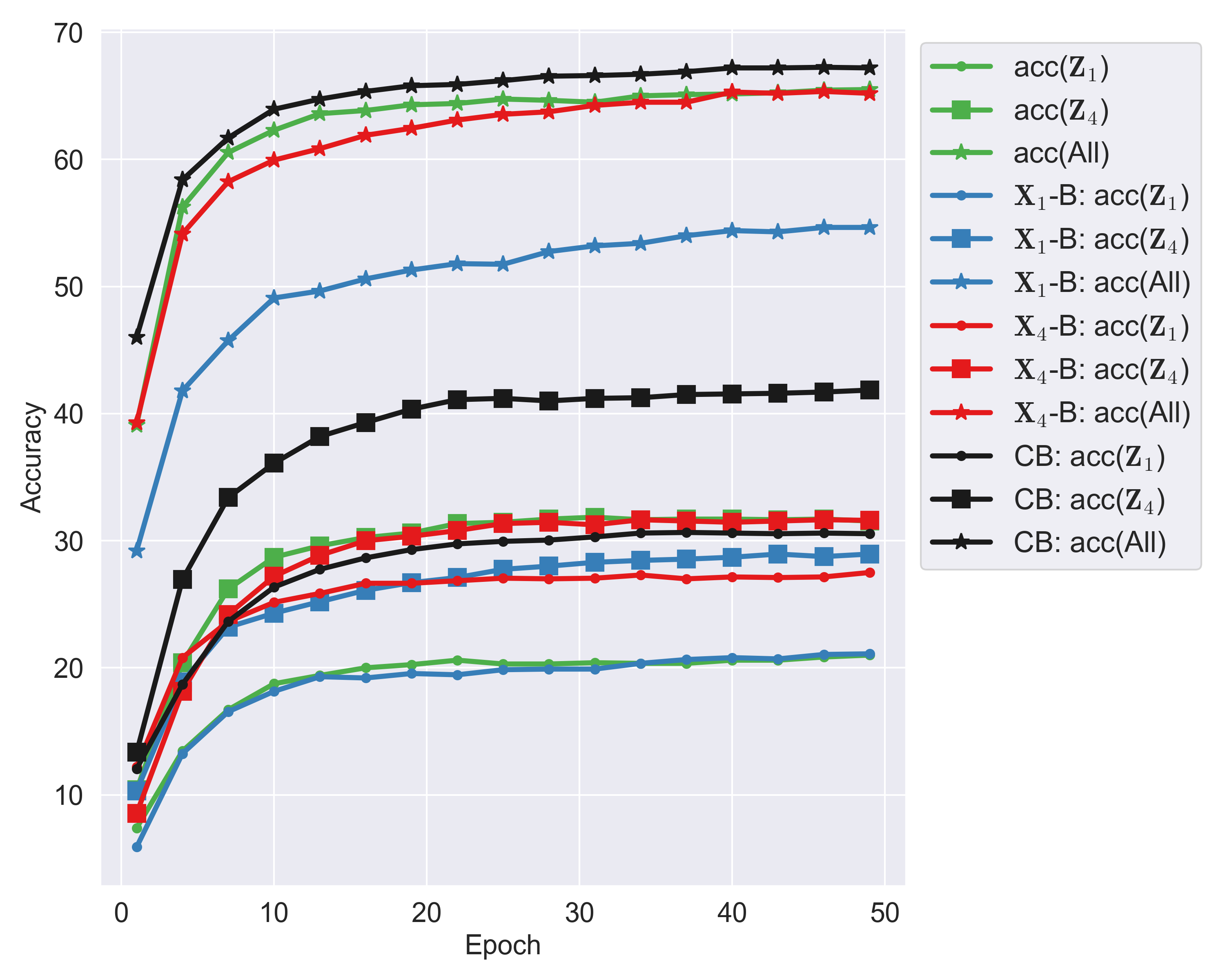}
        \caption{Pre-trained backbones are used.}
        \label{subfig:fig2_all}
    \end{subfigure}
    \hfill
    \begin{subfigure}[b]{0.49\textwidth}
        \centering
        \includegraphics[width=\textwidth]{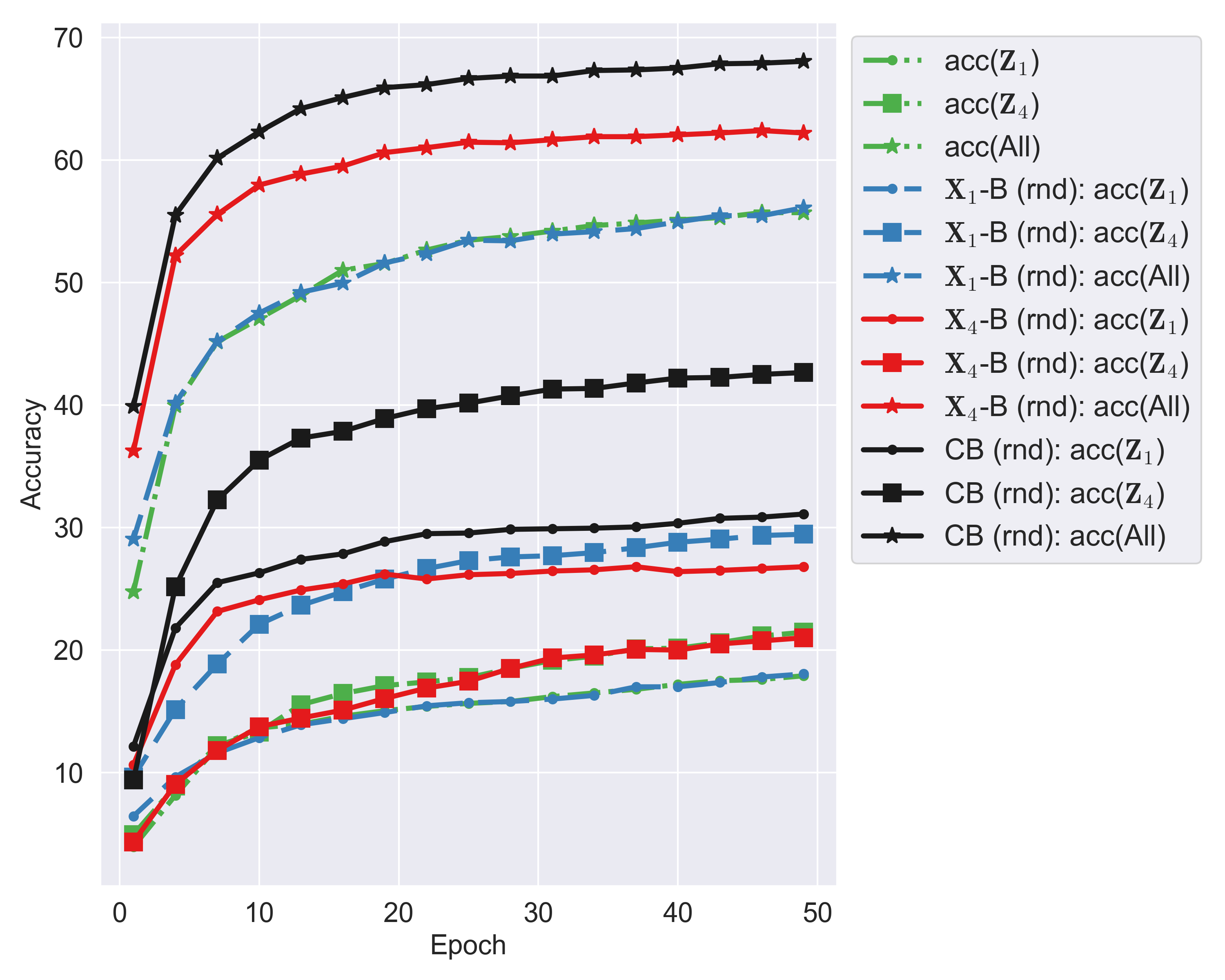}
        \caption{Randomly initialized backbones are used.}
        \label{subfig:fig2_all_ran}
    \end{subfigure}
    \caption{
    Accuracy as a measure of the representation space quality. Abbreviation: $\Xm_i$-B or CB: applying \faname with anchor $\Xm_i$ or applying CentroBind; acc$(\Zm_i)$ or acc(All): accuracy of $\Zm_i$ or of concatenated embeddings $(\Zm_1,\cdots,\Zm_M)$; (rnd): if random backbones are used.}
    \label{fig:fig2_all}
\end{figure}

\begin{table}[t]
\centering
\caption{Classification accuracies presented in Figure~\ref{fig:synth_M4}.}
\begin{adjustbox}{width=.7\textwidth,center}
\begin{tabular}{ccccccccc}
\toprule
 \multirow{2}{*}{\textbf{Backbone}} & \multirow{2}{*}{\textbf{Method}} & \multicolumn{4}{c}{\textbf{uni-modal}} & \multicolumn{1}{c}{\textbf{multi-modal}} \\
\cmidrule(rl){3-6} \cmidrule(rl){7-7} 
 &  & $\Xm_1$ & $\Xm_2$ & $\Xm_3$ & $\Xm_4$ & $\Xm_1,\cdots,\Xm_4$  \\ 
\midrule
\multirow{4}{*}{Pre-trained} & $\times$ & 0.2166 & 0.2878 & 0.3536 & 0.3923 & 0.6985   \\
& \fanamex-$\Xm_1$ & 0.2180 & 0.2736 & 0.3210 & 0.2999 & 0.5541  \\
& \fanamex-$\Xm_4$ & 0.2483 & 0.3349 & \bf{0.4207} & 0.3896  & \bf{0.7024}  \\
& \modelname & \bf{0.2540} & \bf{0.3433} & 0.4162 & \bf{0.4559} & 0.6974 \\
\midrule
\multirow{4}{*}{Random} & $\times$ & 0.2109 & 0.2472 & 0.2597 & 0.2815 & 0.6648   \\
& \fanamex-$\Xm_1$ & 0.2119 & 0.2587 & 0.3034 & 0.3081 & 0.5502  \\
& \fanamex-$\Xm_4$ & 0.2447 & 0.3076 & 0.3826 & 0.2813  & 0.6742  \\
& \modelname & \bf{0.2582} & \bf{0.3392} & \bf{0.4224} & \bf{0.4649} & \bf{0.7006} \\
\bottomrule
\end{tabular}
\end{adjustbox}
\label{tab:M4}
\end{table}

\begin{figure}[h]
    \centering
    \begin{subfigure}[b]{0.45\textwidth}
        \centering
        \includegraphics[width=\textwidth]{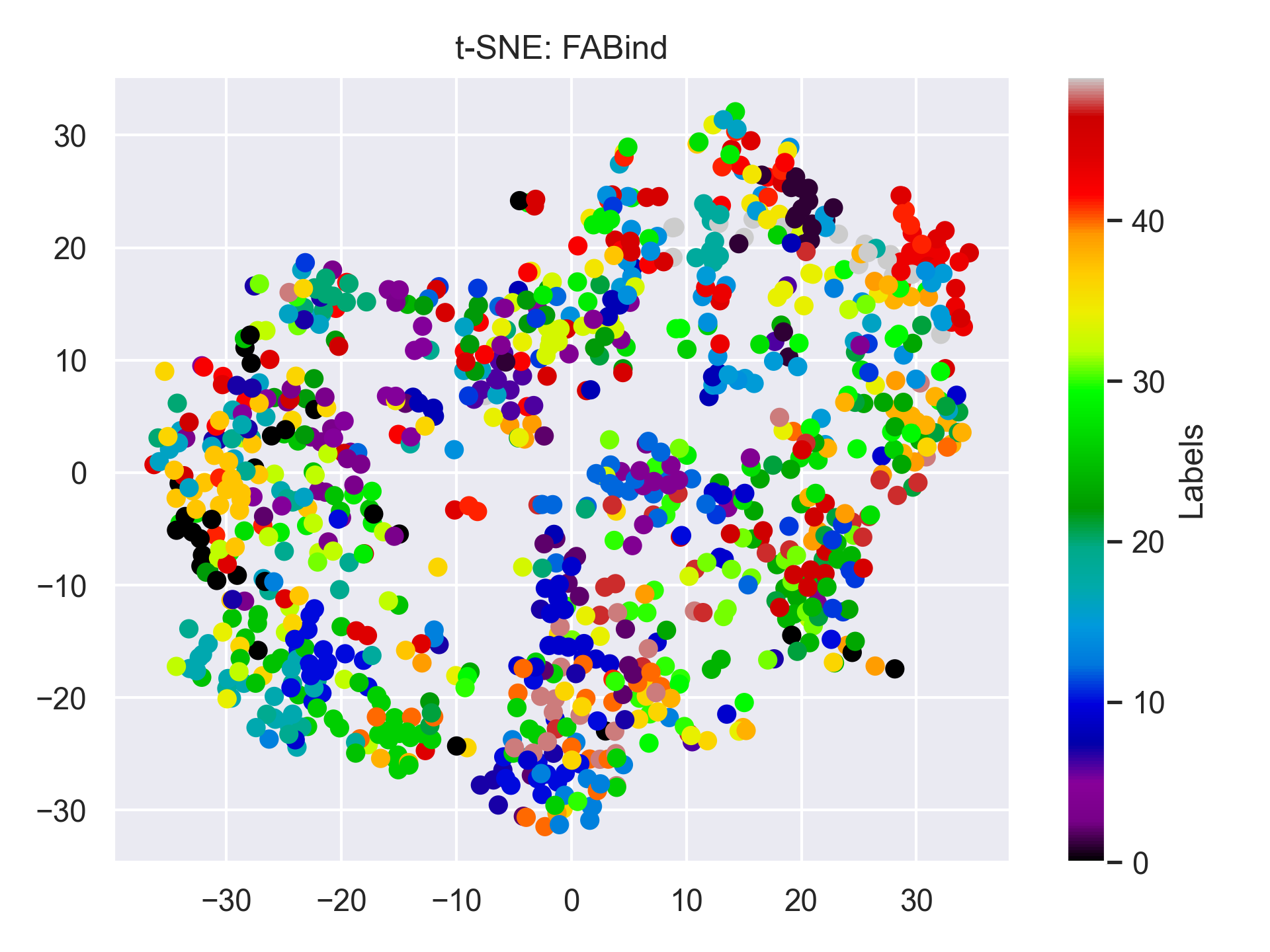}
        \caption{t-SNE: \fanamex.}
        \label{subfig:FABind_tSNE}
    \end{subfigure}
    \hfill
    \begin{subfigure}[b]{0.45\textwidth}
        \centering
        \includegraphics[width=\textwidth]{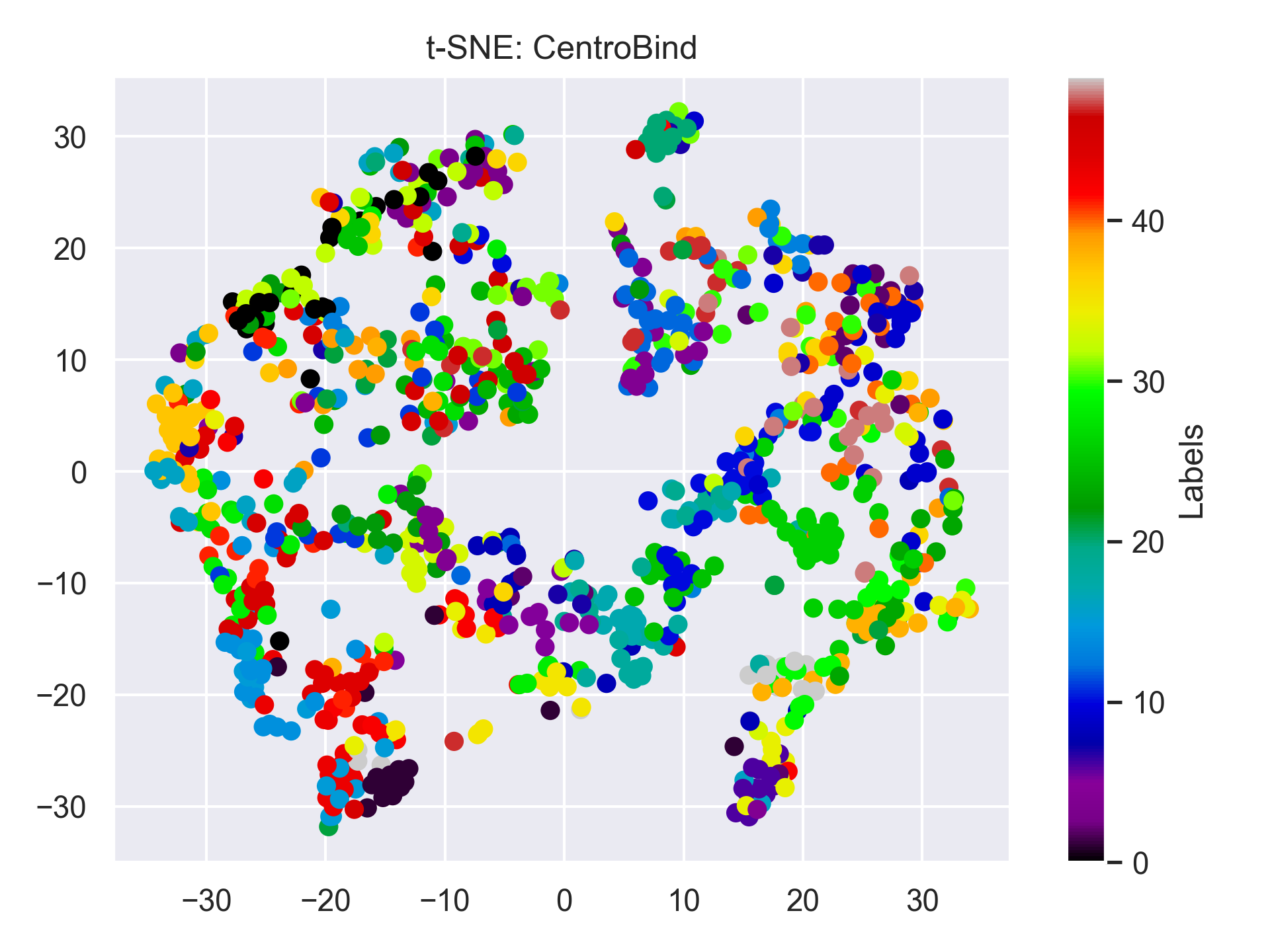}
        \caption{t-SNE: \modelnamex.}
        \label{subfig:CB_tSNE}
    \end{subfigure}
    \\
    \begin{subfigure}[b]{0.45\textwidth}
        \centering
        \includegraphics[width=\textwidth]{figures/rebuttal_seed12_IB_UMAP.png}
        \caption{UMAP: \fanamex.}
        \label{subfig:FABind_tSNE}
    \end{subfigure}
    \hfill
    \begin{subfigure}[b]{0.45\textwidth}
        \centering
        \includegraphics[width=\textwidth]{figures/rebuttal_seed12_CB_UMAP.png}
        \caption{UMAP: \modelnamex.}
        \label{subfig:CB_tSNE}
    \end{subfigure}
    \caption{
    Representation visualization via t-SNE and UMAP.}
    \label{fig:rep_plot_all}
\end{figure}

\paragraph{Representation visualization.}
Figure~\ref{fig:rep_plot_all} presents t-SNE~\citep{tSNE} and UMAP~\citep{umap} visualizations of embeddings generated by \faname and \modelnamex. For this visualization, we use synthetic datasets with $4$ modalities, ensuring that each modality is equally informative, and plot the embeddings for $\Xm_1$. \faname is anchored at $\Xm_4$, and both binding methods utilize pre-trained backbones.

In both t-SNE and UMAP visualizations, \modelname produces more clustered representations, whereas \faname results in more scattered embeddings. These findings validate our analysis that \modelname creates a superior representation space by effectively learning both intra and shared information.

\begin{figure}
  \begin{center}
    \includegraphics[width=0.60\textwidth]{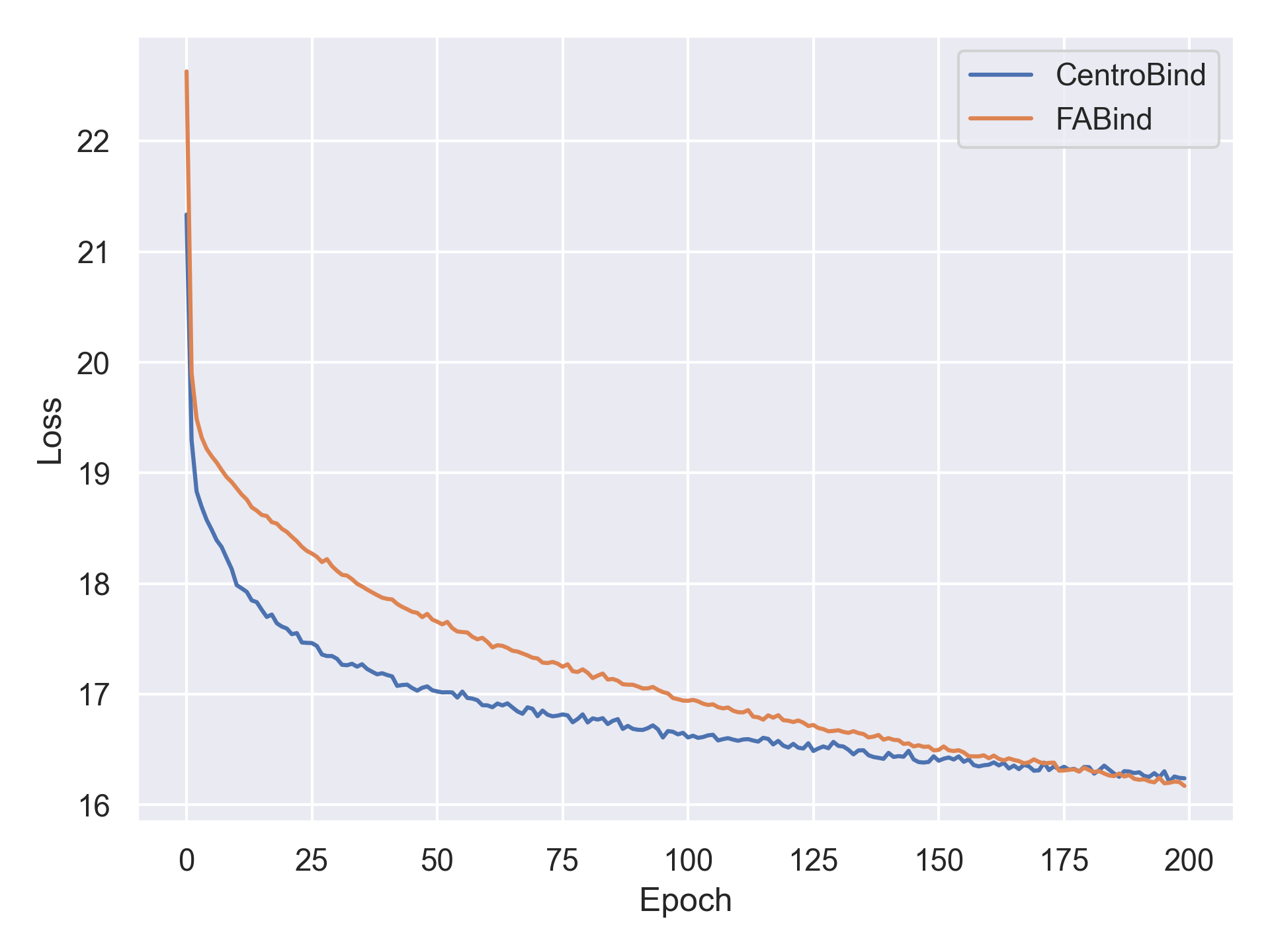}
  \end{center}
  \caption{Training loss.}
  \label{fig:convergence}
\end{figure}

\paragraph{Convergence and stability analysis.}
The convergence rate of \modelname may differ from that of \faname due to the replacement of the fixed anchor with a dynamic anchor. In Figure~\ref{fig:convergence}, we plot the loss curves of \modelname and \faname during training. The results show that the loss of \modelname saturates earlier than that of \fanamex. We attribute this to the fact that the centroid serves as a minimizer of embeddings in terms of Euclidean distance, making it easier to converge embeddings to their centroid compared to converging them to one specific embedding.

The plot also reveals a crossover point where the loss curves intersect. We believe this occurs due to the number of InfoNCE losses optimized by \modelname and \fanamex. Specifically, with $M$ modalities, \modelname minimizes $M$ InfoNCE losses, while \faname minimizes $M-1$ InfoNCE losses. This results in a smaller loss for \faname when the encoders are well-trained, which explains the crossover point observed in Figure~\ref{fig:convergence}.

\paragraph{Temperature sensitivity.}
We examine the effect of temperature parameter $\tau$ in InfoNCE loss by evaluating classification accuracy on the GMM synthetic dataset with different $\tau\in\{0.07,0.3,0.7\}$. Figure~\ref{fig:synth_temp} displays the classification accuracy for each temperature setting. Three different $\tau$ yield similar performance gap between \modelname and \faname, implying that the advantage of \modelname is robust to hyperparameter. This further strengthens our claim that \modelname overcomes the inherent limitation of \fanamex.

\paragraph{Complexity.}

CentroBind introduces only one additional InfoNCE term per batch—the interaction between the learned centroid and the modality embeddings—so computational complexity rises merely from $O(M)$ to $O(M+1)$ for $M$ modalities. In memory, it stores a single extra vector (the current-batch centroid), adding a negligible constant overhead. Consequently, training and inference costs remain practically unchanged from FABind.

\begin{figure}[t]
    \centering
    \begin{subfigure}[b]{0.3\linewidth}
        \centering
        \includegraphics[width=\textwidth]{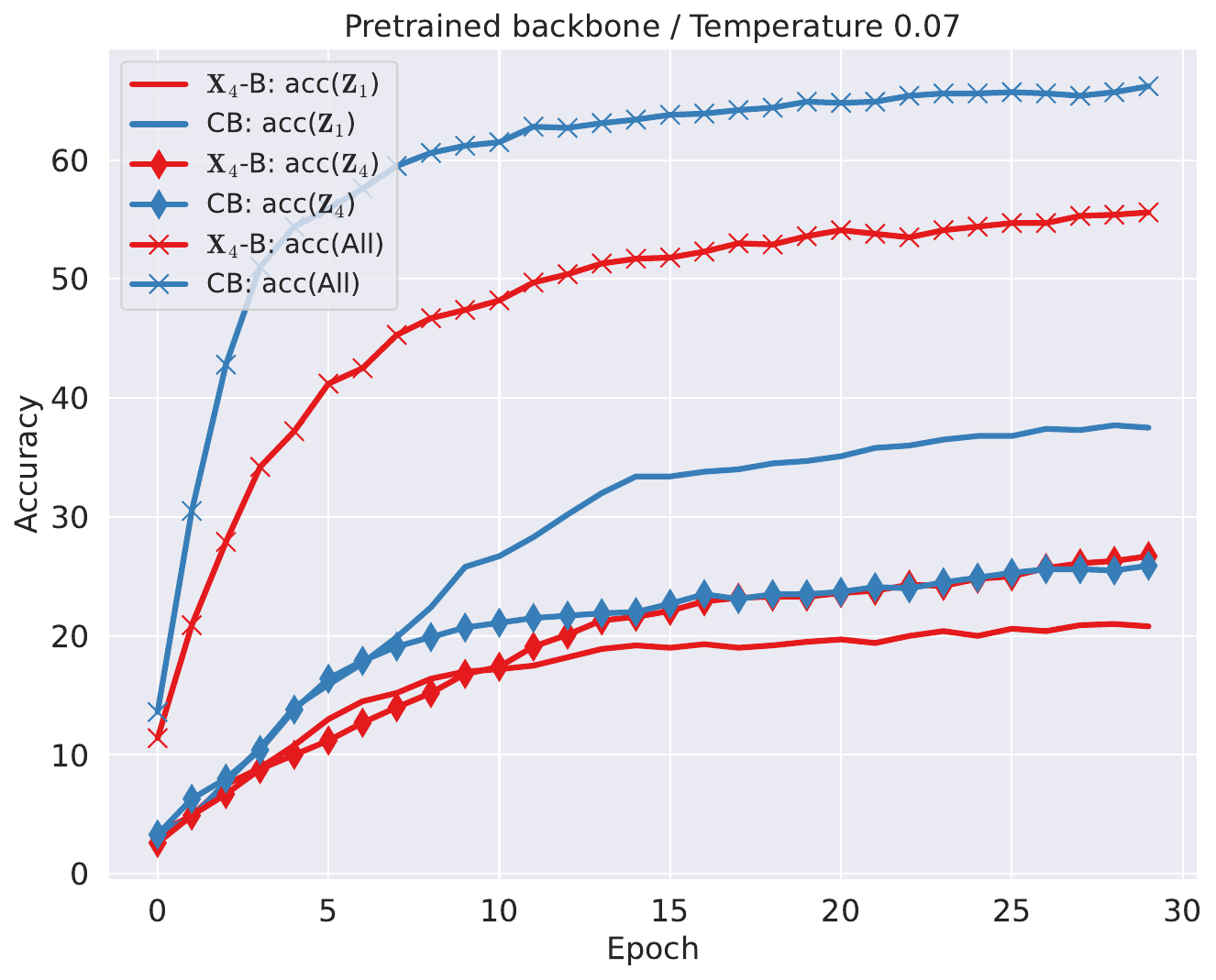}
        \caption{$\tau=0.07$}
        \label{subfig:synth_temp_0.07}
    \end{subfigure}
    \hfill
    \begin{subfigure}[b]{0.3\linewidth}
        \centering
        \includegraphics[width=\textwidth]{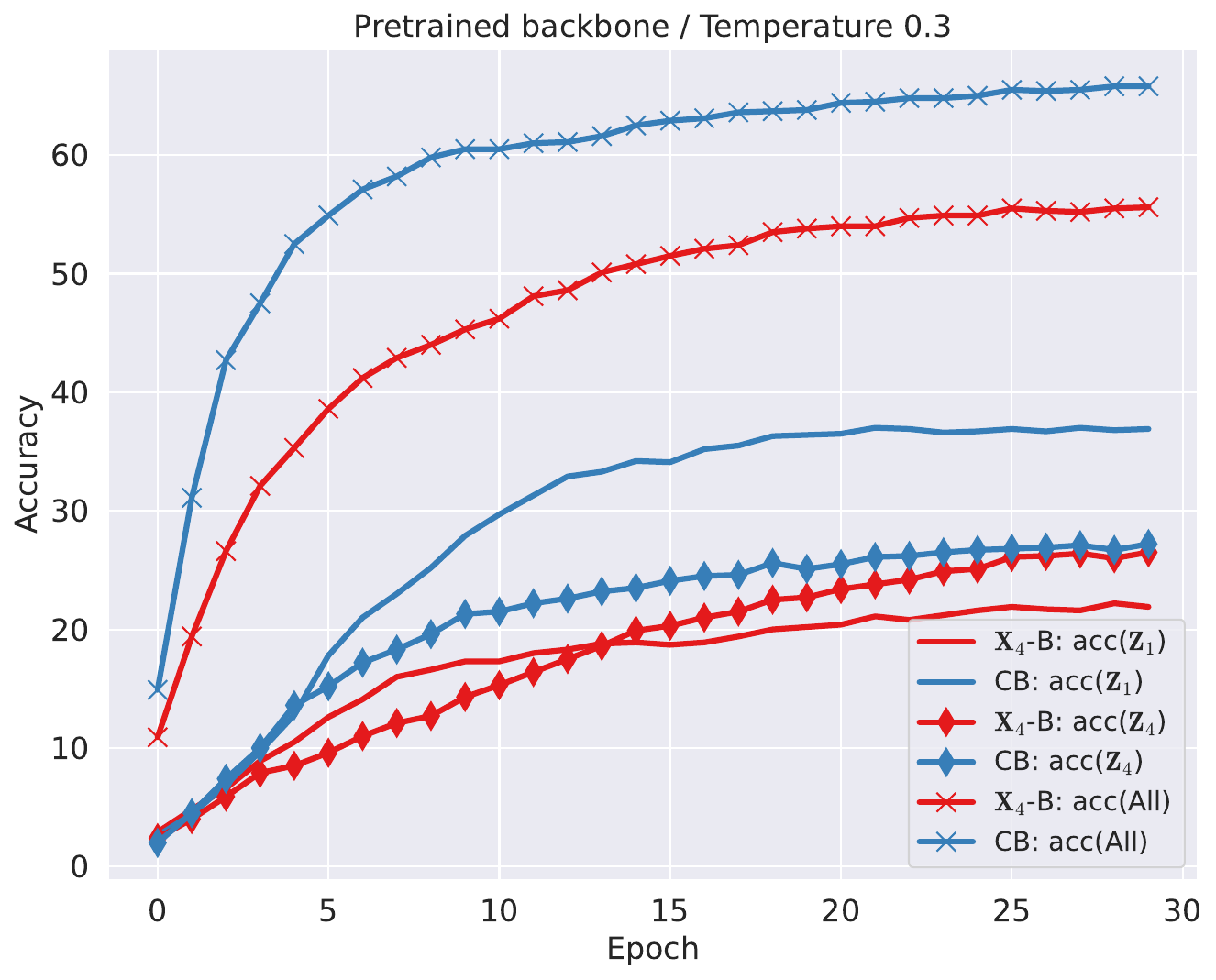}
        \caption{$\tau=0.3$}
        \label{subfig:synth_temp_0.3}
    \end{subfigure}
    \hfill
    \begin{subfigure}[b]{0.3\linewidth}
        \centering
        \includegraphics[width=\textwidth]{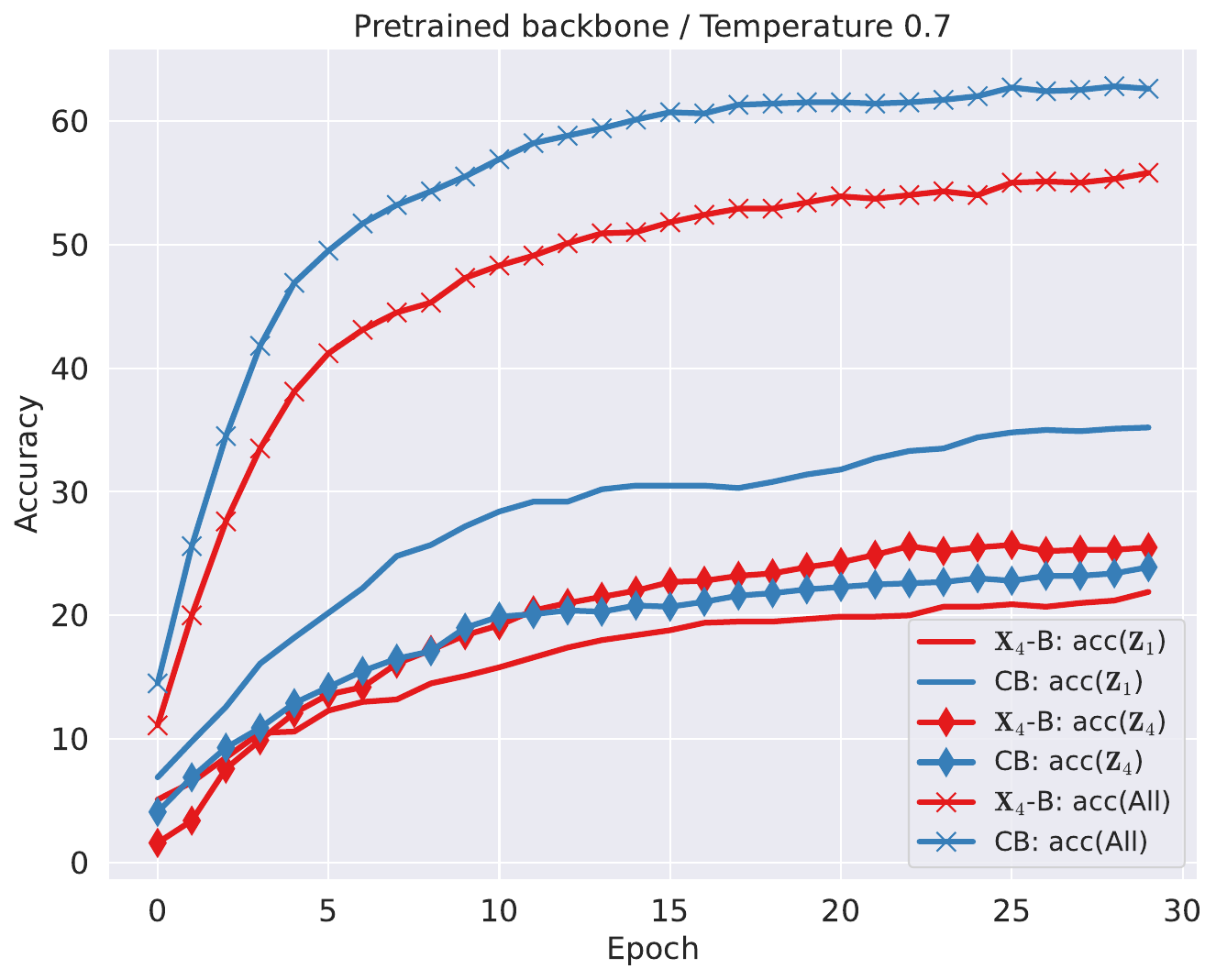}
        \caption{$\tau=0.7$}
        \label{subfig:synth_temp_0.7}
    \end{subfigure}
    \caption{Classification accuracy on GMM synthetic dataset. The embeddings are learned with different temperature parameter $\tau\in\{0.07,0.3,0.7\}$. }
    \label{fig:synth_temp}
\end{figure}

\begin{table}[h]
\centering
\caption{Classification accuracies presented in Figure~\ref{fig:comp_dynamic_anchor}. In the experiment in Figure~\ref{subfig:extreme_a} and~\ref{subfig:extreme_b}, $\Xm_1,\Xm_2$, and $\Xm_3$ are very noisy, and $\Xm_4$ is highly informative. In the experiment in Figure~\ref{subfig:extreme_c} and~\ref{subfig:extreme_d}, $\Xm_1$ and $\Xm_2$ are very noisy, and $\Xm_3$ and $\Xm_4$ are highly informative. We choose $\Xm_4$ for \faname for the best-fixed anchor modality for both cases. The weighted average method uses prior knowledge of modality quality to determine the weights for each modality. Random anchor method without intra-learning uses a randomly chosen modality as an anchor for each iteration under a fixed anchor encoder, while with intra-learning we train intra-modal learning by not freezing the anchor encoder.}
\begin{adjustbox}{width=.9\textwidth,center}
\begin{tabular}{ccccc||ccccc}
\toprule
 \multirow{2}{*}{\textbf{Backbone}} & \multirow{2}{*}{\textbf{Method}} & \multicolumn{3}{c}{\textbf{Figure~\ref{subfig:extreme_a} and~\ref{subfig:extreme_b}}} & \multicolumn{3}{c}{\textbf{Figure~\ref{subfig:extreme_c} and~\ref{subfig:extreme_d}}} \\
\cmidrule(rl){3-5} \cmidrule(rl){6-8} 
 &  & $\Xm_2$ & $\Xm_4$ & $\Xm_1,\cdots,\Xm_4$ & $\Xm_2$ & $\Xm_4$ & $\Xm_1,\cdots,\Xm_4$  \\ 
\midrule
\multirow{7}{*}{Pre-trained} & $\times$ & 0.115 & 0.296 & 0.566 & 0.099 & 0.256 & 0.537   \\
& \fanamex-$\Xm_4$ & 0.124 & 0.297 & \textbf{0.639} & 0.115 & 0.263 & 0.540 \\
& \modelname & 0.131 & \textbf{0.363} & 0.618 & \textbf{0.116} & 0.336 & 0.563  \\
& Weighted average & 0.133 & 0.342 & 0.609 & 0.102 & 0.338 & 0.574 \\
& Random + intra learning & 0.131 & 0.347 & 0.613 & 0.105 & {0.353} & 0.554 \\
& Random anchor  & \textbf{0.147} & 0.359 & 0.619 & 0.097 & 0.327 & {0.579} \\
& Median (coordinate-wise)  & 0.134 & \bf{0.363} & 0.634 & 0.112 & \textbf{0.375} & \textbf{0.582} \\
\midrule
\multirow{7}{*}{Random} & $\times$ & 0.092 & 0.114 & 0.487 & 0.067 & 0.176 & 0.465 \\
& \fanamex-$\Xm_4$ & 0.131 & 0.143 & 0.575 & 0.112 & 0.153 & 0.523 \\
& \modelname & 0.132 & \textbf{0.355} & \textbf{0.626} & \textbf{0.113} & \textbf{0.336} & {0.559} \\
& Weighted average & 0.115 & 0.347 & 0.619 & 0.109 & \textbf{0.336} & 0.556 \\
& Random + intra learning & 0.132 & 0.333 & 0.602 & 0.104 & 0.309 & 0.562 \\
& Random anchor & \textbf{0.145} & 0.354 & 0.618 & 0.097 & 0.324 & 0.552 \\
& Median (coordinate-wise)  & 0.137 & 0.347 & 0.612 & 0.112 & 0.330 & \textbf{0.565} \\
\bottomrule
\end{tabular}
\end{adjustbox}
\label{tab:extreme1}
\end{table}

\paragraph{Comparison with other adaptive anchor generation.}

We compare the centroid-based adaptive anchor method with other potential approaches, such as weighted average, random anchor fixing, and component-wise median. Figure~\ref{fig:comp_dynamic_anchor} illustrates the accuracies of each method under scenarios where modalities are unevenly distributed. Specifically, we create $4$ modalities with differing quality levels. In experiments (a) and (b) of Figure~\ref{fig:comp_dynamic_anchor}, $\Xm_1$, $\Xm_2$, and $\Xm_3$ are set as highly uninformative, while $\Xm_4$ represents a high-quality dataset. Conversely, experiments (c) and (d) use $\Xm_1$ and $\Xm_2$ as poor-quality datasets, while $\Xm_3$ and $\Xm_4$ are high-quality datasets.

For the weighted average method (denoted as WAB in Figure~\ref{fig:comp_dynamic_anchor}), we assign weights based on modality quality: $(0.2, 0.2, 0.2, 1)$ for experiments (a) and (b), and $(0.2, 0.2, 0.8, 0.8)$ for experiments (c) and (d). These weights correspond to the information rate of each modality.

For the random modality dynamic anchor method (denoted as RB in Figure~\ref{fig:comp_dynamic_anchor}), we randomly select one modality as the dynamic anchor at each iteration, with the anchor encoder frozen. To investigate the impact of intra-modal learning, we also conduct experiments with a random anchor that includes intra information learning (denoted as RB+Intra). In this case, the anchor modality is randomly selected at each iteration, and the anchor encoder is not frozen, allowing all encoders to be trained.

Since the median is more robust to outliers than the average~\citep{lopuhaa1991breakdown}, we additionally evaluate the case of a median-based dynamic anchor. In high-dimensional spaces, rather than in the univariate case, a coordinate-wise median can be used as a naive generalization of the univariate median to the multivariate setting, preserving its robustness to outliers. We assess the dynamic anchor binding method using the coordinate-wise median approach (denoted as MB in Figure~\ref{fig:comp_dynamic_anchor}). Specifically, for the median anchor, we compute the $j$th coordinate of the $i$th anchor as $\as_{i,j} = {\rm Median}(\zs_{1,i,j}, \zs_{2,i,j}, \cdots, \zs_{M,i,j})$, where $\zs_{m,i,j}$ denotes the $j$th coordinate of the embedding for the $i$th sample in modality $m$.
For improved readability, we summarize the final accuracies for each method and modality in Table~\ref{tab:extreme1}.

This scenario, where modal distributions are uneven, is commonly referred to as the {\em modality imbalance} problem~\citep{du2023uni,peng2022balanced,zhang2024multimodal}. Intuitively, in the presence of modality imbalance, the centroid may produce suboptimal dynamic anchor constructions, and other methods, such as weighted averages, might yield better results. Nevertheless, \modelname consistently performs better or comparably to weighted average methods, demonstrating its robustness to the modality imbalance problem. 

From these experiments, we conjecture that the specific dynamic anchor generation method may not significantly impact final performance, provided that all encoders are well-trained during the process.

Addressing the modality imbalance problem typically requires additional information, such as domain knowledge, labels, or downstream task insights. Since this work focuses on multi-modal alignment under contrastive learning, we do not assume such information is available. We therefore leave the exploration of the modality imbalance problem for dynamic anchor generation as a direction for future work.

\begin{table}
\centering
\caption{Classification accuracy evaluated on each modality (training and evaluation modalities are the same) with MUStARD dataset. Asterisk$^*$ denotes different backbone encoders and pretraining settings.}
\begin{adjustbox}{width=.5\textwidth,center}
\begin{tabular}{cccccccc}
\toprule
\textbf{Method} & \textbf{Modality} & \textbf{Sar-1} & \textbf{Spk-1} & \textbf{Spk-3} & \textbf{Spk-5} \\
\midrule
\faname & \multirow{5}{*}{$\Tc$} & 0.606 & 0.219 & 0.458 & 0.632  \\
UniBind & & 0.600 & 0.214 & 0.412 &  0.569  \\
AudioCLIP$^*$ & & 0.488 & 0.155 & 0.280 & 0.388  \\
ViT-Lens$^*$ & & 0.543 & 0.172 & 0.342 & 0.472  \\
\modelname & & \bf{0.667} & \bf{0.287} & \bf{0.507} & \bf{0.642}  \\
\midrule
\faname & \multirow{5}{*}{$\Vc$} & 0.668 & 0.375 & 0.587 & 0.691  \\
UniBind & & 0.658 & {0.381} & {0.641} & {0.770}   \\
AudioCLIP$^*$ & & 0.504 & 0.110 & 0.275 & 0.414   \\
ViT-Lens$^*$ & & \bf{0.697} & \bf{0.586} & \bf{0.738} & \bf{0.797}   \\
\modelname & & 0.670 & 0.380 & 0.609 & 0.726 \\
\midrule
\faname & \multirow{5}{*}{$\Ac$} & {0.639} & 0.201 & 0.457 & 0.599  \\
UniBind & & 0.633 & {0.272} & {0.528} & {0.691}   \\
AudioCLIP$^*$ & & 0.525 & 0.158 & 0.343 & 0.454   \\
ViT-Lens$^*$ & & \bf{0.686} & \bf{0.396} & \bf{0.664} & \bf{0.8}    \\
\modelname & & 0.616 & 0.234 & 0.461 & 0.609 \\
\midrule
FactorCL$^*$ & \multirow{7}{*}{$\substack{\mathcal{V}, \mathcal{A}, \mathcal{T} \\ (\Vc,\Ac,\Tc)}$}  & 0.699 & - & - & - \\ 
SimMMDG$^*$ &   & 0.725 & - & -  & - \\
\faname &  & 0.678 & 0.343 & 0.554 & 0.677  \\
UniBind & & 0.646 & {0.383} &  {0.622} & {0.764}   \\
AudioCLIP$^*$ & & 0.530 & 0.119 & 0.261 & 0.378   \\
ViT-Lens$^*$ & & \bf{0.731} & \bf{0.506} & \bf{0.736} & \bf{0.812}   \\
\modelname & & {0.704} & 0.346 & 0.594 &  0.733  \\
\bottomrule
\end{tabular}
\end{adjustbox}
\label{tab:classfication_each_modality}
\end{table}

\begin{figure}[h]
    \centering
    \begin{subfigure}[b]{0.49\textwidth}
        \centering
        \includegraphics[width=\textwidth]{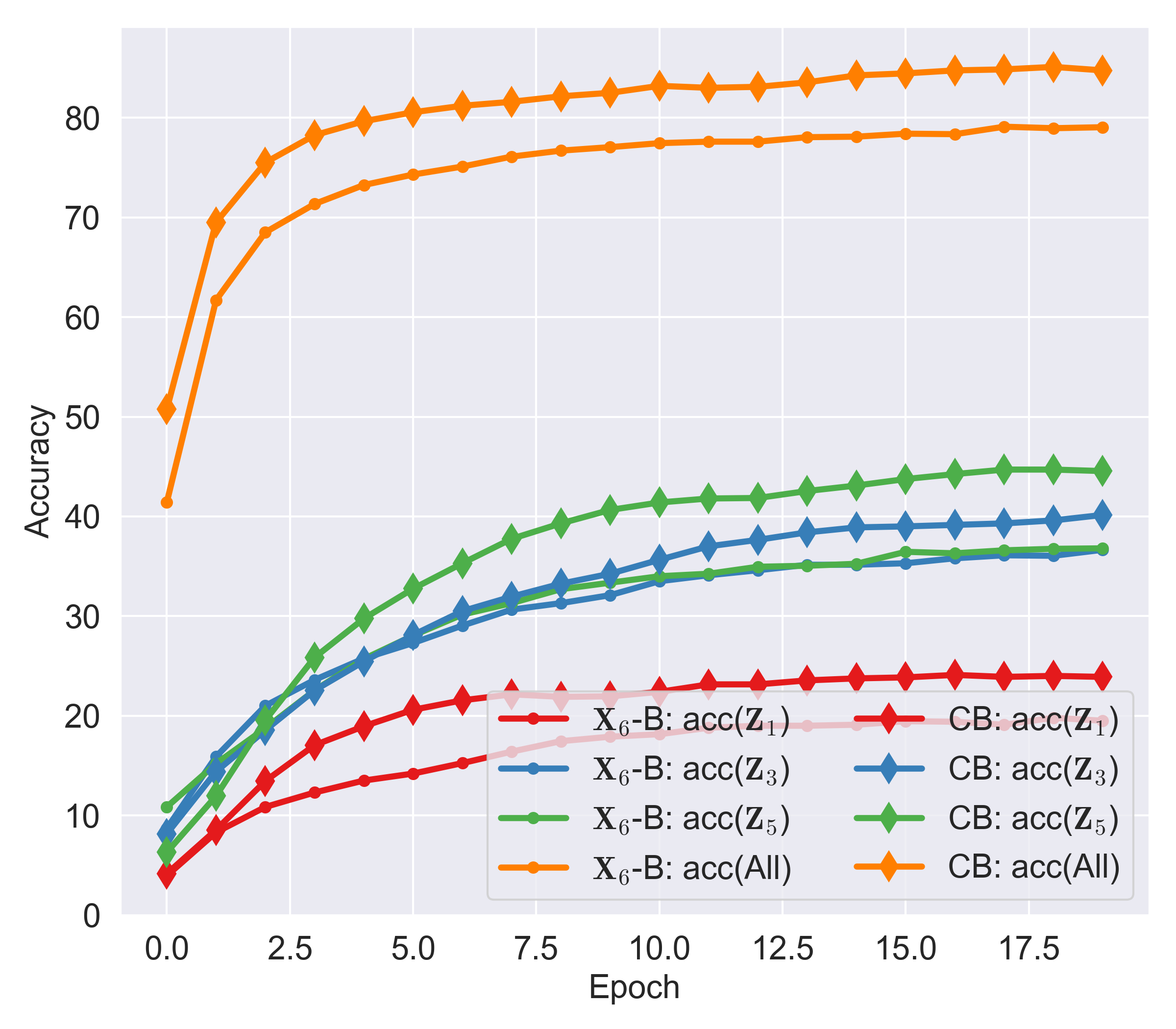}
        \caption{\faname at $\Xm_6$ vs. \modelnamex}
    \end{subfigure}
    \hfill
    \begin{subfigure}[b]{0.49\textwidth}
        \centering
        \includegraphics[width=\textwidth]{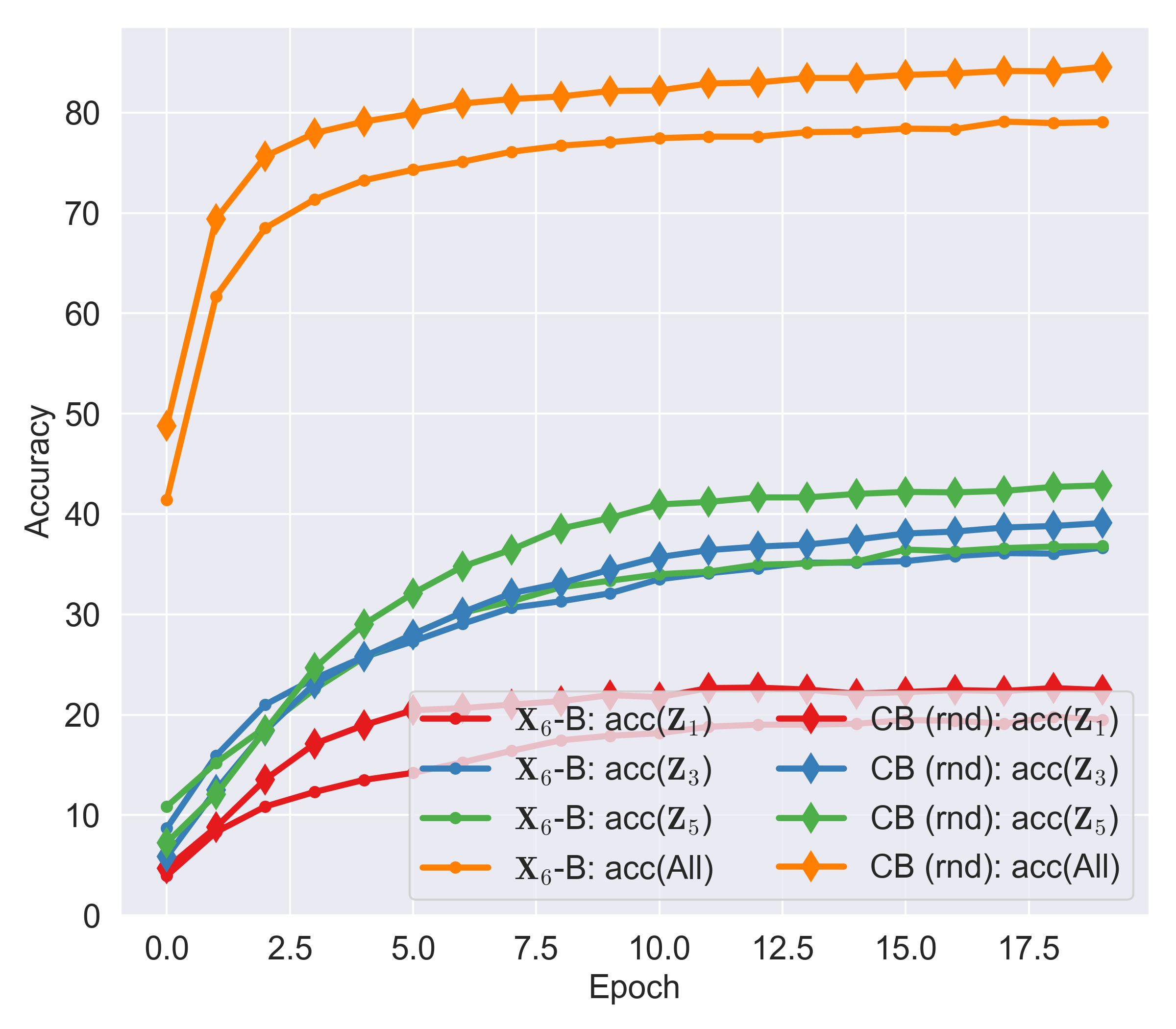}
        \caption{When \modelname uses random backbones.}
    \end{subfigure} 
    \begin{subfigure}[b]{0.49\textwidth}
        \centering
        \includegraphics[width=\textwidth]{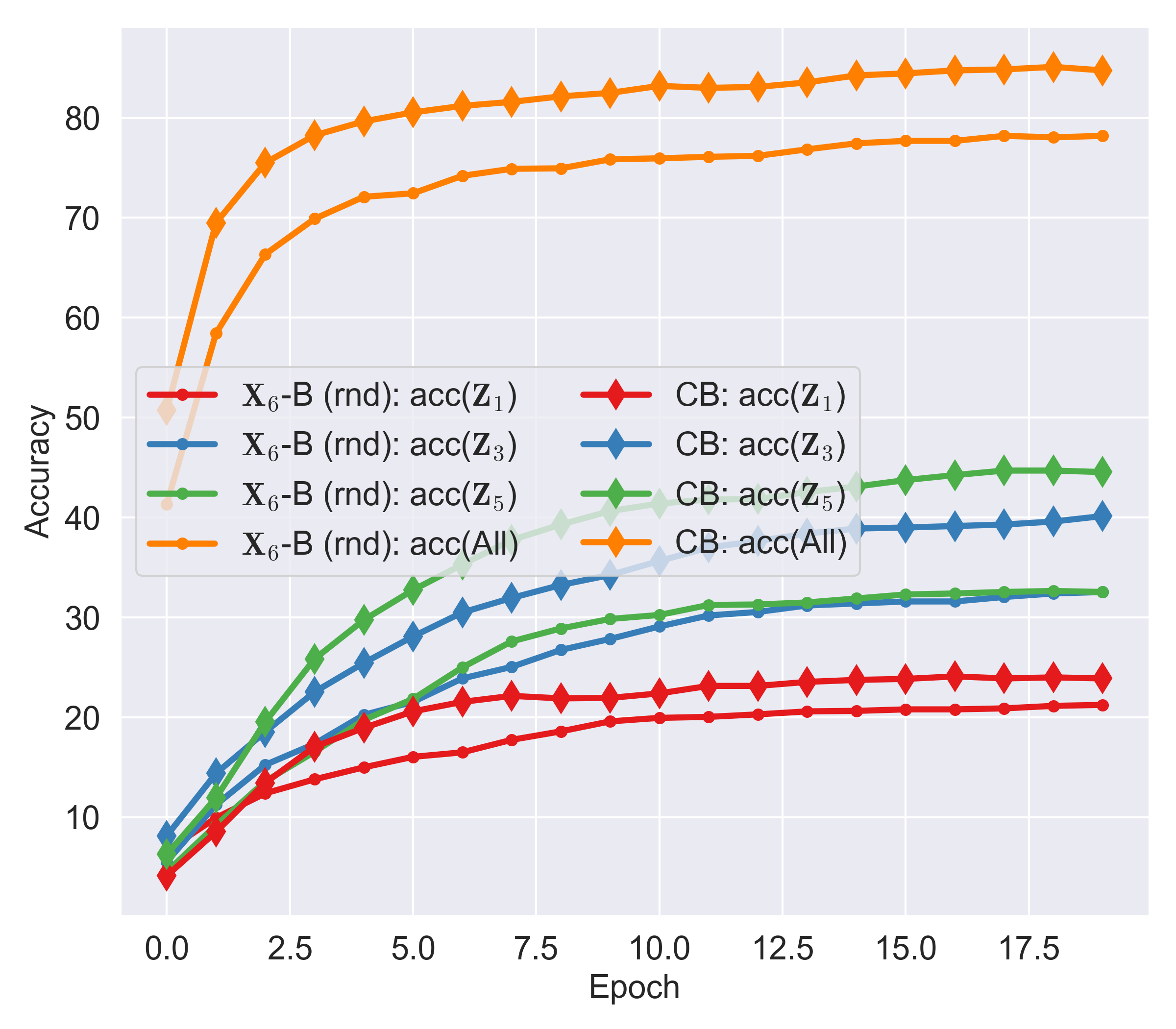}
        \caption{When \faname uses random backbones.}
    \end{subfigure}
    \hfill
    \begin{subfigure}[b]{0.49\textwidth}
        \centering
        \includegraphics[width=\textwidth]{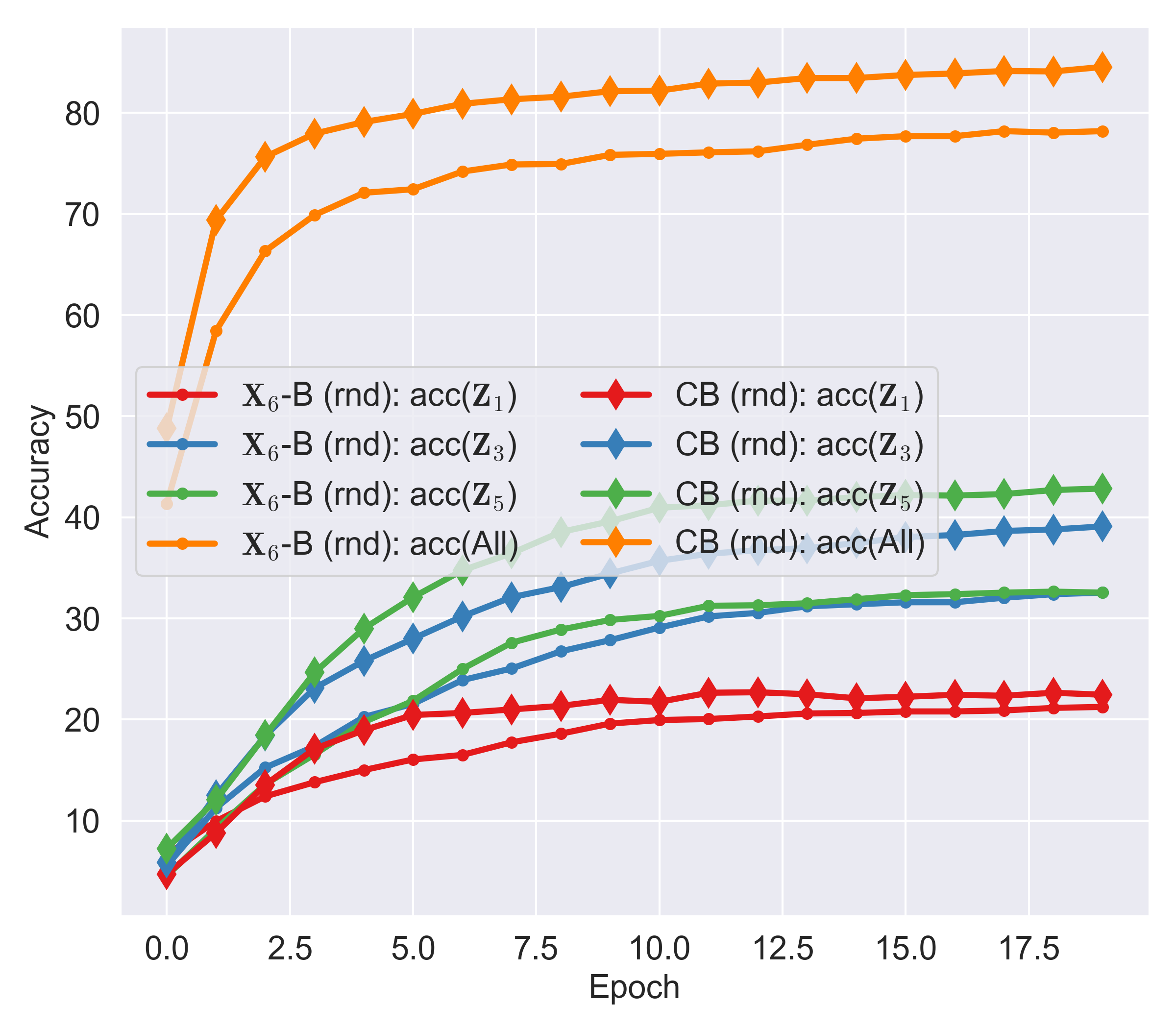}
        \caption{When all backbones are random backbones.}
    \end{subfigure}
    
    \caption{Experiment results with synthetic dataset of $M=6$ modalities. Abbreviation: $\Xm_i$-B or CB: applying \faname method to backbones with anchor $\Xm_i$ or applying \modelnamex; acc$(\Zm_i)$ or acc(All): accuracy of $\Zm_i$ or of concatenated embeddings $(\Zm_1,\cdots,\Zm_M)$; (rnd): if random backbones are used for $\Xm_i$-B or CB.}
    \label{fig:synth_M6}
\end{figure}

\begin{figure}[h]
    \centering
    \begin{subfigure}[b]{0.49\textwidth}
        \centering
        \includegraphics[width=\textwidth]{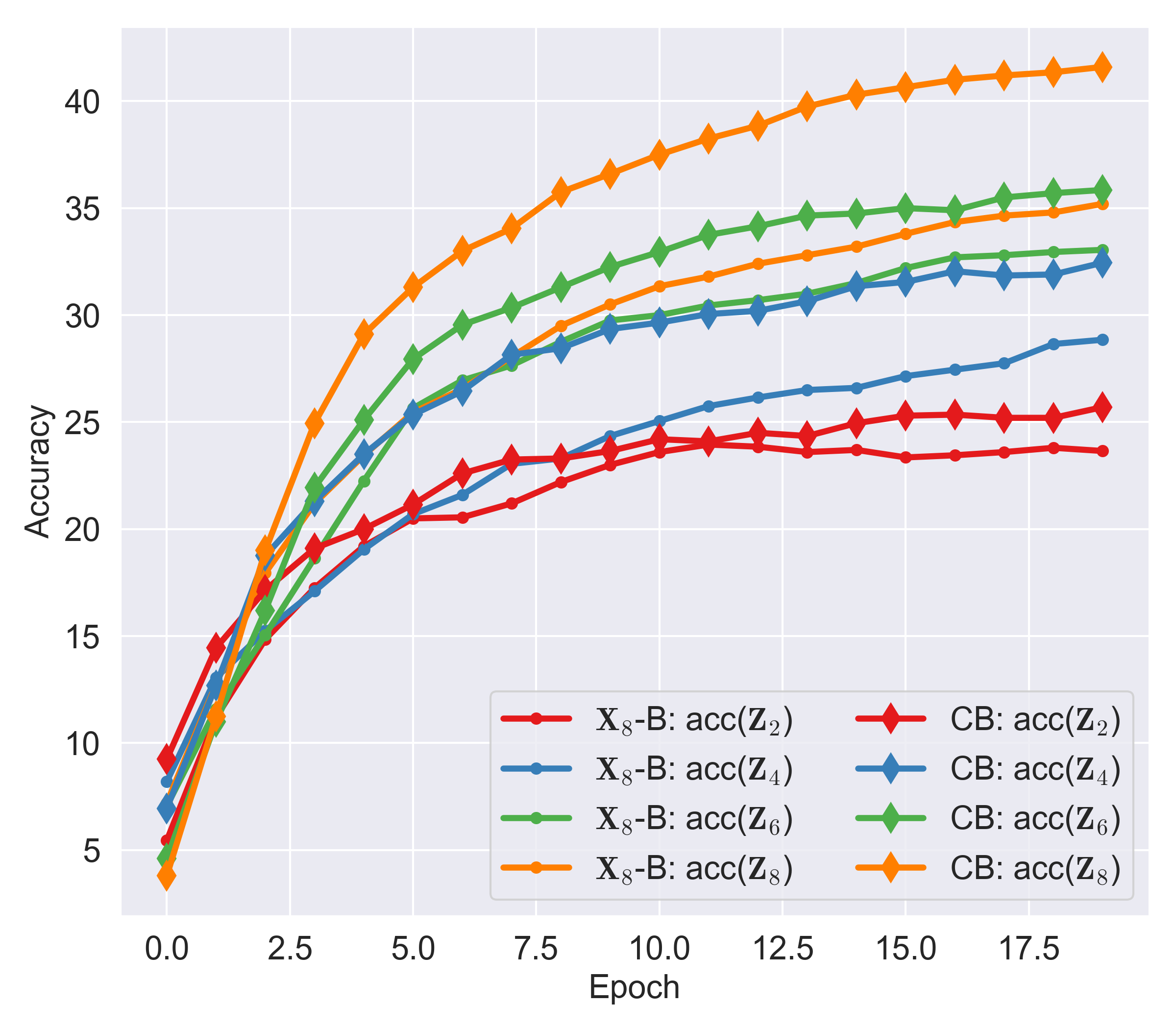}
        \caption{\faname at $\Xm_8$ vs. \modelnamex}
    \end{subfigure}
    \hfill
    \begin{subfigure}[b]{0.49\textwidth}
        \centering
        \includegraphics[width=\textwidth]{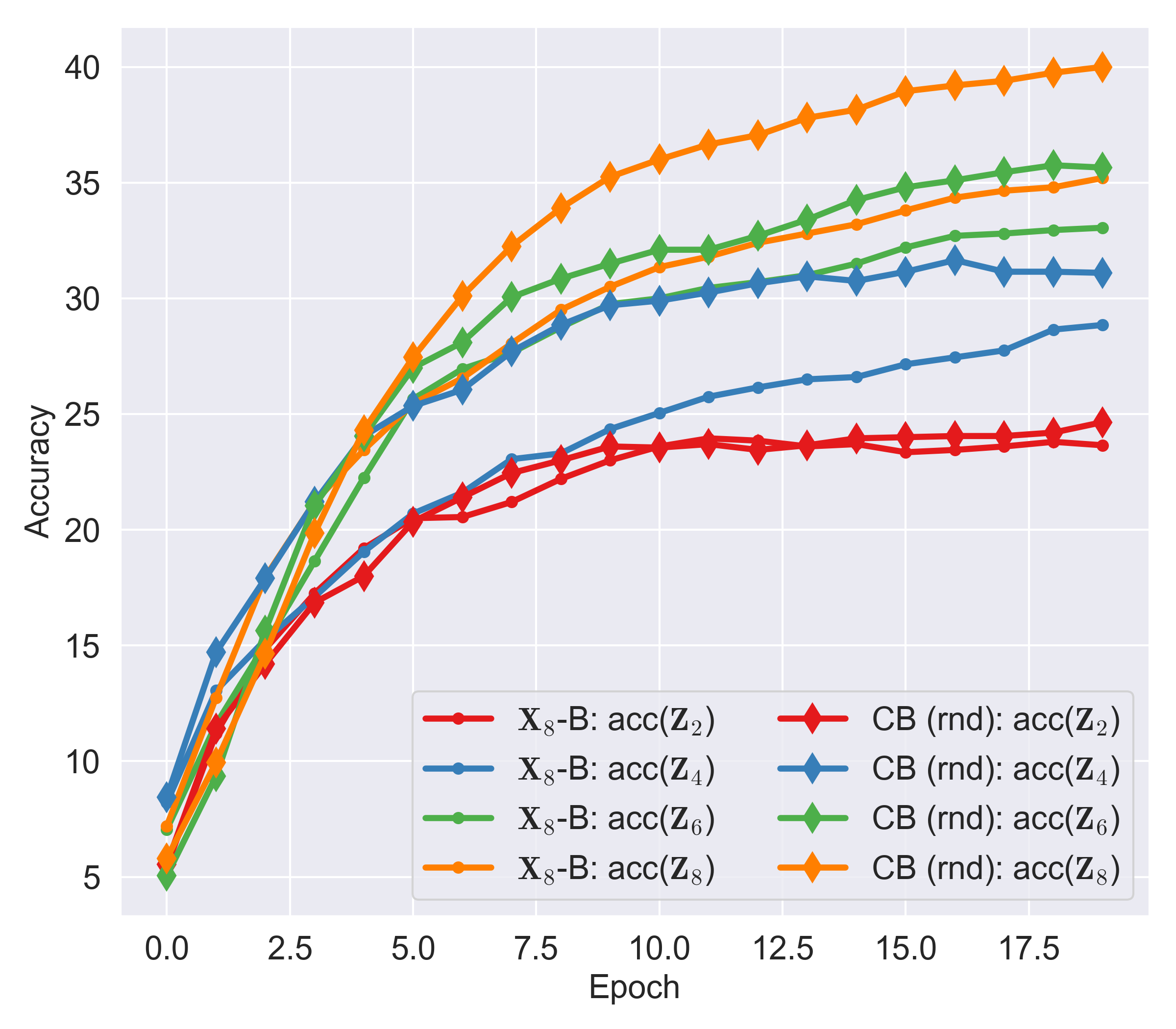}
        \caption{When \modelname uses random backbones.}
    \end{subfigure} 
    \begin{subfigure}[b]{0.49\textwidth}
        \centering
        \includegraphics[width=\textwidth]{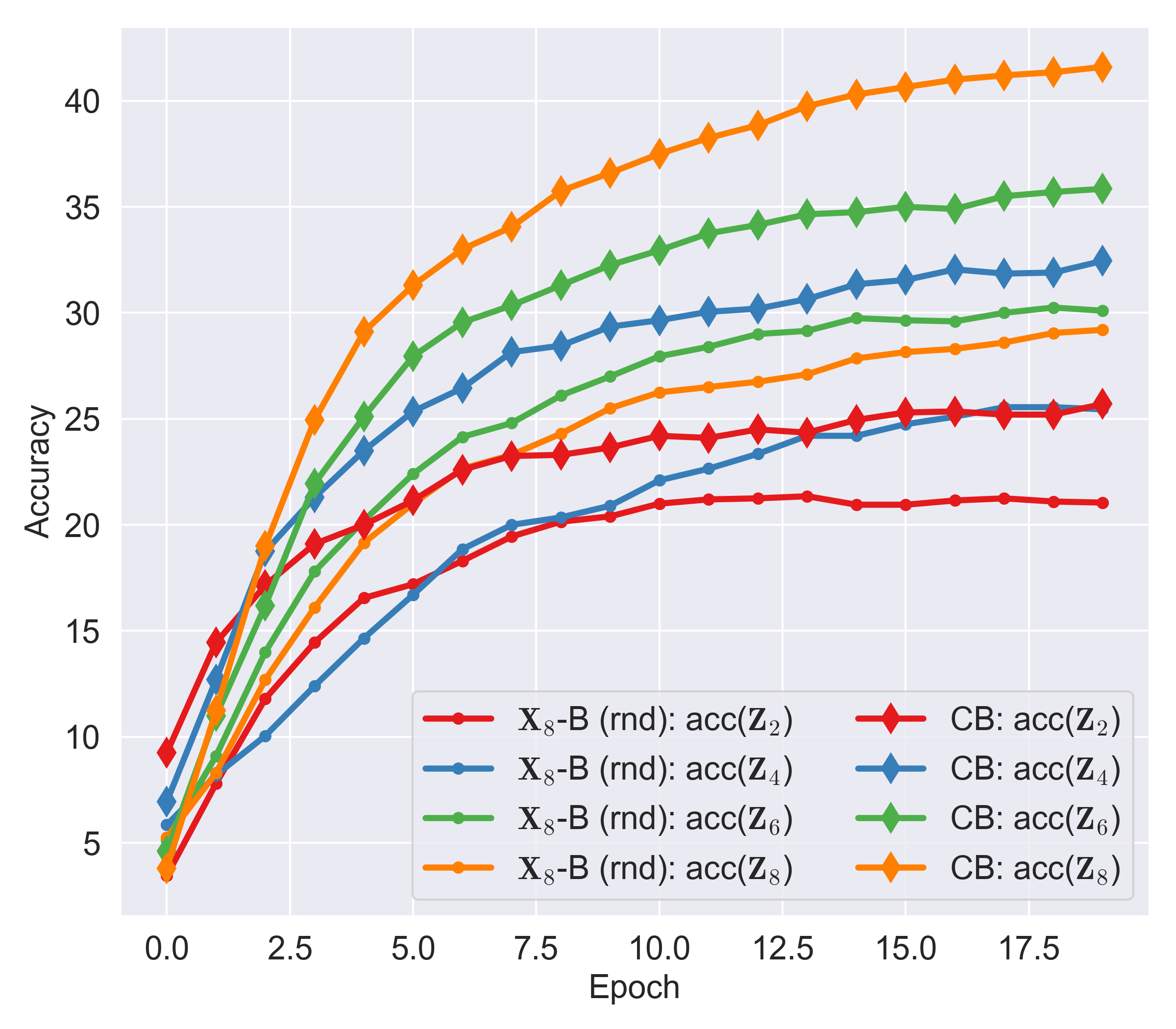}
        \caption{When \faname uses random backbones.}
    \end{subfigure}
    \hfill
    \begin{subfigure}[b]{0.49\textwidth}
        \centering
        \includegraphics[width=\textwidth]{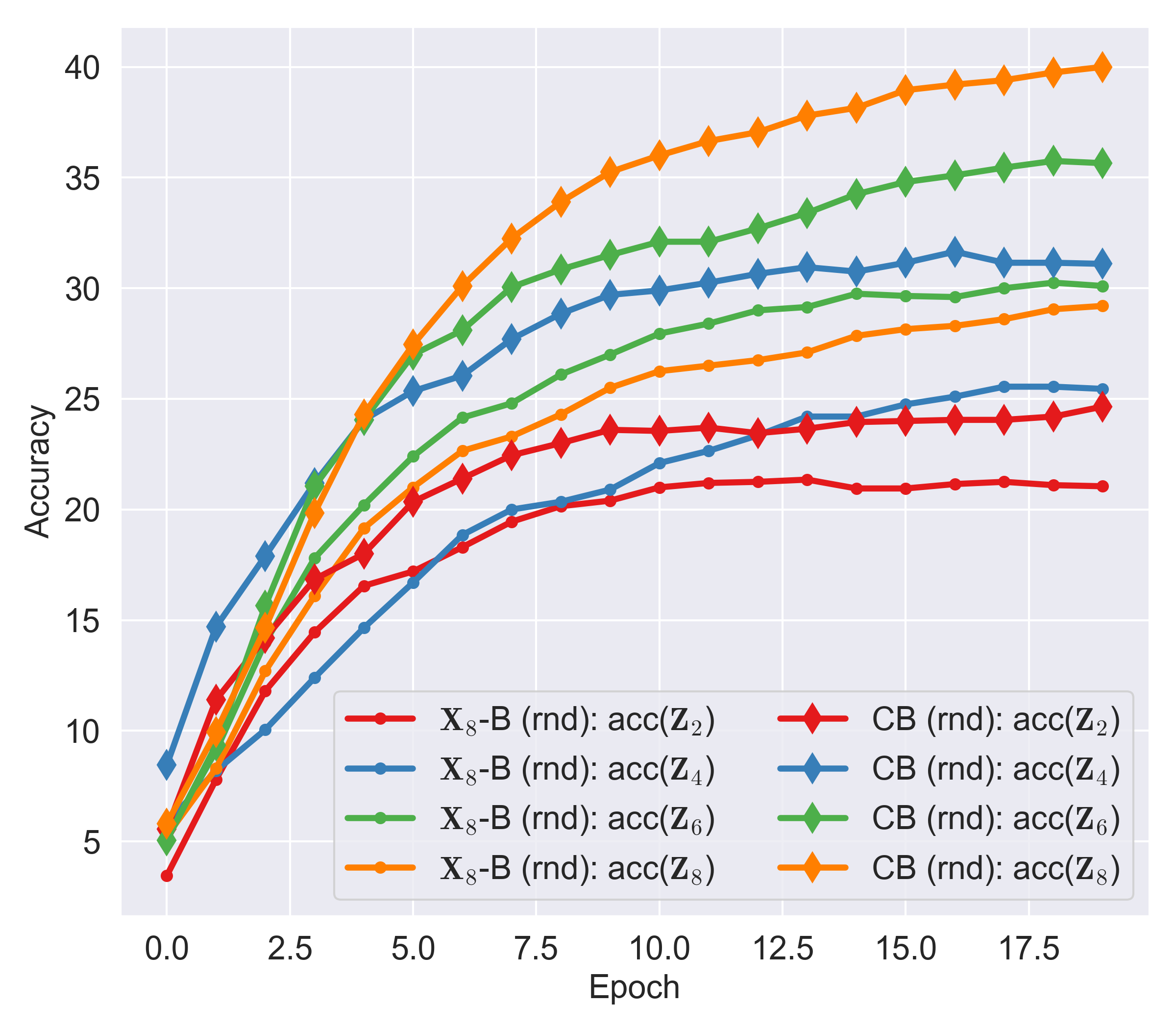}
        \caption{When all backbones are random backbones.}
    \end{subfigure}
    
    \caption{Experiment results with synthetic dataset of $M=8$ modalities. Abbreviation: $\Xm_i$-B or CB: applying \faname method to backbones with anchor $\Xm_i$ or applying \modelnamex; acc$(\Zm_i)$ or acc(All): accuracy of $\Zm_i$ or of concatenated embeddings $(\Zm_1,\cdots,\Zm_M)$; (rnd): if random backbones are used for $\Xm_i$-B or CB.}
    \label{fig:synth_M8}
\end{figure}

\begin{figure}[h]
    \centering
    \begin{subfigure}[b]{0.49\textwidth}
        \centering
        \includegraphics[width=\textwidth]{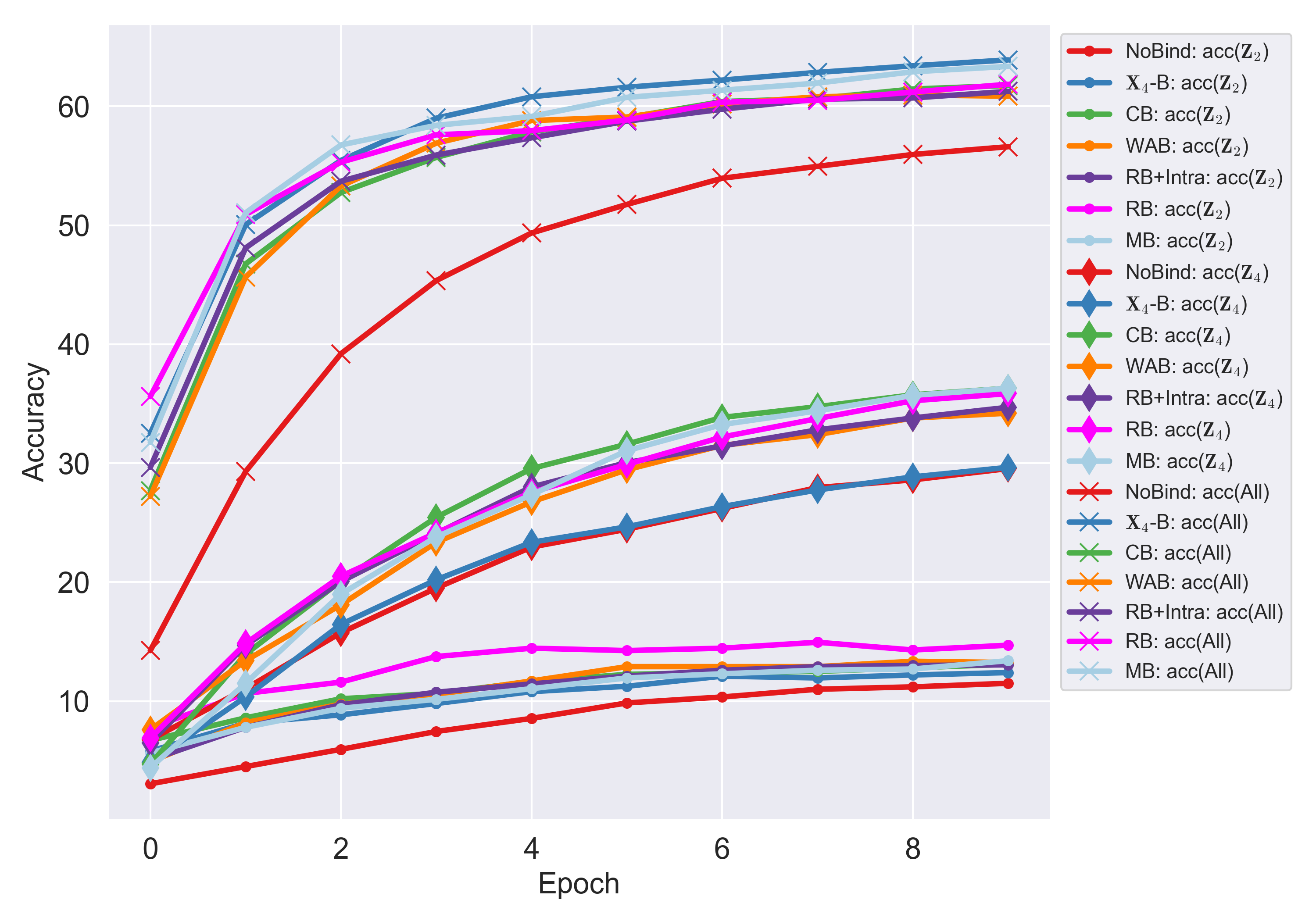}
        \caption{Pre-trained encoders are used.}
        \label{subfig:extreme_a}
    \end{subfigure}
    \hfill
    \begin{subfigure}[b]{0.49\textwidth}
        \centering
        \includegraphics[width=\textwidth]{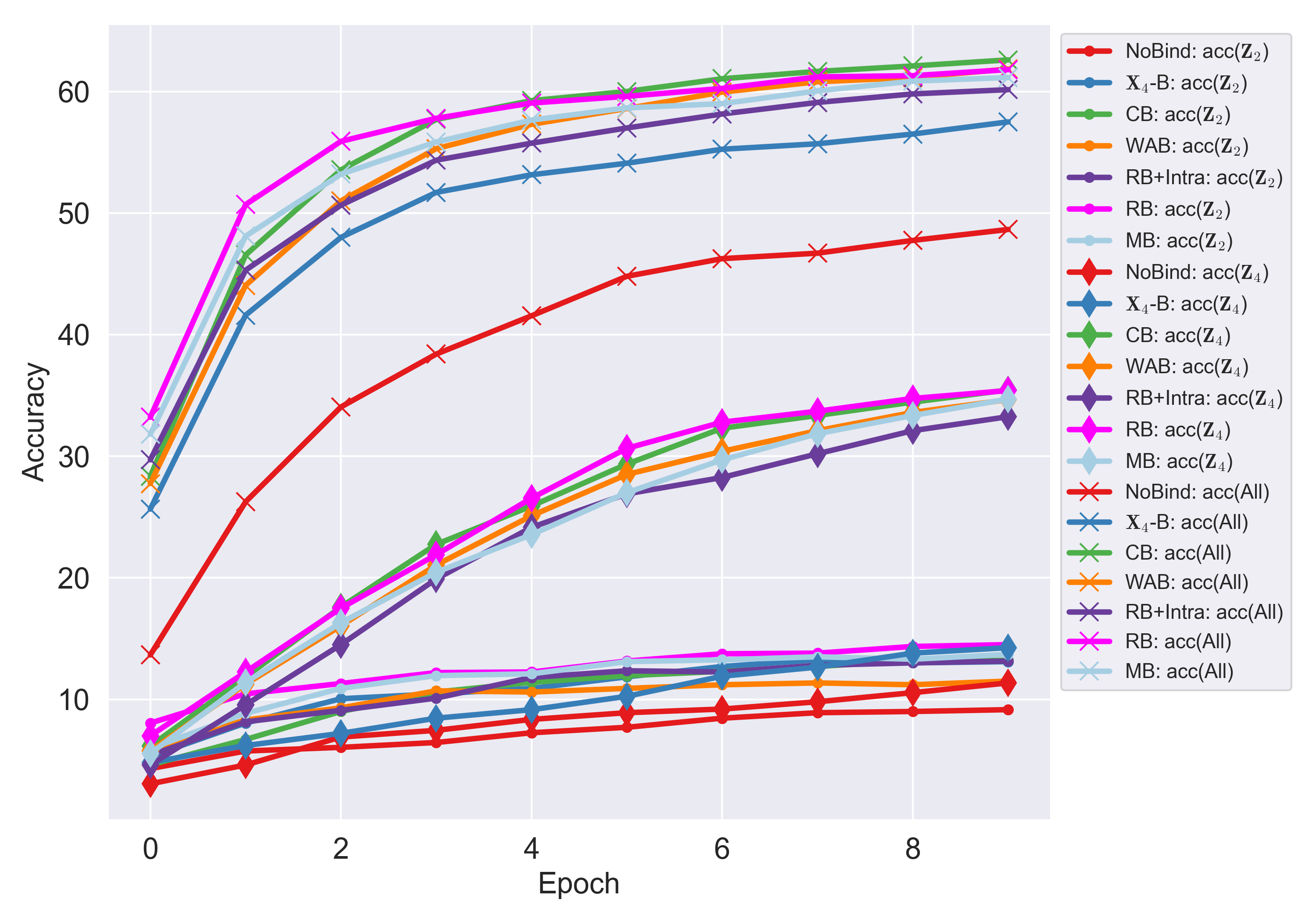}
        \caption{Random backbones are used.}
        \label{subfig:extreme_b}
    \end{subfigure} 
    \begin{subfigure}[b]{0.49\textwidth}
        \centering
        \includegraphics[width=\textwidth]{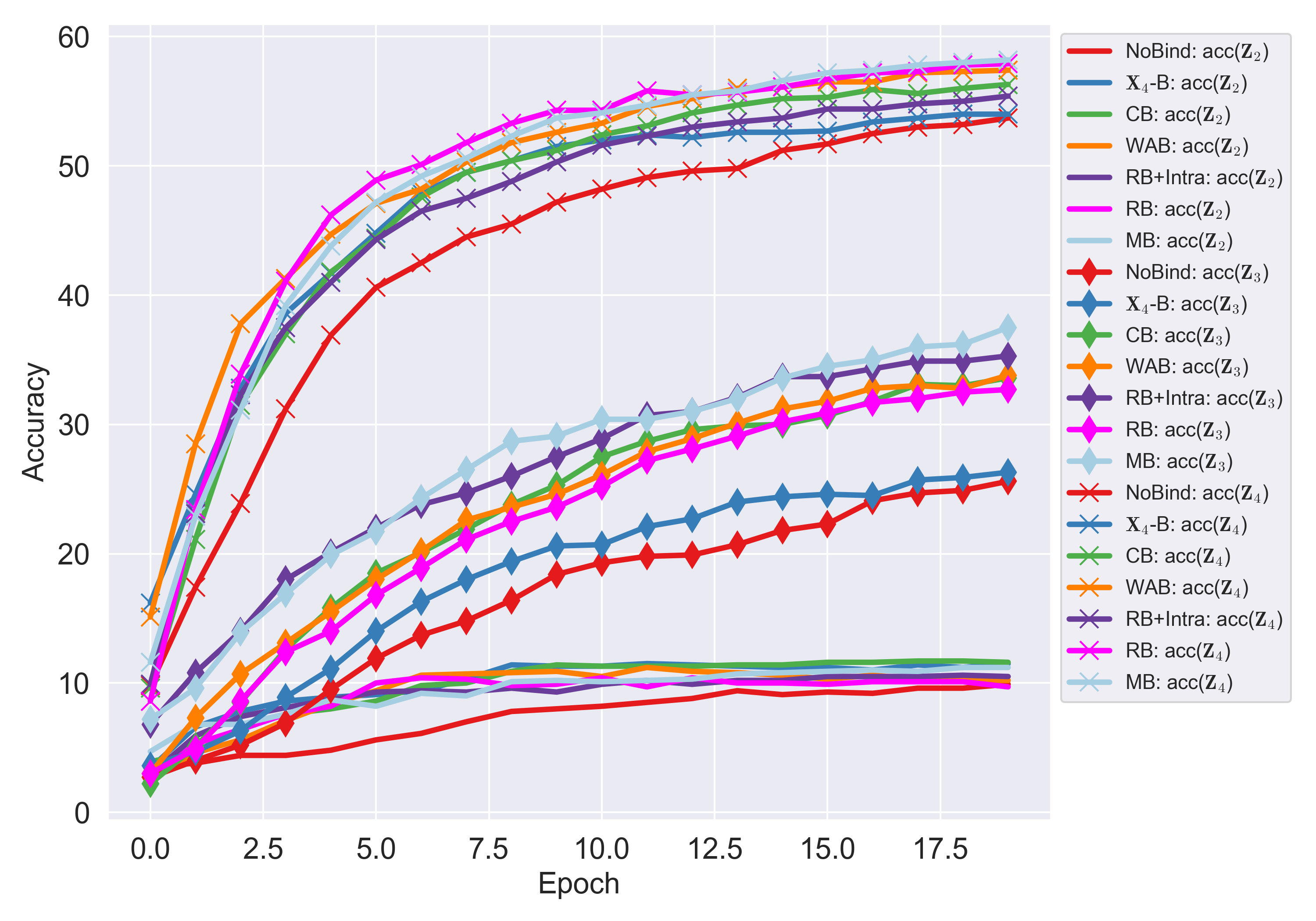}
        \caption{Pre-trained backbones are used.}
        \label{subfig:extreme_c}
    \end{subfigure}
    \hfill
    \begin{subfigure}[b]{0.49\textwidth}
        \centering
        \includegraphics[width=\textwidth]{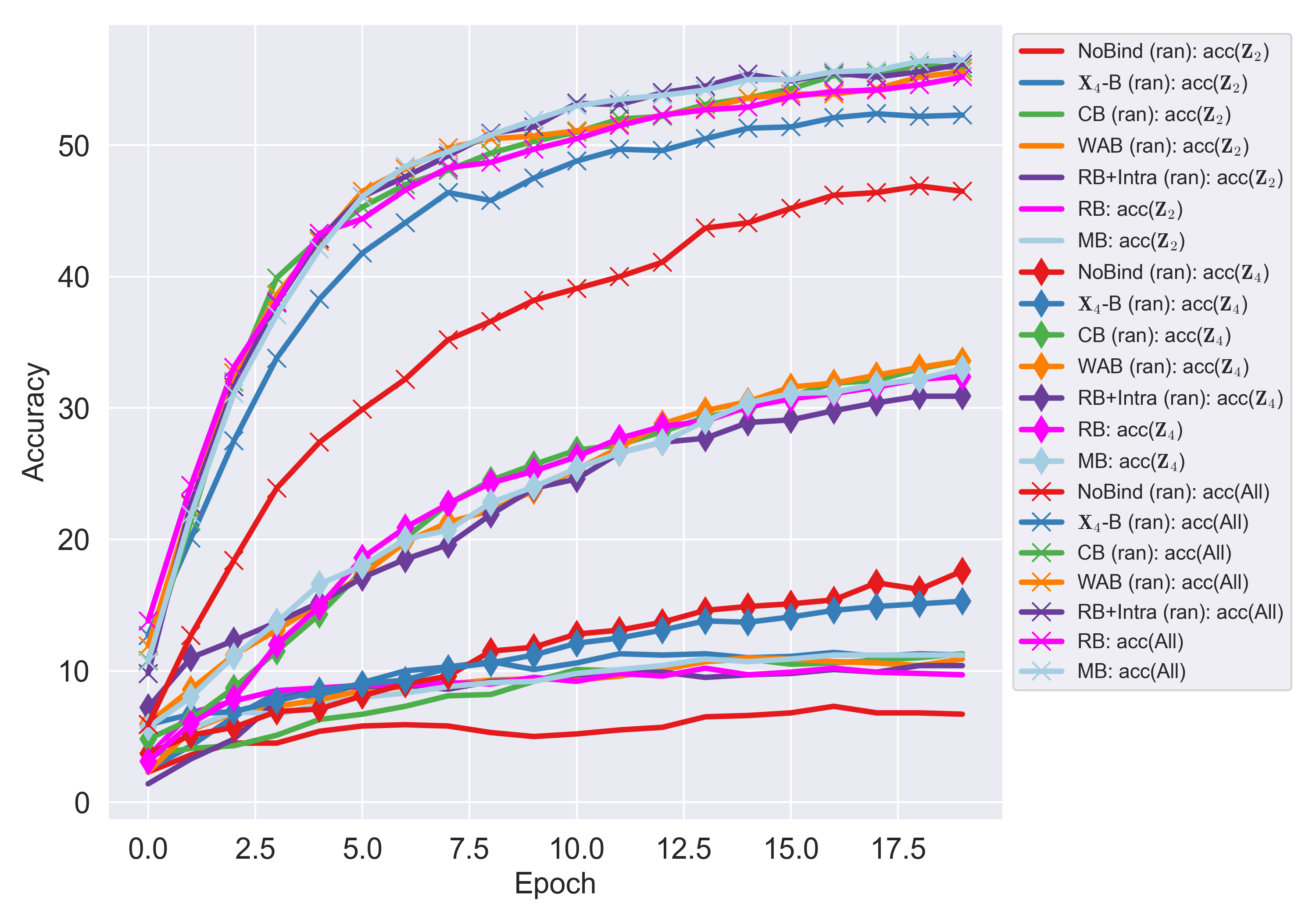}
        \caption{Random backbones are used.}
        \label{subfig:extreme_d}
    \end{subfigure}
    
    \caption{Comparison of other dynamic anchor generation methods. (a) and (b): Modal qualities are set to $(0.2,0.2,0.2,1)$. (c) and (d): Modal qualities are set to $(0.2,0.2,0.8,0.8)$. Abbreviation: $\Xm_i$-B or CB: applying \faname method to backbones with anchor $\Xm_i$ or applying \modelnamex; WAB: weighted average for dynamic anchor with weight identical to the predefined quality for each modality; RB+Intra: randomly choosing a modality for a dynamic anchor in every iteration and intra information learning; RB: randomly choosing a modality for a dynamic anchor in every iteration; MB: coordinate-wise median for dynamic anchors; acc$(\Zm_i)$ or acc(All): accuracy of $\Zm_i$ or of concatenated embeddings $(\Zm_1,\cdots,\Zm_M)$; (ran): if random backbones are used.}
    \label{fig:comp_dynamic_anchor}
\end{figure}

\subsection{Experiments on MUStARD}\label{app:exp_real_data}

\paragraph{Training details.} 
We utilize Low-Rank Adaptation~\citep{hu2022lora} for training \modelname and \fanamex, enhancing training efficiency and achieving impressive results with fewer iterations. 
For the backbones in \faname and \modelnamex, we use the pretrained VideoMAE~\citep{tong2022videomae} for video data, the pretrained WaveLM~\citep{chen2022wavlm} for audio data, and the pretrained BERT~\citep{devlin-etal-2019-bert} for text data.
For parameter settings, we set a learning rate of $0.001$, the AdamW optimizer~\citep{loshchilov2018decoupled} with a batch size of $16$, and a temperature of $0.3$ for InfoNCE. Training \modelname requires augmentation.
We augment video frames with various transformations, including random perspective shifts, random flips and rotation, color jitter, Gaussian blur, and auto-contrast adjustment.
For the audio modality, we apply a low-pass filter, speed changes, echo effect, room impulse response convolution, and background noise. 
For the text modality, we generate paraphrased sentences using the Phi-3 language model served using Ollama~\footnote{https://ollama.com/library/phi3}.


\paragraph{UniBind} We evaluate UniBind as a baseline method, using LLM-generated descriptions as the anchor modality. 
Specifically, UniBind generates descriptions for each modality using a large language model (LLM), ensuring that every modality is paired with corresponding descriptions. These descriptions collectively form a knowledge base, and UniBind optimizes the InfoNCE loss between each modality and its paired description from the knowledge base. In this framework, the anchor modality is the LLM-augmented representation. It is important to note that the LLM-generated descriptions for different modality pairs can vary, which may hinder effective multi-modal alignment (see Table~\ref{tab:results_classification_mlp}). In our experiments, we generate descriptions for video and audio modalities using the VideoLLaMA2.1-7B-AV audio-visual model from VideoLLaMA2~\citep{damonlpsg2024videollama2}, and for the text modality, we use the Qwen2.5-32B-Instruct model from Qwen2.5~\citep{qwen2_5}. We evaluate UniBind's performance in two settings: standard classification accuracy (Table~\ref{tab:classfication_each_modality}) and zero-shot cross-modal classification (Table~\ref{tab:results_classification_mlp}).

\paragraph{AudioCLIP} We employ AudioCLIP~\citep{guzhov2022audioclip}, which aligns image, text, and audio representations into a unified multi-modal space. To extend its capabilities to the video modality in our experiments, we adapt AudioCLIP to extract embeddings for video, audio, and text modalities using a pretrained model. For audio, we follow AudioCLIP's approach, padding audio samples to ensure uniform input sizes. For text, we utilize its pretrained settings, truncating tokenized text to 77 tokens, which only occurs in one instance. For the video modality, we use the center frame as a representative image sample. Finally, embeddings from all three modalities are concatenated for downstream tasks.

\paragraph{ViT-Lens} In our experiments, we leverage the pretrained models from ViT-Lens to extract embeddings for audio, text, and video modalities. We generally follow the example code provided by the authors.\footnote{\url{https://github.com/TencentARC/ViT-Lens}} Note that we select the center frame image from the video to extract the embedding.

\paragraph{Classification results.}

In contrast to the cross-modal retrieval results in Table~\ref{tab:retrieval} and zero-shot cross-modal classification in Table~\ref{tab:results_classification_mlp}, Table~\ref{tab:classfication_each_modality} presents the classification accuracy of \fanamex, UniBind, and \modelname for each modality as well as for multi-modal scenarios. Specifically, embeddings are extracted using the binding methods, and a simple decoder is trained to classify the embeddings. In Table~\ref{tab:classfication_each_modality}, we report the sarcasm and speaker classification accuracies of decoders trained and evaluated on the same modality.

For sarcasm detection, \modelname generally outperforms other baseline methods. While UniBind performs poorly in cross-modal classification, it achieves better performance in speaker classification compared to others. This improvement is due to the LLM-augmented descriptions, which provide additional knowledge (from LLMs) to the embeddings. Notably, UniBind utilizes $4$ modalities, whereas \faname and \modelname only use $3$, which could penalize the performance of \faname and \modelname. Nevertheless, \modelname consistently outperforms \fanamex. Moreover, our method can also incorporate LLM-augmented descriptions as an additional modality, potentially improving its performance further.

Although a direct comparison is not feasible, we also include the sarcasm detection accuracy of FactorCL~\citep{liang2024factorized}, SimMMDG~\citep{dong2023simmmdg}, AudioCLIP~\citep{guzhov2022audioclip}, and ViT-Lens~\citep{Lei_2024_vit_lens} for reference. 
ViT-Lens, in particular, achieves higher performance than \modelname due to its use of larger backbone encoders, such as Vision Transformer (ViT)~\citep{khan2022transformers} and pretraining on extremely large-scale datasets. However, since ViT-Lens can be considered a variant of FABind, applying our dynamic anchor method could further improve its performance. Specifically, ViT-Lens uses a pretrained CLIP model as the anchor encoder, while the other non-anchored modalities use pretrained ViT models with modality adaptation layers. Within our framework, \modelname could adopt the pretrained Vision Transformer as backbone encoders, potentially enhancing its performance further.

\subsection{Experiments with pre-trained backbone}\label{app:dream}

\begin{figure}[h]
    \centering
    \begin{subfigure}[b]{0.45\linewidth}
        \centering
        \includegraphics[width=\textwidth]{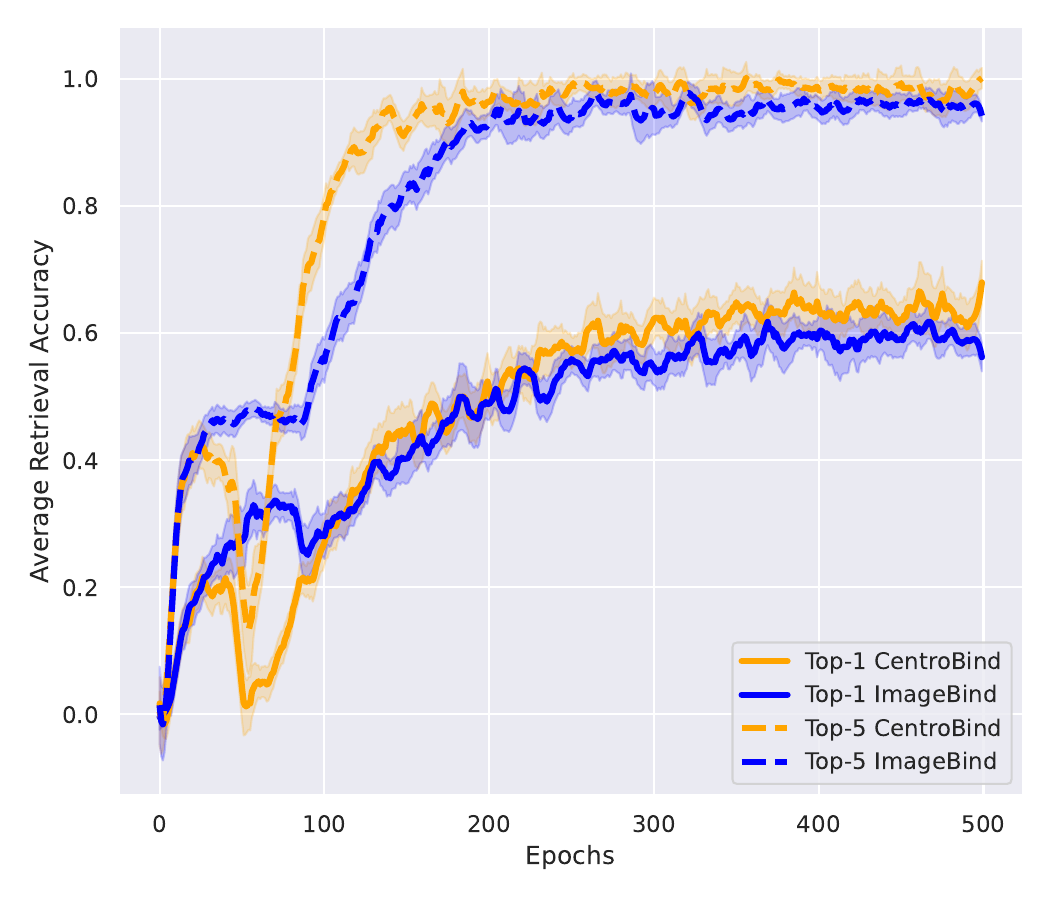}
        \caption{Image-text pair: DreamBooth~\citep{ruiz2023dreambooth}}
        \vspace{1em}
        \label{subfig:dreambooth}
    \end{subfigure}
    \hfill
    \begin{subfigure}[b]{0.45\linewidth}
        \centering
        \includegraphics[width=\textwidth]{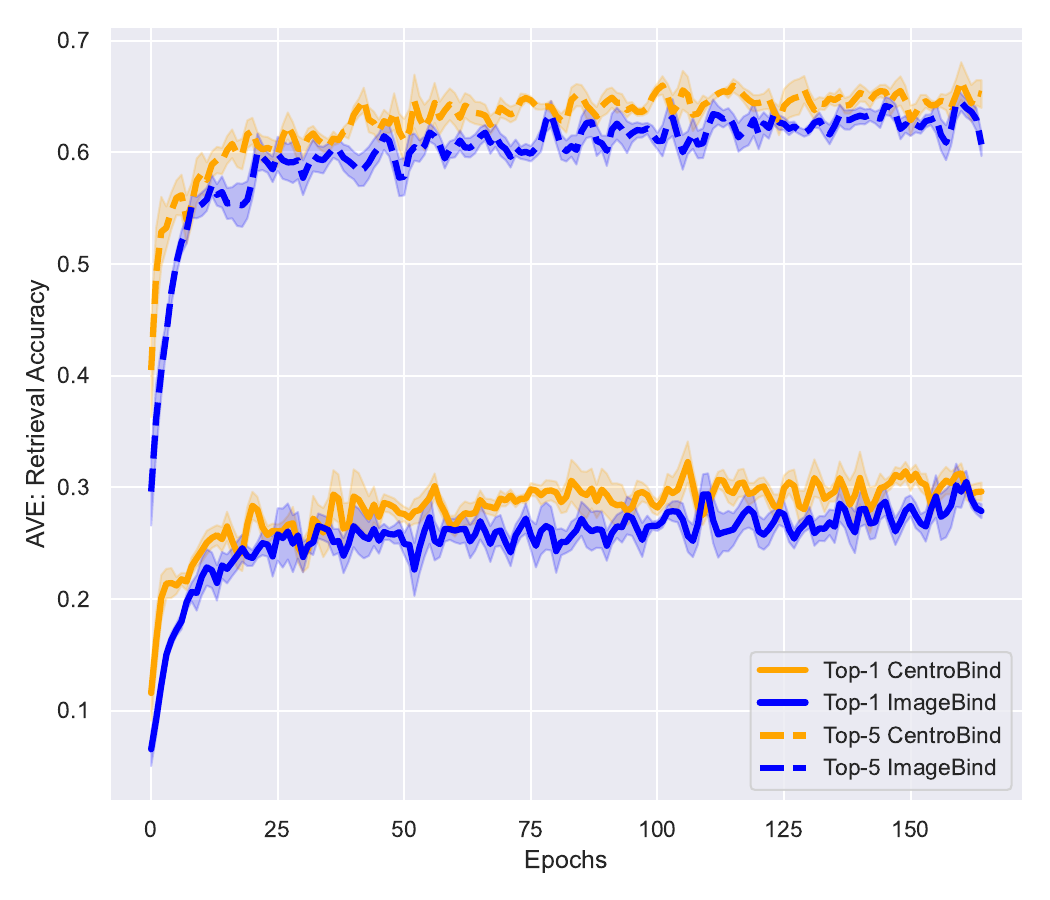}
        \caption{Image-audio pair: AVE~\citep{tian2018audio}}
        \label{subfig:ave}
        \vspace{1em}
    \end{subfigure}
    \\
    \begin{subfigure}[b]{0.45\linewidth}
        \centering
        \includegraphics[width=\textwidth]{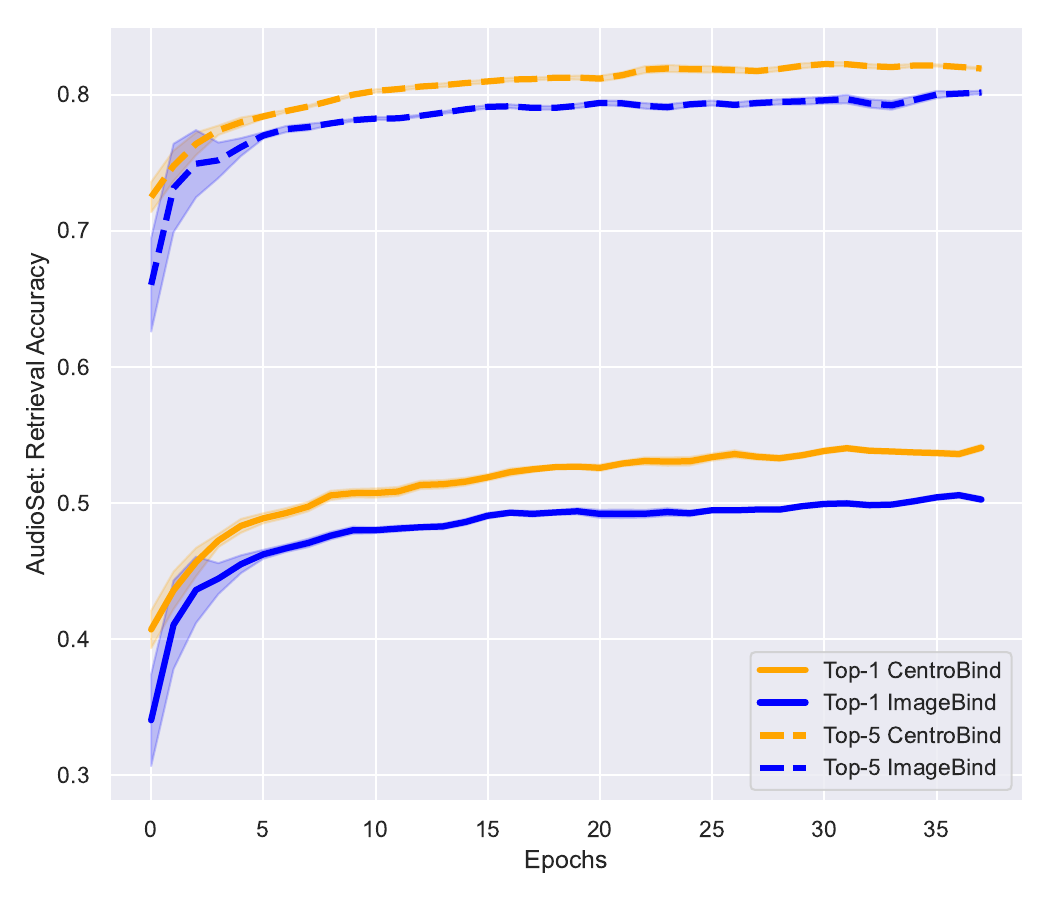}
        \caption{Image-audio pari: AudioSet~\citep{audioset}}
        \vspace{1em}
        \label{subfig:audioset}
    \end{subfigure}
    \hfill
    \begin{subfigure}[b]{0.45\linewidth}
        \centering
        \includegraphics[width=\textwidth]{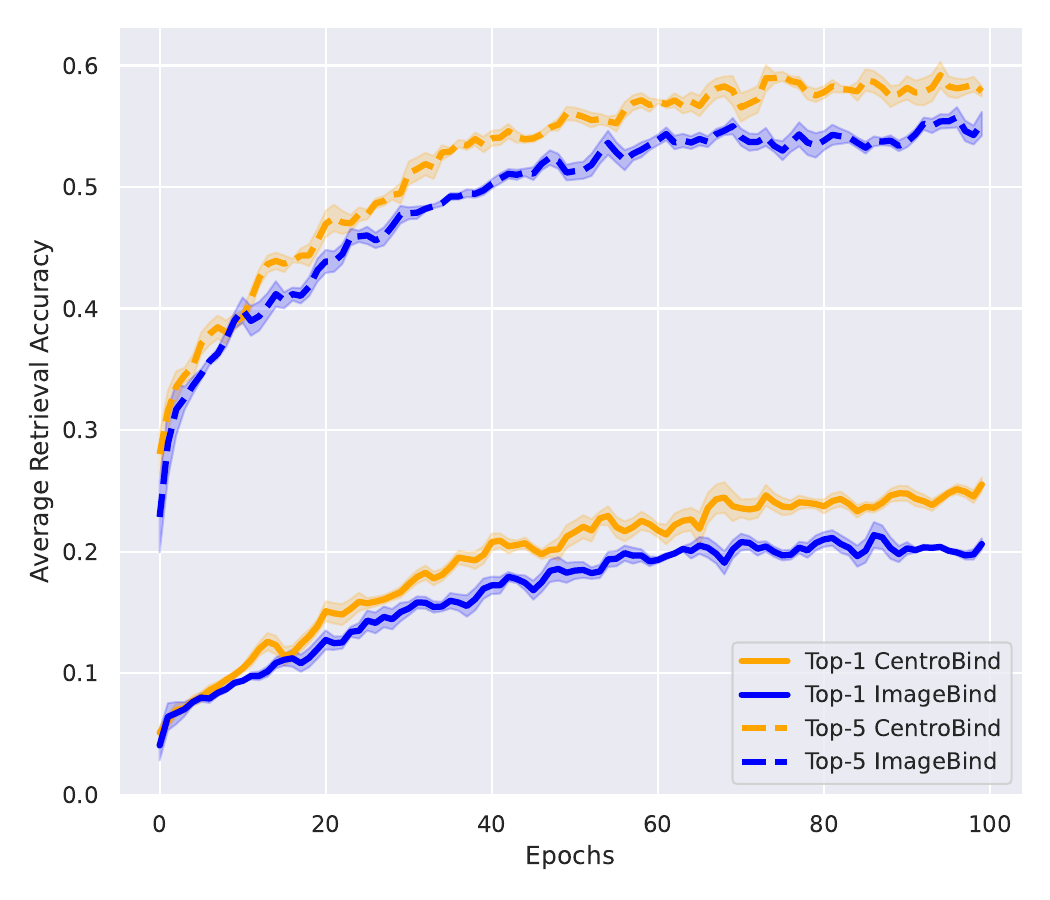}
        \caption{Image-audio pair: UR-FUNNY~\citep{urfunny}}
        \label{subfig:urfunny_va}
        \vspace{1em}
    \end{subfigure}
    \\
    \begin{subfigure}[b]{0.45\linewidth}
        \centering
        \includegraphics[width=\textwidth]{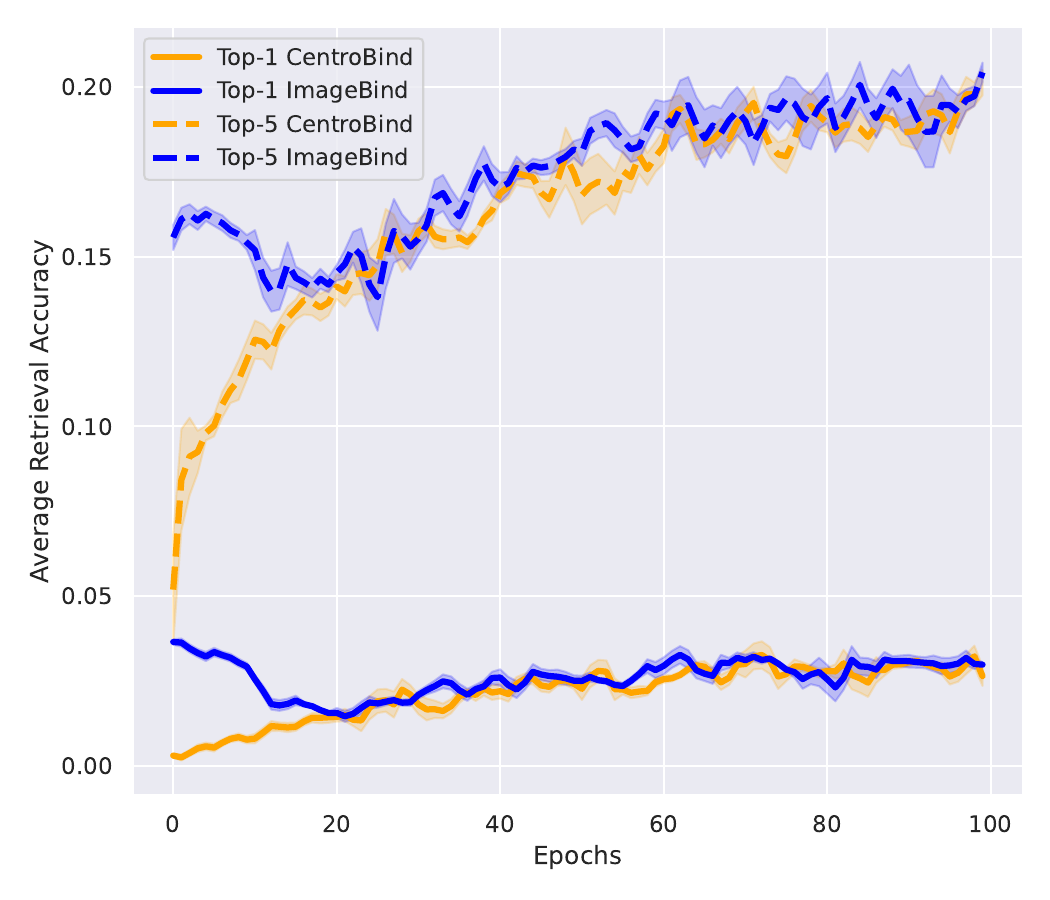}
        \caption{Image-text pair: UR-FUNNY~\citep{urfunny}}
        \vspace{1em}
        \label{subfig:urfunny_vt}
    \end{subfigure}
    \hfill
    \begin{subfigure}[b]{0.45\linewidth}
        \centering
        \includegraphics[width=\textwidth]{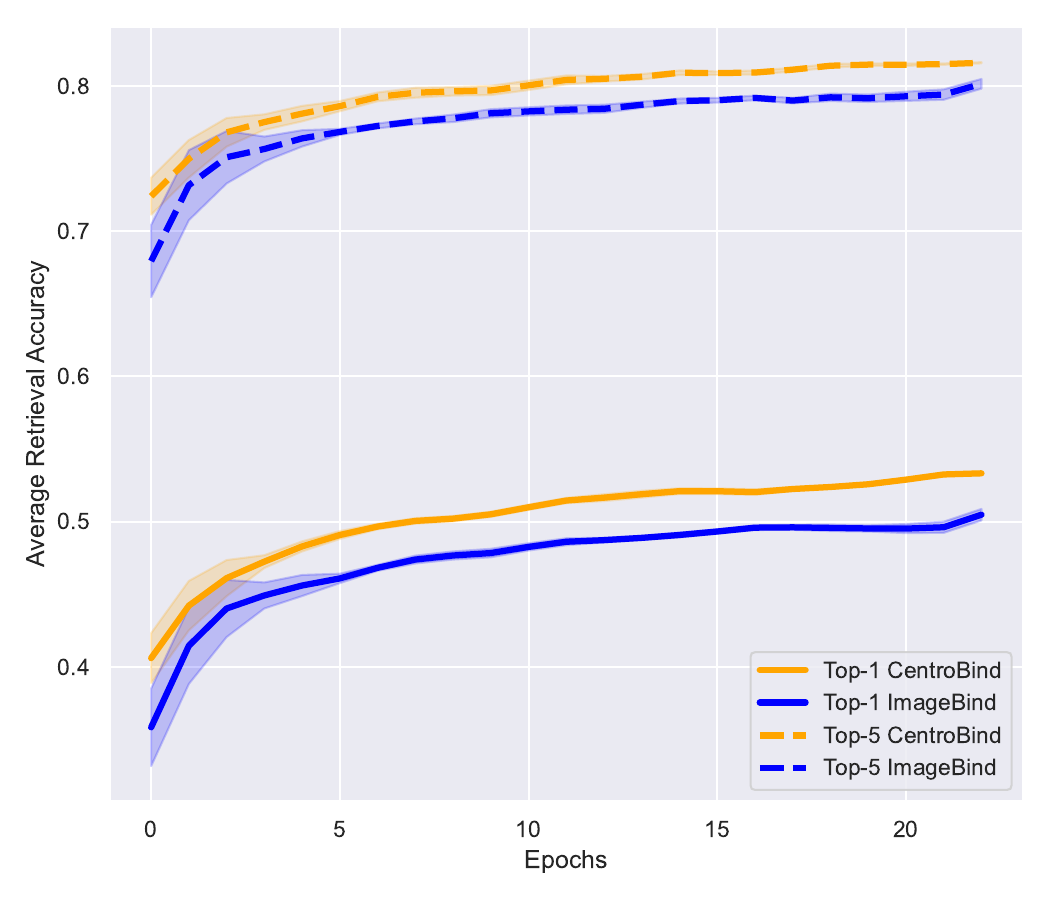}
        \caption{Image-audio-text pair: Dream+AVE+AudioSet}
        \label{subfig:synth_temp_0.3}
        \vspace{1em}
    \end{subfigure}
    \caption{We evaluate cross-modal retrieval using the pretrained ImageBind backbone, fine-tuned on each dataset with both ImageBind and CentroBind. Apart from the image–text pair in the UR-FUNNY dataset—where performance is identical—CentroBind surpasses ImageBind in every other setting. }
    \label{fig:cross_modal_ret}
\end{figure}

So far, the superiority of \modelname is verified for controllable synthetic and MUStARD datasets. Although such experiments and theoretical analysis show the effectiveness of \modelnamex, we further evaluate and compare the performance of \modelname and \faname to validate in additional scenarios. In particular, we consider the following cases:
\begin{enumerate}
    \item Bi-modal scenario $\rightarrow$ We compare performance in bi-modal datasets, DreamBooth~\citep{ruiz2023dreambooth}, AVE~\citep{tian2018audio}, AudioSet~\citep{audioset}, UR-FUNNY~\citep{urfunny}.
    
    \item When a large-scale powerful backbone is available $\rightarrow$ We compare performance given the pre-trained ImageBind backbone.
    
    \item When strong anchor modality is available $\rightarrow$ We compare performance with the image modality as anchor.
\end{enumerate}

We conduct fine-tuning the ImageBind and \modelname on DreamBooth~\citep{ruiz2023dreambooth}, AVE~\citep{tian2018audio}, AudioSet~\citep{audioset}, UR-FUNNY~\citep{urfunny} datasets. For datasets containing video modality instead of image, we extract the middle frame from the video and use the middle frame as image sample. Audio is also extracted from video if audio modality does not exist in original dataset. The evaluation is conducted through a retrieval task, effectively measuring the quality of the learned unified embedding space.

After fine-tuning, we measure the top-1 and top-5 retrieval accuracy on the test dataset. The optimization process uses the AdamW optimizer with the following hyperparameters: a batch size of 16, a learning rate of $5.0 \times 10^{-6}$, a weight decay of $10^{-4}$, and a temperature parameter set to $0.07$. We fine-tune the models until their validation accuracies converge to certain value. In this experiment, we omit the text augmentation. Instead, we create a centroid-based dynamic anchor using the embeddings of augmented images and original text.

More specifically, we begin by loading the pre-trained ImageBind model~\citep{girdhar2023imagebind} and fine-tune its encoders. For fine-tuning, we employ Low-Rank Adaptation (LoRA)~\citep{hu2022lora} with a rank of 4, resulting in 5.1-5.4 million trainable parameters (depending on chosen modalities) out of a total of 1.2 billion parameters. For the experiment on AudioSet, we fine-tune the models on the Balanced train subset of AudioSet, provided in official site. For datasets provided without train-test split, we randomly split the dataset with $0.8:0.2$ ratio before finetuning. 
The results in Figure~\ref{fig:cross_modal_ret} and Table~\ref{tab:cross_modal_ret} show that \modelname outperforms ImageBind even with a strong pretrained backbone and the image modality. Moreover, \modelname achieves additional gains in bimodal settings, suggesting it mitigates \faname’s inherent limitations across most scenarios. We anticipate that the performance gap would widen if the backbone were weaker, if the image modality were excluded, or if additional modalities were introduced. These findings corroborate our analysis of dynamic anchor binding, highlighting its effectiveness in enhancing multimodal representations.



\end{document}